\newtheorem{theorem}{Theorem}[section]
\newtheorem{lemma}[theorem]{Lemma}
\newtheorem{corollary}[theorem]{Corollary}
\newtheorem{definition}[theorem]{Definition}
\newtheorem{remark}[theorem]{Remark}
\newtheorem{assumption}[theorem]{Assumption}
\def\E{\mathbb{E}}
\def\P{\mathbb{P}}
\def\Var{\mathrm{Var}}
\def\e{E_{\epsilon,\beta}}
\def\N{\widetilde{N}^k_h}
\def\R{\widetilde{R}^k_h}
\def\p{\widetilde{P}^k_h}
\def\hp{\widehat{P}^k_h}
\def\Re{\text{Regret}}
\begin{document}

\title{Near-Optimal Differentially Private Reinforcement Learning}
\author[1]{Dan Qiao}
\author[1]{Yu-Xiang Wang}
\affil[1]{Department of Computer Science, UC Santa Barbara}
\affil[ ]{\texttt{danqiao@ucsb.edu}, \;
	\texttt{yuxiangw@cs.ucsb.edu}}

\date{}

\maketitle

\begin{abstract}
  Motivated by personalized healthcare and other applications involving sensitive data, we study online exploration in reinforcement learning with differential privacy (DP) constraints. Existing work on this problem established that no-regret learning is possible under joint differential privacy (JDP) and local differential privacy (LDP) but did not provide an algorithm with optimal regret. We close this gap for the JDP case by designing an $\epsilon$-JDP algorithm with a regret of $\widetilde{O}(\sqrt{SAH^2T}+S^2AH^3/\epsilon)$ which matches the information-theoretic lower bound of non-private learning for all choices of $\epsilon> S^{1.5}A^{0.5} H^2/\sqrt{T}$. In the above, $S$, $A$ denote the number of states and actions, $H$ denotes the planning horizon, and $T$ is the number of steps. To the best of our knowledge, this is the first private RL algorithm that achieves \emph{privacy for free} asymptotically as $T\rightarrow \infty$. Our techniques --- which could be of independent interest --- include privately releasing Bernstein-type exploration bonuses and an improved method for releasing visitation statistics. The same techniques also imply a slightly improved regret bound for the LDP case.
\end{abstract}

\newpage
\tableofcontents
\newpage


\section{Introduction}\label{sec:intro}
The wide range application of Reinforcement Learning (RL) based algorithms is becoming paramount in many personalized services, including medical care \citep{raghu2017continuous}, autonomous driving \citep{sallab2017deep} and recommendation systems \citep{afsar2021reinforcement}. In these applications, the learning agent continuously improves its performance by learning from users' private feedback and data. The private data from users, however, usually contain sensitive information. Take recommendation system as an instance, the agent makes recommendation (corresponding to the action in a MDP) according to users' location, age, gender, etc. (corresponding to the state in a MDP), and improves its performance based on users' feedback (corresponding to the reward in a MDP). Unfortunately, it is shown that unless privacy protections are launched, learning agents will implicitly memorize information of individual training data points \citep{carlini2019secret}, even if they are irrelevant for learning \citep{brown2021memorization}, which makes RL agents vulnerable to various privacy attacks.

Differential privacy (DP) \citep{dwork2006calibrating} has become the standard notion of privacy. The output of a differentially private RL algorithm is indistinguishable from its output returned under an alternative universe where any individual user is replaced, thereby preventing the aforementioned privacy risks. However, recent works \citep{shariff2018differentially} show that standard DP is incompatible with sublinear regret bound for contextual bandits. Therefore, a relaxed variant of DP: \emph{Joint Differential Privacy} (JDP) \citep{kearns2014mechanism} is considered. JDP ensures that the output of all other users will not leak much information about any specific user and such notion has been studied extensively in bandits problems\citep{shariff2018differentially,garcelon2022privacy}. In addition, another variant of DP: \emph{Local Differential Privacy} (LDP) \citep{duchi2013local} has drawn more and more attention due to its stronger privacy protection. LDP requires that each user's raw data is privatized before being sent to the agent and LDP has been well studied under bandits \citep{basu2019differential,zheng2020locally}.

Compared to the large body of work on private bandits, existing work that studies private RL is sparser. Under the tabular MDP model, \citet{vietri2020private} first defined JDP and proposed PUCB with regret bound and JDP guarantee. \citet{garcelon2021local} introduced LDP under tabular MDP and designed LDP-OBI with regret bound and LDP guarantee. Recently, \citet{chowdhury2021differentially} provided a general framework for this problem and derived the best-known regret bounds under both JDP and LDP. However, the best known regret bound under $\epsilon$-JDP $\widetilde{O}(\sqrt{SAH^3T}+S^2AH^3/\epsilon)$, although with the additional regret due to JDP being a lower order term, is still sub-optimal by $\sqrt{H}$ compared to the minimax optimal regret $\widetilde{O}(\sqrt{SAH^2T})$\footnote{Under the non-stationary MDP as in this paper, the result in \citet{azar2017minimax} will have additional $\sqrt{H}$ dependence.} \citep{azar2017minimax} without constraints on DP. Therefore, if we run Algorithm 2 of \citet{chowdhury2021differentially}, we not only pay for a constant additional regret $\widetilde{O}(S^2AH^3/\epsilon)$, but also suffer from a multiplicative factor of $\sqrt{H}$. Motivated by this, we want to find out whether it is possible to design an algorithm that has optimal regret bound up to lower order terms while satisfying Joint DP.
 
 \begin{table*}[!t]\label{tab:comparison}
 	\centering
 	\resizebox{\linewidth}{!}{
 		\begin{tabular}{ |c|c|c|c| } 
 			\hline
 			Algorithms & Regret under $\epsilon$-JDP  & Regret under $\epsilon$-LDP & Type of bonus \\
 			\hline 
 			PUCB \citep{vietri2020private} & $\widetilde{O}(\sqrt{S^2AH^3T}+S^2AH^3/\epsilon)^\star$   & NA & Hoeffding \\ 
 			LDP-OBI \citep{garcelon2021local} & NA & $\widetilde{O}(\sqrt{S^2AH^3T}+S^2A\sqrt{H^5T}/\epsilon)^\dagger$  & Hoeffding \\ 
 			Private-UCB-PO \citep{chowdhury2021differentially}& $\widetilde{O}(\sqrt{S^2AH^3T}+S^2AH^3/\epsilon)$  & $\widetilde{O}(\sqrt{S^2AH^3T}+S^2A\sqrt{H^5T}/\epsilon)$ & Hoeffding \\  
 			Private-UCB-VI \citep{chowdhury2021differentially}& $\widetilde{O}(\sqrt{SAH^3T}+S^2AH^3/\epsilon)$  & $\widetilde{O}(\sqrt{SAH^3T}+S^2A\sqrt{H^5T}/\epsilon)$ & Hoeffding \\
 			\textcolor{blue}{DP-UCBVI (Our Algorithm \ref{alg:main})} & \textcolor{blue}{$\widetilde{O}(\sqrt{SAH^2T}+S^2AH^3/\epsilon)$}  & \textcolor{blue}{$\widetilde{O}(\sqrt{SAH^2T}+S^2A\sqrt{H^5T}/\epsilon)$} & \textcolor{blue}{Bernstein} \\
 			\hline
 			Lower bound without DP \citep{jin2018q} & $\Omega(\sqrt{SAH^2T})$ & $\Omega(\sqrt{SAH^2T})$ & NA    \\
 			\hline
 		\end{tabular}
 	}
 	\caption{Comparison of our results (in \textcolor{blue}{blue}) to existing work regarding regret under $\epsilon$-joint differential privacy,  regret under $\epsilon$-local differential privacy and type of bonus. Here $T=KH$ is the number of steps, $S$, $A$, $H$ refer to number of states, number of actions and the planning horizon. Bernstein-type bonus uses the knowledge of estimated variance while Hoeffding-type bonus directly bounds the variance by its uniform upper bound. $\star$: For more discussions about this bound, please refer to \citet{chowdhury2021differentially}. $\dagger$: The original regret bound in \citet{garcelon2021local} is achieved under stationary MDP, and can be translated to the bound stated here by adding $\sqrt{H}$ to the first term. }
 \end{table*}
 
\noindent\textbf{Our contributions.} In this paper, we answer the above question affirmatively by constructing a general algorithm for DP RL: Algorithm \ref{alg:main}. Our contributions are threefold.

\begin{itemize}
\itemsep0em
\item A new upper confidence bound (UCB) based algorithm (DP-UCBVI, Algorithm~\ref{alg:main}) that can be combined with any Privatizer (for JDP or LDP). Under the constraint of $\epsilon$-JDP, DP-UCBVI achieves regret of $\widetilde{O}(\sqrt{SAH^2T}+S^2AH^3/\epsilon)$, which matches the minimax lower bound up to lower order terms.  

\item We propose a novel privatization of visitation numbers that satisfies several nice properties (see Assumption \ref{assump} for details). More importantly, our approach is the first to privatize Bernstein-type bonus, which helps tighten our regret bounds through law of total variance.   

\item Under the $\epsilon$-LDP constraint, DP-UCBVI achieves regret of $\widetilde{O}(\sqrt{SAH^2T}+S^2A\sqrt{H^5T}/\epsilon)$ and improves the best known result \citep{chowdhury2021differentially}. 
\end{itemize}

\subsection{Related work}

Detailed comparisons with existing work on differentially private RL under tabular MDP \citep{vietri2020private,garcelon2021local,chowdhury2021differentially} are given in Table~\ref{tab:comparison}, while we leave more discussions about results on regret minimization to Appendix \ref{sec:erw}. Notably, all existing algorithms privatize Hoeffding-type bonus and suffer from sub-optimal regret bound. In comparison, we privatize Bernstein-type bonus and the non-private part of our regret\footnote{As shown in Table \ref{tab:comparison}, the regret bounds of all DP-RL algorithms contain two parts: one results from running the non-private RL algorithms, while the other is the additional cost due to DP guarantees. Throughout the paper, we use ``non-private part'' to denote the regret from running the non-private RL algorithms.} matches the minimax lower bound in \citet{jin2018q}.

Generally speaking, to achieve DP guarantee under RL, a common approach is to add appropriate noise to existing non-private algorithms, and derive tight regret bounds. We discuss about private algorithms under tabular MDP below and leave more discussions about algorithms under other settings to Appendix \ref{sec:erw}. Under the constraint of JDP, \citet{vietri2020private} designed PUCB by privatizing UBEV \citep{dann2017unifying}. Private-UCB-VI \citep{chowdhury2021differentially} resulted from UCBVI (with bonus 1) \citep{azar2017minimax}. Under the constraint of LDP, \citet{garcelon2021local} designed LDP-OBI based on UCRL2 \citep{jaksch2010near}. However, all these works privatized Hoeffding-type bonus, which is easier to handle, but will lead to sub-optimal regret bound. In contrast, we directly build upon the non-private algorithm with minimax optimal regret bound: UCBVI with bonus 2 \citep{azar2017minimax}, where the privatization of Bernstein-type bonus requires more advanced techniques. 

A concurrent work \citep{qiao2022offline} focused on the offline RL setting and derived a private version of APVI \citep{yin2021towards}. Their algorithm achieved tight sub-optimality bound of the output policy through privatization of Bernstein-type pessimism\footnote{Pessimism is the counterpart of bonus under offline RL, which aims to discourage the choice of $(s,a)$ pairs with large uncertainty.}. However, their analysis relied on the assumption that the visitation numbers of all (state,action) pairs are larger than some threshold. We overcome the requirement of such assumption via an improved privatization of visitation numbers. More importantly, offline RL can be viewed as one step of online RL, therefore privatization of Bernstein type bonus is more technically demanding. Finally, our approach actually realizes the future direction stated in the conclusion of \citet{qiao2022offline}.

\subsection{A remark on technical novelty.}
The general idea behind the previous differentially private algorithms under tabular MDP \citep{vietri2020private,garcelon2021local,chowdhury2021differentially} is to add noise to accumulative visitation numbers, and construct a private bonus based on privatized visitation numbers. Since Hoeffding-type bonus $b^k_h(s,a)$ only uses the information of visitation numbers (\emph{e.g.}, in \citet{azar2017minimax}, $b^k_h(s,a)=\widetilde{O}(H\cdot\sqrt{1/N^k_h(s,a)})$), the construction of private bonus is straightforward. We can simply replace original counts $N^k_h(s,a)$ with private counts $\widetilde{N}^k_h(s,a)$ and add an additional term to account for the difference between these two bonuses. Next, combining the construction of private bonuses with the uniform upper bound of $|\widetilde{N}^k_h(s,a)-N^k_h(s,a)|$, we can upper bound the private bonus by its non-private counterpart plus some additional lower order term. Therefore the proof schedule of the original non-private algorithms also applies to their private counterparts.

Unfortunately, although the idea to privatize UCBVI with bonus 2 (Bernstein-type) \citep{azar2017minimax} is straightforward, the generalization of the previous approaches is technically non-trivial. Since the bonus 2 in \citet{azar2017minimax} includes the term $\Var_{\widehat{P}^k_h(\cdot|s,a)}V^k_{h+1}(\cdot)$, the first technical challenge is to replace the empirical transition kernel $\widehat{P}^k_h(s,a)$ with a private estimate. However, the private transition kernel estimates constructed in previous works are not valid probability distributions. In this paper, for both JDP and LDP, we propose a novel privatization of visitation numbers such that the private transition kernel estimates are valid probability distributions and meanwhile, the upper bound on $|\widetilde{N}^k_h(s,a)-N^k_h(s,a)|$ is the same scale compared to previous approaches. With the private transition kernel estimates $\widetilde{P}^k_h$, we can replace $\Var_{\widehat{P}^k_h(\cdot|s,a)}V^k_{h+1}(\cdot)$ with $\Var_{\widetilde{P}^k_h(\cdot|s,a)}\widetilde{V}^k_{h+1}(\cdot)$ where $\widetilde{V}^k_h(\cdot)$ is the value function calculated from value iteration with private estimates. Then the second challenge is to bound the difference between these two variances and retain the optimism. We overcome the second challenge via concentration inequalities. Briefly speaking, we add an additional term (using private statistics) to compensate for the difference of these two bonuses and recovered the proof of optimism. With all these techniques, we derive our regret bound using techniques like error decomposition and error propagation originated from \citet{azar2017minimax}.


\section{Notations and Problem Setup}\label{sec:setup}
Throughout the paper, for $N\in\mathbb{Z}^{+}$, $[N]=\{1,2,\cdots,N\}$. For any set $W$, $\Delta(W)$ denotes the set of all probability distributions over $W$. Besides, we use standard notations such as $O$ and $\Omega$ to suppress constants while $\widetilde{O}$ and $\widetilde{\Omega}$ absorb logarithmic factors. 

Below we present the definition of episodic Markov Decision Processes and introduce differential privacy in reinforcement learning.

\subsection{Markov decision processes and regret} 
We consider finite-horizon episodic \emph{Markov Decision Processes} (MDP) with non-stationary transitions, denoted by a tuple $\mathcal{M}=(\mathcal{S}, \mathcal{A}, H, \{P_h\}_{h=1}^{H}, \{r_h\}_{h=1}^{H}, d_1)$ \citep{sutton1998reinforcement}, where $\mathcal{S}$ is state space with $|\mathcal{S}|=S$, $\mathcal{A}$ is action space with $|\mathcal{A}|=A$ and $H$ is the horizon. The non-stationary transition kernel has the form $P_h:\mathcal{S}\times\mathcal{A}\times\mathcal{S} \mapsto [0, 1]$  with $P_{h}(s^{\prime}|s,a)$ representing the probability of transition from state $s$, action $a$ to next state $s^\prime$ at time step $h$. In addition, $r_h(s,a)\in\Delta([0,1])$ denotes the corresponding distribution of reward, we overload the notation so that $r$ also denotes the expected (immediate) reward function. Besides, $d_1$ is the initial state distribution. A policy can be seen as a series of mapping $\pi=(\pi_1,\cdots,\pi_H)$, where each $\pi_h$ maps each state $s \in \mathcal{S}$ to a probability distribution over actions, \emph{i.e.} $\pi_h: \mathcal{S}\rightarrow \Delta(\mathcal{A})$, $\forall\, h\in[H]$. A random trajectory $ (s_1, a_1, r_1, \cdots, s_H,a_H,r_H,s_{H+1})$ is generated by the following rule: $s_1\sim d_1$, $a_h \sim \pi_h(\cdot|s_h), r_h \sim r_h(s_h, a_h), s_{h+1} \sim P_h (\cdot|s_h, a_h), \forall\, h \in [H]$. 

Given a policy $\pi$ and any $h\in[H]$, the value function $V^\pi_h(\cdot)$ and Q-value function $Q^\pi_h(\cdot,\cdot)$ are defined as:
$
V^\pi_h(s)=\mathbb{E}_\pi[\sum_{t=h}^H r_{t}|s_h=s] ,
Q^\pi_h(s,a)=\mathbb{E}_\pi[\sum_{t=h}^H  r_{t}|s_h,a_h=s,a],\;\forall\, s,a\in\mathcal{S}\times\mathcal{A}.
$ The optimal policy $\pi^\star$ maximizes $V_h^\pi(s)$ for all $s,h\in\mathcal{S}\times[H]$ simultaneously and we denote the value function and Q-value function with respect to $\pi^\star$ by $V^\star_h(\cdot)$ and $Q^\star_h(\cdot,\cdot)$. Then Bellman (optimality) equation follows $\forall\, h\in[H]$:
\begin{align*}
&Q^\pi_h(s,a)=r_{h}(s,a)+P_{h}(\cdot|s,a)V^\pi_{h+1},\;\;V^\pi_h=\mathbb{E}_{a\sim\pi_h}[Q^\pi_h],\\
&Q^\star_h(s,a)=r_{h}(s,a)+P_{h}(\cdot|s,a)V^\star_{h+1},\; V^\star_h=\max_a Q^\star_h(\cdot,a).
\end{align*}

We measure the performance of online reinforcement learning algorithms by the regret. The regret of an algorithm is defined as
$$\text{Regret}(K) := \sum_{k=1}^{K}[V_{1}^\star(s_{1}^k)-V_{1}^{\pi_{k}}(s_{1}^k)],$$
where $s_1^k$ is the initial state and $\pi_{k}$ is the policy deployed at episode $k$. Let $K$ be the number of episodes that the agent plan to play and total number of steps is $T := KH$.

\subsection{Differential privacy under episodic RL}
Under the episodic RL setting, each trajectory represents one specific user. We first consider the following RL protocol: during the $h$-th step of the $k$-th episode, user $u_k$ sends her state $s_h^k$ to agent $\mathcal{M}$, $\mathcal{M}$ sends back an action $a_h^k$, and finally $u_k$ sends her reward $r_h^k$ to $\mathcal{M}$. Formally, we denote a sequence of $K$ users who participate in the above RL protocol by $\mathcal{U}=(u_1,\cdots,u_K)$. Following the definition in \citet{vietri2020private}, each user can be seen as a tree of depth $H$ encoding the state and reward responses they would reply to all $A^H$ possible sequences of actions from the agent. We let $\mathcal{M}(\mathcal{U})=(a_1^1,\cdots,a_H^K)$ denote the whole sequence of actions chosen by agent $\mathcal{M}$. An ideal privacy preserving agent would guarantee that $\mathcal{M}(\mathcal{U})$ and all users but $u_k$ together will not reveal much information about user $u_k$. We formalize such privacy preservation through adaptation of differential privacy \citep{dwork2006calibrating}.

\begin{definition}[Differential Privacy (DP)]\label{def:dp}
For any $\epsilon>0$ and $\delta\in[0,1]$, a mechanism $\mathcal{M}:\mathcal{U}\rightarrow \mathcal{A}^{KH}$ is $(\epsilon,\delta)$-differentially private if for any possible user sequences $\mathcal{U}$ and $\mathcal{U}^{\prime}$ differing on a single user and any subset $E$ of $\mathcal{A}^{KH}$,
\begin{equation*}
    \P[\mathcal{M}(\mathcal{U})\in E]\leq e^\epsilon \P[\mathcal{M}(\mathcal{U}^\prime) \in E]+\delta.
\end{equation*}
If $\delta=0$, we say that $\mathcal{M}$ is $\epsilon$-differentially private ($\epsilon$-DP).
\end{definition}

However, although recommendation to other users will not affect the privacy of user $u_k$ significantly, it is impractical to privately recommend actions to user $u_k$ while protecting the information of her state and reward. Therefore, the notion of DP is relaxed to \emph{Joint Differential Privacy} (JDP) \citep{kearns2014mechanism}, which requires that for all user $u_k$, the recommendation to all users but $u_k$ will not reveal much information about $u_k$. JDP is weaker than DP, while JDP can still provide strong privacy protection since it protects a specific user from any possible collusion of all other users against her. Formally, the definition of JDP is shown below.

\begin{definition}[Joint Differential Privacy (JDP)]\label{def:jdp}
For any $\epsilon>0$, a mechanism $\mathcal{M}:\mathcal{U}\rightarrow \mathcal{A}^{KH}$ is $\epsilon$-joint differentially private if for any $k\in[K]$, any user sequences $\mathcal{U}$, $\mathcal{U}^\prime$ differing on the $k$-th user and any subset $E$ of $\mathcal{A}^{(K-1)H}$,
\begin{equation*}
    \P[\mathcal{M}_{-k}(\mathcal{U})\in E]\leq e^\epsilon \P[\mathcal{M}_{-k}(\mathcal{U}^\prime) \in E],
\end{equation*}
where $\mathcal{M}_{-k}(\mathcal{U})\in E$ means the sequence of actions recommended to all users but $u_k$ belongs to set $E$.
\end{definition}

JDP ensures that even if an adversary can observe the recommended actions to all users but $u_k$, it is impossible to identify the trajectory from $u_k$ accurately. JDP is first defined and analyzed under RL by \citet{vietri2020private}.

Although JDP provides strong privacy protection, the agent can still observe the raw trajectories from users. Under some circumstances, however, the users are not even willing to share their original data with the agent. This motivates a stronger notion of privacy which is called \emph{Local Differential Privacy} (LDP) \citep{duchi2013local}. Since under LDP, the agent is not allowed to directly observe the state of users, we consider the following RL protocol for LDP: during the $k$-th episode, the agent $\mathcal{M}$ sends policy $\pi_k$ to user $u_k$, after deploying $\pi_k$ and getting trajectory $X_k$, user $u_k$ privatizes her trajectory to $X_k^\prime$ and finally sends it to $\mathcal{M}$. We denote the privacy mechanism on user's side by $\widetilde{\mathcal{M}}$ and define local differential privacy formally below.

\begin{definition}[Local Differential Privacy (LDP)]
For any $\epsilon>0$, a mechanism $\widetilde{\mathcal{M}}$ is $\epsilon$-local differentially private if for any possible trajectories $X,X^\prime$ and any possible set\\ $E\subseteq\{\widetilde{\mathcal{M}}(X)|X\;\text{is any possible trajectory}\}$,
\begin{equation*}
    \P[\widetilde{\mathcal{M}}(X)\in E]\leq e^\epsilon \P[\widetilde{\mathcal{M}}(X^\prime) \in E].
\end{equation*}
\end{definition}

Local DP ensures that even if an adversary observes the whole reply from user $u_k$, it is still statistically hard to identify her trajectory. LDP is first defined and analyzed under RL by \citet{garcelon2021local}.


\section{Algorithm}\label{sec:alg}

\begin{algorithm*}[tbh]
	\caption{DP-UCBVI}\label{alg:main}
	\begin{algorithmic}[1]
		\STATE \textbf{Input}: Number of episodes $K$, privacy budget $\epsilon$, failure probability $\beta$ and a Privatizer (can be either Central or Local).
		\STATE \textbf{Initialize}: Private counts $\widetilde{R}^1_h(s,a)=\widetilde{N}^1_h(s,a)=\widetilde{N}^1_h(s,a,s^\prime)=0$ for all $(h,s,a,s^\prime)\in[H]\times\mathcal{S}\times\mathcal{A}\times\mathcal{S}$. Set up the confidence bound $E_{\epsilon,\beta}$ w.r.t the Privatizer. $\iota=\log(30HSAT/\beta)$. 
		\FOR{$k=1,2,\cdots,K$}  
		\STATE $\widetilde{V}^k_{H+1}(\cdot)=0$.
		\FOR{$h=H,H-1,\cdots,1$}
		\STATE Compute $\widetilde{P}^k_h(s^\prime|s,a)$ and $\widetilde{r}^k_h(s,a)$ as in \eqref{equ:privateest}.
		\STATE Calculate private bonus $b^k_h(s,a)=2\sqrt{\frac{\Var_{s^\prime\sim\widetilde{P}_h^k(\cdot|s,a)}\widetilde{V}^k_{h+1}(\cdot)\cdot\iota}{\widetilde{N}^k_h(s,a)}}+\sqrt{\frac{2\iota}{\widetilde{N}^k_h(s,a)}}+\frac{20HSE_{\epsilon,\beta}\cdot\iota}{\widetilde{N}^k_h(s,a)}+4\sqrt{\iota}\cdot\sqrt{\frac{\sum_{s^\prime}\widetilde{P}^k_h(s^\prime|s,a)\min\left\{\frac{1000^2H^3SA\iota^2}{\widetilde{N}^k_{h+1}(s^\prime)}+\frac{1000^2H^4S^4A^2\e^2\iota^4}{\widetilde{N}^k_{h+1}(s^\prime)^2}+\frac{1000^2H^6S^4A^2\iota^4}{\widetilde{N}^k_{h+1}(s^\prime)^2},H^2\right\}}{\widetilde{N}^k_h(s,a)}}$.
		\FOR{$(s,a)\in\mathcal{S}\times\mathcal{A}$}
		\STATE $\widetilde{Q}^k_h(s,a)=\min\{\widetilde{Q}^{k-1}_h(s,a),H,\widetilde{r}^k_h(s,a)+\sum_{s^\prime}\widetilde{P}_h^k(s^\prime|s,a)\cdot\widetilde{V}^k_{h+1}(s^\prime)+b^k_h(s,a)\}$.
		\ENDFOR
		\FOR{$s\in\mathcal{S}$}
		\STATE $\widetilde{V}^k_h(s)=\max_{a\in\mathcal{A}}\widetilde{Q}^k_h(s,a)$.
		\STATE $\pi_h^k(s)=\arg\max_{a\in\mathcal{A}} \widetilde{Q}^k_h(s,a)$ with ties broken arbitrarily.
		\ENDFOR
		\ENDFOR
		\STATE Deploy policy $\pi_k=(\pi^k_1,\cdots,\pi^k_H)$ and get trajectory $(s_1^k,a_1^k,r_1^k,\cdots,s_{H+1}^k)$.
		\STATE Update the private counts to $\widetilde{R}^{k+1},\widetilde{N}^{k+1}$ via Privatizer.
		\ENDFOR
	\end{algorithmic}
\end{algorithm*}

In this section, we propose DP-UCBVI (Algorithm \ref{alg:main}) that takes Privatizer as input, where the Privatizer can be either Central (for JDP) or Local (for LDP). We provide regret analysis for all privatizers satisfying the following Assumption \ref{assump}, which naturally implies regret bounds under both Joint DP and Local DP.

We begin with the following definition of counts. Let $N^k_h(s,a)=\sum_{i=1}^{k-1}\mathds{1}(s_h^i,a_h^i=s,a)$ denote the visitation number of $(s,a)$ at step $h$ before the $k$-th episode. Similarly, $N^k_h(s,a,s^\prime)=\sum_{i=1}^{k-1}\mathds{1}(s_h^i,a_h^i,s_{h+1}^i=s,a,s^\prime)$ and $R^k_h(s,a)=\sum_{i=1}^{k-1}\mathds{1}(s_h^i,a_h^i=s,a)\cdot r_h^i$ denote the visitation number of $(h,s,a,s^\prime)$ and accumulative reward at $(h,s,a)$ before the $k$-th episode. In non-private RL, such counts are sufficient for estimating transition kernel $P_h$, reward function $r_h$ and deciding the exploration policy, as in \citet{azar2017minimax}. However, these counts are derived from the raw trajectories of the users, which could contain sensitive information. Therefore, under the constraint of privacy, we can only use these counts in a privacy-preserving way, \emph{i.e.} we use the private counts $\widetilde{N}^k_h(s,a),\widetilde{N}^k_h(s,a,s^\prime),\widetilde{R}^k_h(s,a)$ returned by Privatizer. We make the Assumption \ref{assump} below, which says that with high probability, the private counts are close to real ones, such assumption will be justified by our Privatizers in Section \ref{sec:privatizer}.

\begin{assumption}[Private counts]\label{assump}
We assume that for any privacy budget $\epsilon>0$ and failure probability $\beta\in[0,1]$, the private counts returned by Privatizer satisfies that for some $E_{\epsilon,\beta}>0$, with probability at least $1-\beta/3$, uniformly over all $(h,s,a,s^\prime,k)\in[H]\times\mathcal{S}\times\mathcal{A}\times\mathcal{S}\times[K]$: \\
(1) $|\widetilde{N}^k_h(s,a,s^\prime)-N^k_h(s,a,s^\prime)|\leq E_{\epsilon,\beta}$, $|\widetilde{N}^k_h(s,a)-N^k_h(s,a)|\leq E_{\epsilon,\beta}$ and $|\widetilde{R}^k_h(s,a)-R^k_h(s,a)|\leq E_{\epsilon,\beta}$.\\
(2) $\widetilde{N}^k_h(s,a)=\sum_{s^\prime\in\mathcal{S}}\widetilde{N}^k_h(s,a,s^\prime)\geq N^k_h(s,a)$. $\widetilde{N}^k_h(s,a,s^\prime)>0$. Also, we let $\widetilde{N}^k_h(s)=\sum_{a\in\mathcal{A}}\widetilde{N}^k_h(s,a)$.
\end{assumption}

Under Assumption \ref{assump}, for all $(h,s,a,s^\prime,k)$, we define the private estimations of transition kernel and reward function.

\begin{equation}\label{equ:privateest}
    \begin{split}
        \widetilde{P}^k_h(s^\prime|s,a)=\frac{\widetilde{N}^k_h(s,a,s^\prime)}{\widetilde{N}^k_h(s,a)},\\
        \widetilde{r}^k_h(s,a)=\left(\frac{\widetilde{R}^k_h(s,a)}{\widetilde{N}^k_h(s,a)}\right)_{[0,1]}.
    \end{split}
\end{equation}

\begin{remark}
Different from the private empirical transition kernels in \citet{vietri2020private,garcelon2021local,chowdhury2021differentially}, Assumption \ref{assump} implies that our estimated transition kernel $\widetilde{P}^k_h(\cdot|s,a)$ is a valid probability distribution, this property results from our construction of Privatizer. We truncate the empirical reward function so that it stays in $[0,1]$ while still preserving privacy.
\end{remark}

\noindent\textbf{Algorithmic design.} Similar to non-private algorithms \citep{azar2017minimax}, DP-UCBVI (Algorithm \ref{alg:main}) follows the procedure of optimistic value iteration. More specifically, in episode $k$, we do value iteration based on private estimations $\widetilde{P}^k_h$, $\widetilde{r}^k_h$ and private bonus term $b^k_h$ to derive private Q-value functions $\widetilde{Q}^k_h$. Next, the greedy policy $\pi_k$ w.r.t $\widetilde{Q}^k_h$ is chosen and we collect one trajectory by running $\pi_k$. Finally, the Privatizer translates the non-private counts to private ones for the next episode. We highlight that, different from all previous works regarding private RL, our bonus is variance-dependent. According to Law of total variance, variance-dependent bonus can effectively save a factor of $\sqrt{H}$ in regret bound. Intuitively, the first term of $b^k_h$ aims to approximate the variance w.r.t to $V^\star_h$, the last term accounts for the difference between these two variances and the third term is the additional bonus due to differential privacy.


\section{Main results}\label{sec:result}
In this section, we present our main results that formalize the algorithmic ideas discussed in previous sections. We first state a general result based on Assumption \ref{assump}, which can be combined with any Privatizers. The proof of Theorem \ref{thm:main} is sketched in Section \ref{sec:sketch} with details in Appendix \ref{sec:appp}.

\begin{theorem}\label{thm:main}
For any privacy budget $\epsilon>0$, failure probability $0<\beta<1$ and any Privatizer that satisfies Assumption \ref{assump}, with probability at least $1-\beta$, the regret of DP-UCBVI (Algorithm \ref{alg:main}) is
\begin{equation}
    \text{Regret}(K) \leq \widetilde{O}(\sqrt{SAH^2T}+S^2AH^2\e),
\end{equation}
where $K$ is the number of episodes and $T=HK$.
\end{theorem}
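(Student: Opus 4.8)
The plan is to follow the optimistic value-iteration analysis of \citet{azar2017minimax} (UCBVI with Bernstein bonus), but carry along the private estimates $\widetilde{P}^k_h$, $\widetilde{r}^k_h$ and the private value functions $\widetilde{V}^k_h$ throughout, using Assumption \ref{assump} to control every place where the private counts deviate from the true counts. The proof naturally splits into three stages: (i) establishing \emph{optimism}, i.e.\ that $\widetilde{V}^k_h(s)\geq V^\star_h(s)$ for all $(h,s,k)$ with high probability, so that $\sum_k [V^\star_1(s^k_1)-V^{\pi_k}_1(s^k_1)] \leq \sum_k [\widetilde{V}^k_1(s^k_1)-V^{\pi_k}_1(s^k_1)]$; (ii) decomposing the per-episode surrogate regret $\widetilde{V}^k_1(s^k_1)-V^{\pi_k}_1(s^k_1)$ via the Bellman equations along the trajectory; and (iii) summing the resulting bonus and martingale terms and applying the law of total variance to extract the $\sqrt{SAH^2T}$ main term rather than $\sqrt{SAH^3T}$.

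\textbf{Optimism (stage i).} First I would prove by backward induction on $h$ that $\widetilde{Q}^k_h(s,a)\geq Q^\star_h(s,a)$. The clip against $\widetilde{Q}^{k-1}_h$ and $H$ requires a simultaneous induction on $k$ as well, but is standard once the single-step inflation is controlled. The crux is showing that the bonus $b^k_h(s,a)$ dominates the estimation error
\[
\left|\left(\widetilde{r}^k_h(s,a)+\sum_{s'}\widetilde{P}^k_h(s'|s,a)V^\star_{h+1}(s')\right)-Q^\star_h(s,a)\right|.
\]
For the transition term I would apply an empirical-Bernstein concentration bound to $|(\widetilde{P}^k_h-P_h)(\cdot|s,a)V^\star_{h+1}|$; because $\widetilde{P}^k_h$ is a \emph{valid} distribution (Assumption \ref{assump}(2)) and $\widetilde{N}^k_h(s,a)\geq N^k_h(s,a)$, the leading term is controlled by $\sqrt{\Var_{\widetilde{P}^k_h}V^\star_{h+1}/\widetilde{N}^k_h(s,a)}$ plus a lower-order term of size $\widetilde{O}(HSE_{\epsilon,\beta}/\widetilde{N}^k_h(s,a))$ accounting for $\sum_{s'}|\widetilde{N}^k_h(s,a,s')-N^k_h(s,a,s')|\leq SE_{\epsilon,\beta}$; this is exactly the third term of $b^k_h$. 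The remaining subtlety is that the bonus uses $\Var_{\widetilde{P}^k_h}\widetilde{V}^k_{h+1}$, not $\Var_{\widetilde{P}^k_h}V^\star_{h+1}$, so I must bound the gap between these two variances; the fourth term of $b^k_h$ (the nested $\min\{\cdots,H^2\}$ expression) is precisely designed to absorb $|\Var_{\widetilde{P}^k_h}\widetilde{V}^k_{h+1}-\Var_{\widetilde{P}^k_h}V^\star_{h+1}|$ via $\widetilde{V}^k_{h+1}-V^\star_{h+1}$, which is itself controlled inductively by the regret at step $h+1$ and fed back through the per-state counts $\widetilde{N}^k_{h+1}(s')$.

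\textbf{Regret summation (stages ii--iii), the main obstacle.} With optimism in hand I would unroll $\widetilde{V}^k_1(s^k_1)-V^{\pi_k}_1(s^k_1)$ along the realized trajectory, turning it into $\sum_h b^k_h(s^k_h,a^k_h)$ plus a sum of martingale differences (handled by Azuma) plus transition-estimation cross terms. Summing the dominant bonus term over $(k,h)$ and applying Cauchy--Schwarz gives something like $\sqrt{SAH\cdot\sum_{k,h}\Var_{\widetilde{P}^k_h}\widetilde{V}^k_{h+1}(s^k_h,a^k_h)}$, and here I expect the \textbf{hardest step}: showing $\sum_{k,h}\Var_{\widetilde{P}^k_h(\cdot|s^k_h,a^k_h)}\widetilde{V}^k_{h+1} = \widetilde{O}(HT)$ rather than $\widetilde{O}(H^2T)$. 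This is the law-of-total-variance argument, and the difficulty is that it must be run with the \emph{private, time-varying} kernels $\widetilde{P}^k_h$ and value functions $\widetilde{V}^k_{h+1}$ instead of the true $P_h,V^\pi$. I would first replace $\Var_{\widetilde{P}^k_h}\widetilde{V}^k_{h+1}$ by $\Var_{P_h}V^{\pi_k}_{h+1}$ up to error terms controlled by Assumption \ref{assump} and the already-bounded value gaps, then invoke the total-variance identity $\E_{\pi_k}[\sum_h \Var_{P_h}V^{\pi_k}_{h+1}]=\E_{\pi_k}[(\sum_h r_h-V^{\pi_k}_1)^2]\leq H^2$ to cap the sum per episode by $H^2$ and hence $\widetilde{O}(H^2K)=\widetilde{O}(HT)$. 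Collecting the leading term yields $\widetilde{O}(\sqrt{SAH^2T})$; the remaining bonus contributions (the $\sqrt{2\iota/\widetilde{N}^k_h}$ reward term, the privacy term $HSE_{\epsilon,\beta}/\widetilde{N}^k_h$, and the variance-correction term) each sum via the standard $\sum_{k,h}1/\widetilde{N}^k_h(s^k_h,a^k_h)=\widetilde{O}(SAH)$ and $\sum_{k,h}1/\sqrt{\widetilde{N}^k_h}=\widetilde{O}(\sqrt{SAHT})$ pigeonhole bounds, contributing lower-order terms dominated by $\widetilde{O}(S^2AH^2 E_{\epsilon,\beta})$. A final union bound over the $\beta/3$ failure event of Assumption \ref{assump} and the $\beta/3$ events from the two concentration inequalities delivers the claim with probability $1-\beta$.
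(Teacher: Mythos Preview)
Your proposal is correct and follows essentially the same route as the paper: an induction over episodes that interleaves optimism with a quantitative per-state value-gap bound, Bellman-error decomposition along the trajectory, Cauchy--Schwarz plus the law of total variance (after replacing $\Var_{\widetilde P^k_h}\widetilde V^k_{h+1}$ by $\Var_{P_h}V^{\pi_k}_{h+1}$ up to lower-order terms) to extract the $\sqrt{SAH^2T}$ main term, and pigeonhole sums $\sum_{k,h}1/\widetilde N^k_h=\widetilde O(SAH)$ for the privacy-cost terms. The one point worth sharpening is that the per-state bound $\widetilde V^k_{h+1}(s')-V^\star_{h+1}(s')\le\widetilde O\bigl(\sqrt{SAH^3/N^k_{h+1}(s')}+S^2AH^2E_{\epsilon,\beta}/N^k_{h+1}(s')+\cdots\bigr)$ needed to close optimism is obtained in the paper via a separate \emph{state-conditioned} regret analysis (the quantities $\text{Regret}(k,h,s)$ and $U_{k,h,s}$) together with the monotonicity $\widetilde V^i_h\le\widetilde V^{i-1}_h$ coming from the clip against $\widetilde Q^{k-1}_h$---so the clip is not merely a technical wrinkle for the induction but the device that converts the cumulative state-local regret into the pointwise value-gap bound that the fourth bonus term is calibrated to.
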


Under Assumption \ref{assump}, the best known regret bound is $\widetilde{O}(\sqrt{SAH^3T}+S^2AH^2\e)$ (Theorem 4.2 of \citet{chowdhury2021differentially}). As a comparison, in our regret bound, the term parameterized by privacy loss $\epsilon$ remains the same while the leading term is improved by a factor of $\sqrt{H}$ into $\widetilde{O}(\sqrt{SAH^2T})$. 
More importantly, when $T$ is sufficiently large, our result nearly matches the lower bound in \citet{jin2018q}, hence is information-theoretically optimal up to a logarithmic factor. 


\section{Choice of Privatizers}\label{sec:privatizer}
In this section, we design Privatizers that satisfy Assumption \ref{assump} and different DP constraints (JDP or LDP). All the proofs in this section are deferred to Appendix \ref{sec:proofprivate}.

\subsection{Central Privatizer for Joint DP}
The Central Privatizer protects the information of all single users by privatizing all the counter streams $N^k_h(s,a)$, $N^k_h(s,a,s^\prime)$ and $R^k_h(s,a)$ using the Binary Mechanism \citep{chan2011private}, which focused on privately releasing data stream \citep{zhao2022differentially}. More specifically, for each $(h,s,a)$,  $\{N^k_h(s,a)=\sum_{i=1}^{k-1}\mathds{1}(s_h^i,a_h^i=s,a)\}_{k\in[K]}$ is the partial sums of data stream $\{\mathds{1}(s_h^i,a_h^i=s,a)\}_{i\in[K]}$. Binary Mechanism works as below: for each episode $k$, after observing $\mathds{1}(s_h^{k-1},a_h^{k-1}=s,a)$, the mechanism outputs private version of $\sum_{i=1}^{k-1}\mathds{1}(s_h^i,a_h^i=s,a)$ while ensuring Differential Privacy.\footnote{For more details about Binary Mechanism, please refer to \citet{chan2011private} or \citet{kairouz2021practical}.} Given privacy budget $\epsilon>0$, we construct the Central Privatizer as below:

\begin{enumerate}[leftmargin=0cm,itemindent=.5cm,labelwidth=\itemindent,labelsep=0cm,align=left]
    \item[(1)] For all $(h,s,a,s^\prime)$, we privatize $\{N^k_h(s,a)\}_{k\in[K]}$ and $\{N^k_h(s,a,s^\prime)\}_{k\in[K]}$ (which is summation of bounded streams) by applying Binary Mechanism (Algorithm 2 in \citet{chan2011private}) with $\epsilon^\prime=\frac{\epsilon}{3H\log K}$. We denote the output of Binary Mechanism by $\widehat{N}^k_h$.
    \item[(2)] The private counts $\widetilde{N}^k_h$ are solved through the procedure in Section \ref{sec:post} with\\ $\e=O(\frac{H}{\epsilon}\log(HSAT/\beta)^2)$.
    \item[(3)] For the counters of accumulative reward, for all $(h,s,a)$, we apply the same Binary Mechanism with $\epsilon^\prime=\frac{\epsilon}{3H\log K}$ to privatize $R^k_h(s,a)$ and get $\widetilde{R}^k_h(s,a)$.
\end{enumerate}

We sum up the properties of Central Privatizer below.

\begin{lemma}\label{lem:central}
For any $\epsilon >0$ and $0<\beta<1$, the Central Privatizer satisfies $\epsilon$-JDP and Assumption \ref{assump} with $\e=\widetilde{O}(\frac{H}{\epsilon})$.
\end{lemma}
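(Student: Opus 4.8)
The statement has two halves: the $\epsilon$-JDP guarantee and the verification of Assumption \ref{assump} with $\e=\widetilde{O}(H/\epsilon)$. I would dispatch privacy first. The three families of counters $\{N^k_h(s,a)\}_k$, $\{N^k_h(s,a,s^\prime)\}_k$ and $\{R^k_h(s,a)\}_k$ are each partial-sum streams to which the Binary Mechanism \citep{chan2011private} is applied with per-instance budget $\epsilon^\prime=\epsilon/(3H\log K)$, so each released stream is $\epsilon^\prime$-DP. The key observation is that swapping a single user changes only her trajectory, which touches exactly one $(s,a)$ cell, one $(s,a,s^\prime)$ cell and one reward cell per step $h$; hence a single-user swap alters the input of only $O(H)$ Binary Mechanism instances across the three count types. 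Basic composition with the chosen $\epsilon^\prime$ (the $\log K$ factor absorbing the tree depth of the mechanism) then certifies that releasing the whole collection $\{\widehat{N}^k_h,\widetilde{R}^k_h\}$ is $\epsilon$-DP. The map from $\widehat{N}^k_h$ to the feasible $\widetilde{N}^k_h$ in Section \ref{sec:post} never re-examines the raw data, so by the post-processing property $\widetilde{N}^k_h$ is $\epsilon$-DP as well.

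To pass from $\epsilon$-DP of the counters to $\epsilon$-JDP of the recommended actions, I would invoke a billboard-type argument in the spirit of \citet{vietri2020private}: on every episode the deployed policy is computed solely from the privatized counters (the ``billboard''), together with the current user's own state, so the sequence of actions shown to all users but $u_k$ is a post-processing of an $\epsilon$-DP object and is independent of $u_k$'s private responses given the billboard. This yields $\epsilon$-JDP in the sense of Definition \ref{def:jdp}.

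For the utility half I would start from the standard high-probability accuracy of the Binary Mechanism: for a single stream the released value deviates from the true partial sum by $O((\log K)^{1.5}\log(1/\beta^\prime)/\epsilon^\prime)$, which after substituting $\epsilon^\prime$ is $\widetilde{O}(H/\epsilon)$. A union bound over all $(h,s,a,s^\prime,k)$ (choosing $\beta^\prime$ to absorb the $O(HS^2AK)$ failure events, at the cost of only further logarithmic factors) then gives, with probability at least $1-\beta/3$, the two-sided bounds $|\widehat{N}^k_h-N^k_h|\le\e$ and $|\widetilde{R}^k_h-R^k_h|\le \e$ with $\e=\widetilde{O}(H/\epsilon)$, matching part (1) of Assumption \ref{assump} for the reward and the raw counts.

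The remaining and most delicate step is to show that the post-processing of Section \ref{sec:post} can simultaneously enforce part (2) --- consistency $\widetilde{N}^k_h(s,a)=\sum_{s^\prime}\widetilde{N}^k_h(s,a,s^\prime)$, the monotone upper bound $\widetilde{N}^k_h(s,a)\ge N^k_h(s,a)$, and strict positivity $\widetilde{N}^k_h(s,a,s^\prime)>0$ --- while keeping the perturbation at the same $\widetilde{O}(H/\epsilon)$ scale, so that part (1) continues to hold for $\widetilde{N}$ and not merely $\widehat{N}$. I expect this to be the main obstacle, since it is precisely the novel ingredient of the paper: one must round/shift the noisy $\widehat{N}$ onto the feasible set (a valid unnormalized distribution whose marginal dominates the true count) and argue that this projection inflates the error by at most a constant or logarithmic factor rather than accumulating across the $S$ next-state coordinates. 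Once the post-processed error is controlled, the two halves combine to give $\epsilon$-JDP together with Assumption \ref{assump} at $\e=\widetilde{O}(H/\epsilon)$, establishing Lemma \ref{lem:central}.
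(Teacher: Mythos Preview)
Your proposal is correct and follows essentially the same route as the paper: privacy via per-stream Binary Mechanism, composition across the $O(H)$ affected streams, post-processing, and the Billboard Lemma; utility via the Binary Mechanism's $O((\log K)^{1.5}/\epsilon')$ accuracy plus a union bound. The one place where you leave a loose end---showing that the post-processing in Section~\ref{sec:post} preserves the $\widetilde{O}(H/\epsilon)$ error while enforcing consistency, positivity, and $\widetilde{N}^k_h(s,a)\ge N^k_h(s,a)$---is exactly what the paper isolates as Lemma~\ref{lem:middle}, so rather than flagging it as an open obstacle you can simply invoke that lemma (its proof is the feasibility/optimality argument you describe: the true counts are feasible for the LP \eqref{eqn:final_choice}, so the optimum is within $\e/4$ of $\widehat N$ and hence within $\e/2$ of $N$, after which the additive shift \eqref{eqn:addsth} secures positivity and the one-sided bound).
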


Therefore, combining Lemma \ref{lem:central} and Theorem \ref{thm:main}, the following regret bound holds.

\begin{theorem}[Regret under JDP]\label{thm:central}
For any $\epsilon>0$ and $0<\beta<1$, running DP-UCBVI (Algorithm \ref{alg:main}) with Central Privatizer as input, with probability $1-\beta$, it holds that:
\begin{equation}
    \text{Regret}(K) \leq \widetilde{O}(\sqrt{SAH^2T}+S^2AH^3/\epsilon).
\end{equation}
\end{theorem}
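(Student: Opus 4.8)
The plan is to derive Theorem \ref{thm:central} as an immediate corollary by composing the two results already in hand: Lemma \ref{lem:central} and the Privatizer-agnostic regret bound of Theorem \ref{thm:main}. First I would invoke Lemma \ref{lem:central}, which certifies that the Central Privatizer simultaneously (i) satisfies $\epsilon$-JDP and (ii) meets Assumption \ref{assump} with privacy-induced error $\e = \widetilde{O}(H/\epsilon)$. The privacy half of the theorem is then nothing more than conclusion (i), so no further work is needed there.

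For the regret half, because the Central Privatizer satisfies Assumption \ref{assump}, Theorem \ref{thm:main} applies verbatim and yields, with probability at least $1-\beta$,
\begin{equation*}
\text{Regret}(K) \leq \widetilde{O}(\sqrt{SAH^2T}+S^2AH^2\e).
\end{equation*}
It then remains only to substitute $\e = \widetilde{O}(H/\epsilon)$ from Lemma \ref{lem:central} into the second term, giving $S^2AH^2\cdot\e = \widetilde{O}(S^2AH^3/\epsilon)$, which is exactly the claimed bound.

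The genuine technical difficulty does not live in this corollary but in the two ingredients it consumes, both of which I am permitted to assume here: proving $\e = \widetilde{O}(H/\epsilon)$ in Lemma \ref{lem:central} (controlling the composition of $H$ parallel Binary-Mechanism streams at budget $\epsilon' = \epsilon/(3H\log K)$ and post-processing the noisy counts into a valid probability distribution via the procedure of Section \ref{sec:post}), and establishing the full Bernstein-bonus regret analysis behind Theorem \ref{thm:main}. For the corollary itself, the only points I would verify explicitly are bookkeeping ones: that the failure probability $\beta$ is threaded consistently through the two invocations so that no extra union bound inflates it, and that the logarithmic factors hidden inside $\e$ are cleanly absorbed by the $\widetilde{O}$ in the final display.
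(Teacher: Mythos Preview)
Your proposal is correct and matches the paper's own proof essentially verbatim: the paper simply states that the result follows from plugging $\e=\widetilde{O}(H/\epsilon)$ from Lemma \ref{lem:central} into Theorem \ref{thm:main}. Your additional remarks about bookkeeping (failure probability and absorbed log factors) are sound but not even spelled out in the paper.
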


Under the most prevalent regime where the privacy budget $\epsilon$ is a constant, the additional regret bound due to JDP is a lower order term. The main term of Theorem \ref{thm:central} improves the best known result $\widetilde{O}(\sqrt{SAH^3T})$ (Corollary 5.2 of \citet{chowdhury2021differentially}) by $\sqrt{H}$ and matches the minimax lower bound without constrains on DP \citep{jin2018q}. 

\subsubsection{A post-processing step}\label{sec:post}
During the $k$-th episode, given the noisy counts $\widehat{N}^k_h(s,a)$ and $\widehat{N}^k_h(s,a,s^\prime)$ (for all $(h,s,a,s^\prime)\in[H]\times\mathcal{S}\times\mathcal{A}\times\mathcal{S}$), we construct the following private counts that satisfy Assumption \ref{assump}. The choice of $\widetilde{N}^k_h$ follows: for all $(h,s,a)$

\begin{equation}\label{eqn:final_choice}
\begin{aligned}
&\{\widetilde{N}^k_h(s,a,s^{\prime})\}_{s^{\prime}\in\mathcal{S}}=\underset{\{x_{s^{\prime}}\}_{s^{\prime}\in\mathcal{S}}}{\operatorname*{argmin}}\; \left(\max_{s^{\prime}\in\mathcal{S}} \left|x_{s^{\prime}}-\widehat{N}^k_h(s,a,s^\prime)\right|\right) \\ &\text{such that}\,\, \left|\sum_{s^{\prime}\in\mathcal{S}}x_{s^{\prime}}-\widehat{N}^k_h(s,a)\right|\leq \frac{\e}{4}\,\,\text{and}\,\,x_{s^{\prime}}\geq 0, \forall\, s^{\prime}. \\
&\widetilde{N}^k_h(s,a)=\sum_{s^{\prime}\in\mathcal{S}}\widetilde{N}^k_h(s,a,s^\prime).
\end{aligned}
\end{equation}

Finally, for all $(h,s,a)$, we add the following terms such that with high probability, $\widetilde{N}^k_h(s,a)$ will never underestimate. 
\begin{equation}\label{eqn:addsth}
\begin{split}
\widetilde{N}^k_h(s,a,s^{\prime})=\widetilde{N}^k_h(s,a,s^{\prime})+\frac{\e}{2S},\\\widetilde{N}^k_h(s,a)=\widetilde{N}^k_h(s,a)+\frac{\e}{2}.
\end{split}
\end{equation}

\begin{remark}
The optimization problem \eqref{eqn:final_choice} can be reformulated as:
\begin{equation}\label{eqn:reformulate}
\begin{split}
&\min\,\,t,\,\,\text{s.t.}\, |x_{s^{\prime}}-\widehat{N}^k_h(s,a,s^\prime)|\leq t,\;x_{s^{\prime}}\geq 0,\,\,\forall\, s^\prime\in\mathcal{S},\\& \left|\sum_{s^{\prime}\in\mathcal{S}}x_{s^{\prime}}-\widehat{N}^k_h(s,a)\right|\leq \frac{\e}{4}.
\end{split}
\end{equation}
Note that \eqref{eqn:reformulate} is a \textsf{Linear Programming} problem with $O(S)$ variables and $O(S)$ linear constraints. This  can be solved efficiently by the simplex method \citep{ficken2015simplex} or other provably efficient algorithms \citep{nemhauser1988polynomial}. Therefore, since during the whole process, we only solve $HSAK$ such Linear Programming problems, our Algorithm \ref{alg:main} is computationally efficient.
\end{remark}

The properties of private counts $\widetilde{N}^k_h$ is summarized below.

\begin{lemma}\label{lem:middle}
Suppose $\widehat{N}^k_h$ satisfies that with probability $1-\frac{\beta}{3}$, uniformly over all $(h,s,a,s^\prime,k)$, it holds that
$$|\widehat{N}^k_h(s,a,s^\prime)-N^k_h(s,a,s^\prime)|\leq \frac{\e}{4},$$
$$|\widehat{N}^k_h(s,a)-N^k_h(s,a)|\leq \frac{\e}{4},$$
then the $\widetilde{N}^k_h$ derived from \eqref{eqn:final_choice} and \eqref{eqn:addsth} satisfies Assumption \ref{assump}.
\end{lemma}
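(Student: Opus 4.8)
The plan is to condition on the high-probability event $\mathcal{E}$ on which both hypotheses hold simultaneously for all $(h,s,a,s^\prime,k)$; since $\mathcal{E}$ has probability at least $1-\beta/3$ and everything afterwards is deterministic, it suffices to verify that $\widetilde{N}^k_h$ meets both clauses of Assumption \ref{assump} pointwise on $\mathcal{E}$. The linchpin of the whole argument is the observation that the \emph{true} count vector $\{N^k_h(s,a,s^\prime)\}_{s^\prime\in\mathcal{S}}$ is itself feasible for the program \eqref{eqn:final_choice} with small objective value. Indeed, it is nonnegative, and the exact bookkeeping identity $\sum_{s^\prime}N^k_h(s,a,s^\prime)=N^k_h(s,a)$ together with $|\widehat{N}^k_h(s,a)-N^k_h(s,a)|\leq \e/4$ shows it satisfies the summation constraint $|\sum_{s^\prime}N^k_h(s,a,s^\prime)-\widehat{N}^k_h(s,a)|\leq\e/4$; moreover its objective value is $\max_{s^\prime}|N^k_h(s,a,s^\prime)-\widehat{N}^k_h(s,a,s^\prime)|\leq\e/4$ by the per-coordinate hypothesis. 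Hence the optimal value $t^\star$ of \eqref{eqn:reformulate} satisfies $t^\star\leq\e/4$, and in particular the program is feasible, so $\widetilde{N}^k_h$ is well defined.

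From $t^\star\leq\e/4$ I would read off the deviation of the pre-shift solution. Writing $\{x_{s^\prime}\}$ for the minimizer and routing the triangle inequality through $\widehat{N}^k_h(s,a,s^\prime)$, for every $s^\prime$ I get $|x_{s^\prime}-N^k_h(s,a,s^\prime)|\leq t^\star + \e/4\leq\e/2$, and summing the constraint through $\widehat{N}^k_h(s,a)$ gives $|\sum_{s^\prime}x_{s^\prime}-N^k_h(s,a)|\leq\e/4+\e/4=\e/2$. These are exactly the bounds I need, \emph{with room to spare}, before the corrective shift is applied.

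The last step is to add the deterministic offsets of \eqref{eqn:addsth} and check that they (i) preserve internal consistency, (ii) upgrade the two-sided bound into the one-sided ``no underestimate'' guarantee, and (iii) keep everything within $\e$. Consistency is immediate: adding $\e/(2S)$ to each of the $S$ coordinates raises the sum by exactly $\e/2$, which is precisely the offset added to the aggregate, so $\widetilde{N}^k_h(s,a)=\sum_{s^\prime}\widetilde{N}^k_h(s,a,s^\prime)$ still holds. For the aggregate, since $\sum_{s^\prime}x_{s^\prime}-N^k_h(s,a)\in[-\e/2,\e/2]$, adding $\e/2$ places $\widetilde{N}^k_h(s,a)-N^k_h(s,a)$ in $[0,\e]$, which simultaneously yields $\widetilde{N}^k_h(s,a)\geq N^k_h(s,a)$ and $|\widetilde{N}^k_h(s,a)-N^k_h(s,a)|\leq\e$. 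For the per-coordinate counts, $|\widetilde{N}^k_h(s,a,s^\prime)-N^k_h(s,a,s^\prime)|\leq\e/2+\e/(2S)\leq\e$, and since each $x_{s^\prime}\geq0$ I have $\widetilde{N}^k_h(s,a,s^\prime)\geq\e/(2S)>0$. Together these establish clauses (1) and (2) of Assumption \ref{assump} (the reward bound $|\widetilde{R}^k_h-R^k_h|\leq\e$ being supplied separately by the Binary Mechanism of the Central Privatizer). The only genuinely delicate point --- and the one I would flag as the main obstacle --- is the feasibility/objective claim of the first paragraph: everything downstream is triangle-inequality bookkeeping, but the argument collapses unless one recognizes that the unknown true counts already lie in the feasible region at objective level $\e/4$, which is what converts the $\ell_\infty$-projection onto this constraint set into a genuine approximation guarantee for those true counts.
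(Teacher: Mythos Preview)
Your proposal is correct and follows essentially the same route as the paper's proof: both hinge on the observation that the true counts $\{N^k_h(s,a,s^\prime)\}_{s^\prime}$ are feasible for \eqref{eqn:final_choice} with objective value at most $\e/4$, then use the triangle inequality through $\widehat{N}^k_h$ to bound the pre-shift solution within $\e/2$ of the truth, and finally check that the additive offsets of \eqref{eqn:addsth} preserve the sum constraint while upgrading to the one-sided lower bound. Your parenthetical remark that the reward bound is handled separately is also consistent with how the paper treats it.
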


\begin{remark}
Compared to the concurrent work \citep{qiao2022offline}, our private counts $\widetilde{N}^k_h(s,a)$ have additional guarantee of never underestimating the true values, which is a desirable property for analysis in Appendix \ref{sec:app}. In comparison, the analysis in \citet{qiao2022offline} heavily relies on the assumption that the visitation number is larger than some threshold such that the scale of noise is ignorable.
\end{remark}

\subsection{Local Privatizer for Local DP}
For each episode $k$, the Local Privatizer privatizes each single trajectory by perturbing the statistics calculated from that trajectory. For visitation of (state,action) pairs, the original visitation number $\{\sigma^k_h(s,a)=\mathds{1}(s_h^k,a_h^k=s,a)\}_{(h,s,a)}$ has $\ell_1$ sensitivity $H$. Therefore, the perturbed version of the counts above $\widetilde{\sigma}^k_h(s,a)=\sigma^k_h(s,a)+\text{Lap}(\frac{3H}{\epsilon})$ satisfies $\frac{\epsilon}{3}$-LDP. In addition, similar perturbations to $\{\mathds{1}(s^k_h,a^k_h,s^k_{h+1}=s,a,s^\prime)\}_{(h,s,a,s^\prime)}$ and $\{\mathds{1}(s_h^k,a_h^k=s,a)\cdot r^k_h\}_{(h,s,a)}$ will lead to the same result. As a result, we construct Local Privatizer as below: 
\begin{enumerate}[leftmargin=0cm,itemindent=.5cm,labelwidth=\itemindent,labelsep=0cm,align=left]
    \item[(1)] For all $(k,h,s,a,s^\prime)$, we perturb $\sigma^k_h(s,a)=\mathds{1}(s_h^k,a_h^k=s,a)$ and $\sigma^k_h(s,a,s^\prime)=\mathds{1}(s^k_h,a^k_h,s^k_{h+1}=s,a,s^\prime)$ by adding independent Laplace noises: $$\widetilde{\sigma}^k_h(s,a)=\sigma^k_h(s,a)+\text{Lap}(\frac{3H}{\epsilon}),$$
    $$\widetilde{\sigma}^k_h(s,a,s^\prime)=\sigma^k_h(s,a,s^\prime)+\text{Lap}(\frac{3H}{\epsilon}).$$
    \item[(2)] The noisy counts are calculated by $$\widehat{N}^k_h(s,a)=\sum_{i=1}^{k-1} \widetilde{\sigma}^i_h(s,a),$$
    $$\widehat{N}^k_h(s,a,s^\prime)=\sum_{i=1}^{k-1} \widetilde{\sigma}^i_h(s,a,s^\prime).$$
    Then the private counts $\widetilde{N}^k_h$ are solved through the procedure in Section \ref{sec:post} with\\ $\e=O(\frac{H}{\epsilon}\sqrt{K\log(HSAT/\beta)})$.
    \item[(3)] We perturb the trajectory-wise reward by adding independent Laplace noise:  $\widetilde{r}^k_h(s,a)=\mathds{1}(s_h^k,a_h^k=s,a)\cdot r^k_h+\text{Lap}(\frac{3H}{\epsilon})$. The accumulative statistic is calculated by $\widetilde{R}^k_h(s,a)=\sum_{i=1}^{k-1}\widetilde{r}^i_h(s,a)$.
\end{enumerate}

Properties of our Local Privatizer is summarized below.

\begin{lemma}\label{lem:local}
For any $\epsilon >0$ and $0<\beta<1$, the Local Privatizer satisfies $\epsilon$-LDP and Assumption \ref{assump} with $\e=\widetilde{O}(\frac{H}{\epsilon}\sqrt{K})$.
\end{lemma}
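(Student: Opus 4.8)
The plan is to establish the two claims separately: first that the Local Privatizer is $\epsilon$-LDP, and then that the resulting private counts fulfill Assumption~\ref{assump} with $\e=\widetilde{O}(\frac{H}{\epsilon}\sqrt{K})$. The privacy half is a routine Laplace-mechanism-plus-composition argument, while the utility half reduces, via a concentration bound for sums of Laplace noise, to the already-proved Lemma~\ref{lem:middle}.

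For privacy, I would argue one trajectory at a time, since LDP is a per-user guarantee. Fix a user with trajectory $X$. She releases three vectors of statistics, $\{\widetilde{\sigma}^k_h(s,a)\}$, $\{\widetilde{\sigma}^k_h(s,a,s')\}$ and $\{\widetilde{r}^k_h(s,a)\}$, each obtained by adding independent $\text{Lap}(\frac{3H}{\epsilon})$ noise to a per-step statistic. Each of the three underlying vectors has $\ell_1$-sensitivity $H$: a single trajectory touches exactly one $(s,a)$ (resp.\ one $(s,a,s')$) per step $h$, and its reward contribution per step lies in $[0,1]$, so summed over the $H$ steps the mass is $H$. By the standard Laplace-mechanism guarantee, adding noise of scale $\frac{3H}{\epsilon}$ therefore yields $\frac{\epsilon}{3}$-LDP for each of the three releases, and basic composition across the three independent mechanisms gives $\epsilon$-LDP for the whole per-user message. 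Finally, everything the agent subsequently computes --- the aggregated noisy counts $\widehat{N}^k_h$, the post-processed counts $\N$ from \eqref{eqn:final_choice}--\eqref{eqn:addsth}, the estimates in \eqref{equ:privateest}, and the deployed policy --- is a (randomized) function of these privatized messages, so by the post-processing property the full mechanism remains $\epsilon$-LDP.

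For utility, the key observation is that the agent's aggregation simply sums the per-episode noise. Writing $\widehat{N}^k_h(s,a)=\sum_{i=1}^{k-1}\widetilde{\sigma}^i_h(s,a)=N^k_h(s,a)+\sum_{i=1}^{k-1}Z_i$ with $Z_i\overset{\text{i.i.d.}}{\sim}\text{Lap}(\frac{3H}{\epsilon})$, the error is a sum of at most $K$ i.i.d.\ Laplace variables. I would control this with a sub-exponential (Bernstein-type) tail bound for sums of Laplace random variables, giving $|\sum_{i=1}^{k-1} Z_i|\leq \widetilde{O}(\frac{H}{\epsilon}\sqrt{K})$ with probability $1-\beta'$; a union bound over the $\mathrm{poly}(H,S,A,K)$ many indices $(h,s,a,s',k)$ costs only logarithmic factors absorbed into $\widetilde{O}$. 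The identical argument applies to $\widehat{N}^k_h(s,a,s')$ and to the reward counter $\R(s,a)=\sum_{i=1}^{k-1}\widetilde{r}^i_h(s,a)$, whose error is again such a Laplace sum. Choosing $\e=\widetilde{O}(\frac{H}{\epsilon}\sqrt{K})$ large enough that these deviations are at most $\e/4$ makes the hypotheses of Lemma~\ref{lem:middle} hold for $\widehat{N}^k_h$, and simultaneously gives $|\R(s,a)-R^k_h(s,a)|\leq\e$ for the reward part of Assumption~\ref{assump}(1). Invoking Lemma~\ref{lem:middle} then yields that the post-processed $\N$ satisfies the remaining parts of Assumption~\ref{assump}.

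The main obstacle is obtaining the $\sqrt{K}$ (rather than $K$) dependence in $\e$: a naive triangle inequality on $\sum_i |Z_i|$ only gives $O(\frac{H}{\epsilon}K)$, so the argument genuinely relies on the cancellation captured by a Bernstein/sub-exponential concentration inequality for the Laplace sum, together with a careful uniform union bound. This is precisely where LDP pays a $\sqrt{K}$ factor relative to the Binary-Mechanism-based Central Privatizer of Lemma~\ref{lem:central}, whose partial-sum errors scale only polylogarithmically in $K$. The remaining steps --- the sensitivity calculation, the Laplace composition, and the reduction to Lemma~\ref{lem:middle} --- are routine.
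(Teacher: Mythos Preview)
Your proposal is correct and follows essentially the same approach as the paper: privacy via the Laplace mechanism's $\ell_1$-sensitivity guarantee plus composition (and post-processing), and utility via a sub-exponential concentration bound for sums of i.i.d.\ Laplace noise (the paper invokes Corollary~12.4 of \citet{dwork2014algorithmic} for this) together with a union bound, then reduction to Lemma~\ref{lem:middle}. Your write-up is simply more explicit about the sensitivity calculation and the $\sqrt{K}$-versus-$K$ distinction than the paper's terse citation-based proof.
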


Therefore, combining Lemma \ref{lem:local} and Theorem \ref{thm:main}, the following regret bound holds.

\begin{theorem}[Regret under LDP]\label{thm:local}
For any $\epsilon>0$ and $0<\beta<1$, running DP-UCBVI (Algorithm \ref{alg:main}) with Local Privatizer as input, with probability $1-\beta$, it holds that:
\begin{equation}
    \text{Regret}(K) \leq \widetilde{O}(\sqrt{SAH^2T}+S^2A\sqrt{H^5T}/\epsilon).
\end{equation}
\end{theorem}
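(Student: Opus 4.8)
The plan is to derive Theorem~\ref{thm:local} as an immediate corollary of the general regret bound of Theorem~\ref{thm:main} together with the guarantees of the Local Privatizer recorded in Lemma~\ref{lem:local}. First I would invoke Lemma~\ref{lem:local}, which asserts two things: (i) the Local Privatizer is $\epsilon$-LDP, and (ii) it satisfies Assumption~\ref{assump} with $\e=\widetilde{O}\!\left(\frac{H}{\epsilon}\sqrt{K}\right)$. Since Algorithm~\ref{alg:main} touches the raw trajectories only through the privatized counts produced by the Privatizer, the $\epsilon$-LDP property of the Privatizer transfers verbatim to the policies the agent deploys (post-processing does not degrade privacy), which settles the privacy half of the claim. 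Because Assumption~\ref{assump} holds with probability $1-\beta/3$ and Theorem~\ref{thm:main} already budgets the remaining failure probability, the resulting regret bound holds with probability $1-\beta$.

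For the regret half, I would simply substitute the value of $\e$ from Lemma~\ref{lem:local} into the bound of Theorem~\ref{thm:main}. The leading term $\widetilde{O}(\sqrt{SAH^2T})$ is untouched, while the privacy-dependent term becomes
\[
S^2AH^2\cdot\e \;=\; S^2AH^2\cdot\widetilde{O}\!\left(\frac{H\sqrt{K}}{\epsilon}\right) \;=\; \widetilde{O}\!\left(\frac{S^2AH^3\sqrt{K}}{\epsilon}\right) \;=\; \widetilde{O}\!\left(\frac{S^2A\sqrt{H^5T}}{\epsilon}\right),
\]
where the last equality uses $K=T/H$, so that $H^3\sqrt{K}=H^3\sqrt{T/H}=\sqrt{H^5T}$. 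Combining the two terms gives exactly $\widetilde{O}(\sqrt{SAH^2T}+S^2A\sqrt{H^5T}/\epsilon)$, as claimed.

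Consequently, essentially all of the work hides inside Lemma~\ref{lem:local}, and that is where I expect the main obstacle to be. I would prove it in two pieces. For privacy, each trajectory contributes to at most $H$ of the indicator statistics $\sigma^k_h(s,a)$ (one active state--action pair per step), so each of the three perturbed statistic groups (visitation, transition, reward) has $\ell_1$-sensitivity $O(H)$; adding $\mathrm{Lap}(3H/\epsilon)$ noise renders each group $\tfrac{\epsilon}{3}$-LDP, and basic composition over the three groups yields $\epsilon$-LDP. For the accuracy guarantee, note that $\widehat{N}^k_h(s,a)=N^k_h(s,a)+\sum_{i=1}^{k-1}\mathrm{Lap}_i(3H/\epsilon)$ is the true count corrupted by a sum of at most $K$ i.i.d.\ Laplace variables; a sub-exponential tail bound for such sums, together with a union bound over all $(h,s,a,s^\prime,k)$, controls the deviation at the $\widetilde{O}(\sqrt{K}\,H/\epsilon)$ rate with probability $1-\beta/3$. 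The delicate part is that these noisy counts are not yet valid — they need not be nonnegative, consistent across $s^\prime$, or overestimating — so I would feed them through the post-processing of Section~\ref{sec:post} and appeal to Lemma~\ref{lem:middle} to obtain $\widetilde{N}^k_h$ satisfying every clause of Assumption~\ref{assump} while only inflating the error from $\e/4$ to $\e=\widetilde{O}(\sqrt{K}H/\epsilon)$. Verifying that this LP-based rounding preserves the $\sqrt{K}$ error scale uniformly is the one step I would be most careful about.
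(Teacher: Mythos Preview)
Your proposal is correct and follows exactly the paper's own approach: the paper's proof of Theorem~\ref{thm:local} is the one-line substitution of $\e=\widetilde{O}(H\sqrt{K}/\epsilon)$ from Lemma~\ref{lem:local} into Theorem~\ref{thm:main}, and your arithmetic $H^3\sqrt{K}=\sqrt{H^5T}$ matches. Your sketch of Lemma~\ref{lem:local} (Laplace sensitivity $H$, composition over three groups, sub-exponential concentration of Laplace sums, then Lemma~\ref{lem:middle} for post-processing) is likewise the same argument the paper gives in Appendix~\ref{sec:proofprivate}.
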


Theorem \ref{thm:local} improves the non-private part of regret bound in the best known result (Corollary 5.5 of \citet{chowdhury2021differentially}).

\subsection{More discussions}
The step (1) of our Privatizers is similar to previous works \citep{vietri2020private,garcelon2021local,chowdhury2021differentially}. However, different from their approaches (directly use $\widehat{N}^k_h$ as private counts), we apply the post-processing step in Section \ref{sec:post}, which ensures that $\widetilde{P}^k_h$ is valid probability distribution while $\e$ is only worse by a constant factor. Therefore, we can apply Bernstein type bonus to achieve the optimal non-private part in our regret bound.

We remark that the Laplace Mechanism can be replaced with other mechanisms, like Gaussian Mechanism \citep{dwork2014algorithmic} for approximate DP (or zCDP). According to Theorem \ref{thm:main}, the regret bounds can be easily derived by plugging in the corresponding $\e$. 


\section{Proof Sketch}\label{sec:sketch}
In this section, we provide a proof overview for Theorem \ref{thm:main}, which can imply the results under JDP (Theorem \ref{thm:central}) and LDP (Theorem \ref{thm:local}). Recall that $N^k_h(s,a)$ and $N^k_h(s,a,s^\prime)$ are real visitation numbers while $\widetilde{N}^k_h$'s are private ones satisfying Assumption \ref{assump}. Other notations like $\widetilde{P}^k_h$, $\widetilde{r}^k_h$, $\widetilde{Q}^k_h$, $\widetilde{V}^k_h$ and $\iota$ are defined in Algorithm \ref{alg:main}. The statement ``with high probability'' means that the summation of all failure probabilities is bounded by $\beta$. We begin with some properties of private statistics below.

\textbf{Properties of $\widetilde{P}$ and $\widetilde{r}$.} Due to concentration inequalities and Assumption \ref{assump}, we provide high probability bounds for $\left|\widetilde{r}^k_h(s,a)-r_h(s,a)\right|$, $\left\|\widetilde{P}^k_h(\cdot|s,a)-P_h(\cdot|s,a)\right\|_1$ and $\left|\widetilde{P}^k_h(s^\prime|s,a)-P_h(s^\prime|s,a)\right|$ in Appendix \ref{sec:app}. In addition, we bound the key term $\left|\left(\p-P_h\right)\cdot V^\star_{h+1}(s,a)\right|$ below.

\begin{lemma}[Informal version of Lemma \ref{lem:pv}]\label{lem:pvpv}
With high probability, for all $(h,s,a,k)$, it holds that:
\begin{equation}
\begin{split}
\left|\left(\p-P_h\right)\cdot V^\star_{h+1}(s,a)\right|\leq&\widetilde{O}\left(\sqrt{\frac{\Var_{\hp(\cdot|s,a)}V^\star_{h+1}(\cdot)}{\N(s,a)}}\right)\\ +&\widetilde{O}\left(\frac{HS\e}{\N(s,a)}\right).
\end{split}
\end{equation}
\end{lemma}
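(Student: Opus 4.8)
The plan is to split the error into a privacy part and a statistical part and handle each separately. Writing $\hp(s^\prime|s,a)=N^k_h(s,a,s^\prime)/N^k_h(s,a)$ for the (non-private) empirical kernel, I decompose
$$
\left(\p-P_h\right)\cdot V^\star_{h+1}(s,a)=\underbrace{\left(\p-\hp\right)\cdot V^\star_{h+1}(s,a)}_{\text{(privacy error)}}+\underbrace{\left(\hp-P_h\right)\cdot V^\star_{h+1}(s,a)}_{\text{(statistical error)}}.
$$
The statistical error is the standard object from non-private UCBVI, while the privacy error is the new piece coming from replacing real counts by private ones.

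First I would bound the statistical error. Since $V^\star_{h+1}$ is a fixed deterministic function (it does not depend on the collected trajectories), conditioning on the filtration up to episode $k$ the centered summands $V^\star_{h+1}(s^i_{h+1})-(P_hV^\star_{h+1})(s^i_h,a^i_h)$ over the visits to $(s,a)$ form a bounded martingale difference sequence with conditional variance $\Var_{P_h(\cdot|s,a)}V^\star_{h+1}$. Bernstein's inequality plus a union bound over all $(h,s,a,k)$ gives, with high probability,
$$
\left|\left(\hp-P_h\right)\cdot V^\star_{h+1}(s,a)\right|\le \sqrt{\frac{2\Var_{P_h(\cdot|s,a)}V^\star_{h+1}\cdot\iota}{N^k_h(s,a)}}+\frac{2H\iota}{3N^k_h(s,a)}.
$$
To match the statement I then transfer the population variance to the empirical variance $\Var_{\hp(\cdot|s,a)}V^\star_{h+1}$ appearing on the right-hand side, invoking a variance-concentration bound of the form $\Var_{P_h(\cdot|s,a)}V^\star_{h+1}\le 2\Var_{\hp(\cdot|s,a)}V^\star_{h+1}+\widetilde{O}(H^2/N^k_h(s,a))$; using $\sqrt{a+b}\le\sqrt{a}+\sqrt{b}$ turns the extra term into an $\widetilde{O}(H\iota/N^k_h(s,a))$ contribution, which is dominated by the privacy term below.

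Next I would bound the privacy error purely algebraically. Writing $n=N^k_h(s,a)$, $\tilde n=\N(s,a)$, $m_{s^\prime}=N^k_h(s,a,s^\prime)$ and $\tilde m_{s^\prime}=\N(s,a,s^\prime)$, I use the identity
$$
\frac{\tilde m_{s^\prime}}{\tilde n}-\frac{m_{s^\prime}}{n}=\frac{\tilde m_{s^\prime}-m_{s^\prime}}{\tilde n}+\frac{m_{s^\prime}(n-\tilde n)}{n\tilde n}.
$$
Summing against $V^\star_{h+1}$ and using $\|V^\star_{h+1}\|_\infty\le H$, $\sum_{s^\prime}m_{s^\prime}=n$, together with Assumption~\ref{assump} (namely $|\tilde m_{s^\prime}-m_{s^\prime}|\le\e$, $|\tilde n-n|\le\e$, and $\tilde n\ge n$), the first group is at most $\frac{H}{\tilde n}\sum_{s^\prime}|\tilde m_{s^\prime}-m_{s^\prime}|\le \frac{HS\e}{\tilde n}\le\frac{HS\e}{N^k_h(s,a)}$, and the second is at most $\frac{|n-\tilde n|}{n\tilde n}\cdot Hn\le \frac{H\e}{\tilde n}\le\frac{H\e}{N^k_h(s,a)}$. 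Hence the privacy error is $\widetilde{O}(HS\e/N^k_h(s,a))$, exactly the second term of the lemma.

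The main obstacle is the variance-transfer step: I must show that $\Var_{P_h}V^\star_{h+1}$ is controlled by a constant multiple of the empirical variance $\Var_{\hp}V^\star_{h+1}$ with a correction that, after being divided by $N^k_h(s,a)$ and square-rooted, stays lower-order relative to the two displayed terms. A secondary point is the degenerate case $N^k_h(s,a)=0$, where $\hp$ and the denominators are undefined; this is handled separately since $V^\star_{h+1}$ is bounded by $H$ and the claimed inequality is then vacuous. Finally I would verify that the union bounds over all $(h,s,a,k)$ for the Bernstein and variance-concentration events fit inside the failure budget (on top of the $\beta/3$ already reserved for Assumption~\ref{assump}), consistent with the ``with high probability'' convention.
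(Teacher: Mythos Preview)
Your proposal is correct and close in spirit to the paper's argument, but the route differs in two places worth noting. First, the paper does not perform a variance-transfer step: it applies the \emph{empirical} Bernstein inequality directly to $(\hp-P_h)\cdot V^\star_{h+1}$, obtaining the bound in terms of $\Var_{\hp(\cdot|s,a)}V^\star_{h+1}$ in one shot (and also records the $\Var_{P_h}$ version via ordinary Bernstein, stating the minimum of the two). Your Bernstein-plus-variance-concentration detour reaches the same destination but is a little longer and requires an extra high-probability event. Second, the paper's decomposition is slightly different from yours: instead of $(\p-\hp)+(\hp-P_h)$, it writes $\frac{\N(s,a,s')-N^k_h(s,a,s')}{\N(s,a)}+\frac{N^k_h(s,a)}{\N(s,a)}\bigl(\hp(s'|s,a)-P_h(s'|s,a)\bigr)+P_h(s'|s,a)\bigl(\tfrac{N^k_h(s,a)}{\N(s,a)}-1\bigr)$. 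Keeping the factor $N^k_h(s,a)/\N(s,a)$ attached to the statistical piece is what lets the paper land with $\N(s,a)$ in every denominator, as stated in the lemma; your decomposition gives $N^k_h(s,a)$ in the denominator of the statistical term, which is a (harmless) weakening but does not literally match the displayed bound without a further step.
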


With these concentrations, we are ready to present our proof sketch. Since we apply Bernstein-type bonus, the proof of optimism is not straightforward. We prove our regret upper bound through induction, which is shown below.

\textbf{Induction over episodes.} Our induction is for all $k\in[K]$,
\begin{enumerate}[leftmargin=0cm,itemindent=.5cm,labelwidth=\itemindent,labelsep=0cm,align=left]
\item[(1)] Given that for all $(i,h,s,a)\in[k]\times[H]\times\mathcal{S}\times\mathcal{A}$, $\widetilde{Q}^i_h(s,a)\geq Q^\star_h(s,a)$, we prove ($T_k=kH$) $$\Re(k)\leq\widetilde{O}\left(\sqrt{H^2SAT_k}+H^2S^2A\e\right)$$
and for all $(h,s)\in[H]\times\mathcal{S}$, 
$$\widetilde{V}^{k}_h(s)-V^\star_h(s)\leq \widetilde{O}\left(\sqrt{\frac{SAH^3}{N^k_h(s)}}+\frac{S^2AH^2\e}{N^k_h(s)}\right).$$
\item[(2)] Given that for all $(h,s)\in[H]\times\mathcal{S}$, 
$$\widetilde{V}^{k}_h(s)-V^\star_h(s)\leq \widetilde{O}\left(\sqrt{\frac{SAH^3}{N^k_h(s)}}+\frac{S^2AH^2\e}{N^k_h(s)}\right),$$ we prove that for all $(h,s,a)$, $\widetilde{Q}^{k+1}_h(s,a)\geq Q^\star_h(s,a)$.
\end{enumerate}

Suppose the above induction holds, we have point (1) holds for all $k\in[K]$ and therefore, 
\begin{equation}
    \Re(K)\leq\widetilde{O}\left(\sqrt{H^2SAT}+H^2S^2A\e\right).
\end{equation} 
Below we discuss about the proof of (1) and (2) separately.

\textbf{Proof of regret bound: (1).} We only need to prove the upper bound of $\Re(k)$, as the upper bound of $\widetilde{V}^{k}_h(s)-V^\star_h(s)$ follows similarly. Using the standard technique of layer-wise error decomposition (details in Appendix \ref{sec:ed}) and ignoring lower order terms: summation of martingale differences, we only need to bound $\sum_{i=1}^k\sum_{h=1}^H b^i_h(s^i_h,a^i_h)$ which consists of four terms according to the definition of $b^k_h$. First of all, the second and forth terms are dominated by the first and third terms. Next, for the third term, we have
\begin{equation}
    \begin{split}
        \sum_{i=1}^k\sum_{h=1}^H \frac{20HS\e\iota}{\widetilde{N}^i_h(s^i_h,a^i_h)}\leq&\sum_{i=1}^k\sum_{h=1}^H \frac{20HS\e\iota}{N^i_h(s^i_h,a^i_h)}\\\leq& \widetilde{O}(S^2AH^2\e).
    \end{split}
\end{equation}

Now we analyze the first term (which is also the main term): $\sum_{i=1}^k\sum_{h=1}^H \sqrt{\frac{\Var_{s^\prime\sim\widetilde{P}_h^i(\cdot|s^i_h,a^i_h)}\widetilde{V}^i_{h+1}(\cdot)}{\widetilde{N}^i_h(s^i_h,a^i_h)}}$. It holds that
\begin{equation}
    \begin{split}
        &\sum_{i=1}^k\sum_{h=1}^H \sqrt{\frac{\Var_{s^\prime\sim\widetilde{P}_h^i(\cdot|s^i_h,a^i_h)}\widetilde{V}^i_{h+1}(\cdot)}{\widetilde{N}^i_h(s^i_h,a^i_h)}}\\\leq&
        \sqrt{\underbrace{\sum_{i=1}^k\sum_{h=1}^H \frac{1}{N^i_h(s^i_h,a^i_h)}}_{\leq \widetilde{O}(HSA)}}\cdot\sqrt{\underbrace{\sum_{i=1}^k\sum_{h=1}^H \Var_{\widetilde{P}_h^i(\cdot|s^i_h,a^i_h)}\widetilde{V}^i_{h+1}(\cdot)}_{(a)}}.
    \end{split}
\end{equation}

We bound (a) below (details are deferred to Appendix \ref{sec:variance}).

\begin{equation}
    \begin{split}
        &(a)\leq
        \underbrace{\sum_{i=1}^k\sum_{h=1}^H \Var_{P_h(\cdot|s^i_h,a^i_h)} V^{\pi_i}_{h+1}(\cdot)}_{\leq \widetilde{O}(H^2k)\; \text{w.h.p due to LTV}}+\text{lower order terms}.
    \end{split}
\end{equation}

Therefore, the main term in the regret bound scales as $\widetilde{O}(\sqrt{H^2SAT_k}+H^2S^2A\e)$. The details about lower order terms are deferred to Appendix \ref{sec:ed}, \ref{sec:variance} and \ref{sec:regret}.

\textbf{Proof of optimism: (2).} To prove optimism, we only need 
$$b^{k}_h(s,a)\geq |\widetilde{r}^{k}_h(s,a)-r_h(s,a)|+|(\p-P_h)\cdot V^\star_{h+1}(s,a)|.$$
It is clear that $|\widetilde{r}^k_h(s,a)-r_h(s,a)|$ can be bounded by the second term and a portion of the third term of $b^k_h(s,a)$. Due to Lemma \ref{lem:pvpv}, $|(\p-P_h)\cdot V^\star_{h+1}(s,a)|$ can be bounded by $\widetilde{O}\left(\sqrt{\Var_{\hp(\cdot|s,a)}V^\star_{h+1}(\cdot)/\N(s,a)}\right)$, which can be further bounded by $\widetilde{O}\left(\sqrt{\Var_{\p(\cdot|s,a)}V^\star_{h+1}(\cdot)/\N(s,a)}\right)$ plus a portion of the third term of $b^k_h(s,a)$. Finally, together with the upper bound of $|\widetilde{V}^k_h(s)-V^\star_h(s)|$ (derived from condition of (2) and optimism), the last term of $b^k_h(s,a)$ compensates for the difference of $\sqrt{\Var_{\p(\cdot|s,a)}\widetilde{V}^k_{h+1}(\cdot)/\N(s,a)}$ (first term of $b^k_h(s,a)$) and $\sqrt{\Var_{\p(\cdot|s,a)}V^\star_{h+1}(\cdot)/\N(s,a)}$. More details about optimism are deferred to Appendix \ref{sec:ucb}.


\section{Simulations}\label{sec:simulations}

\begin{figure}[h]\label{fig}
\centering     
	\includegraphics[width=100mm]{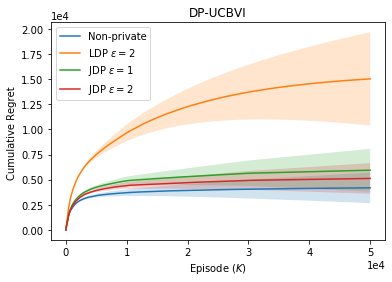}
\caption{Comparison of cumulative regret for UCBVI and DP-UCBVI with different DP guarantees.}
\end{figure}

In this section, we run simulations to show the performance of DP-UCBVI (Algorithm \ref{alg:main}). We run simulation on a standard benchmark for tabular MDP: Riverswim \citep{strehl2008analysis}, and \citet{chowdhury2021differentially} run simulations on the same environment. Briefly speaking,  the environment consists of six consecutive states and two actions ``left'' and ``right''. Choosing ``left'', the agent will tend to move towards the left side, and vice versa. The agent starts from the left side and tries to reach the right side, where she can get higher reward. For more details and illustration about this setting, please refer to \citet{chowdhury2021differentially}. 

Similar to \citet{chowdhury2021differentially}, we set the planning horizon to be $H=20$ and run $K=50000$ episodes. For each algorithm, we run $5$ times and derive the average performance and confidence region. We compare the performance of DP-UCBVI under constraints of JDP and LDP, and the original UCBVI. The cumulative regret for each algorithm is shown in Figure 1. Comparing the regret, it is shown that the non-private UCBVI has the best performance, while the cost of privacy under constraints of JDP is a small constant, and thus becomes negligible as the number of episodes increases. In addition, the DP-UCBVI with weaker privacy protection (i.e., larger $\epsilon$) has smaller regret. However, under constrains of LDP, the cost of privacy remains high and it takes a much longer period for the algorithm to converge to near-optimal policies. Our simulation results are consistent with our theories which state that the cost of JDP is a constant term while the cost of LDP is multiplicative.


\section{Conclusion}\label{sec:conclude}
In this paper, we studied the well-motivated problem of differentially private reinforcement learning. Under the tabular MDP setting, we propose a general framework: DP-UCBVI (Algorithm \ref{alg:main}) that can be combined with any Privatizers for different variants of DP. Under $\epsilon$-JDP, we achieved regret bound of $\widetilde{O}(\sqrt{SAH^2T}+S^2AH^3/\epsilon)$, which matches the lower bound up to lower order terms. Meanwhile, under $\epsilon$-LDP, we derived regret upper bound of $\widetilde{O}(\sqrt{SAH^2T}+S^2A\sqrt{H^5T}/\epsilon)$ and improves the best known result. 

We believe our framework can be further generalized to more general settings, like the linear MDP setting. The best known result under linear MDP \citep{ngo2022improved} built upon LSVI-UCB \citep{jin2020provably}, which is arguably a Hoeffding-type algorithm. The main term of regret bound in \citet{ngo2022improved}, $\widetilde{O}(\sqrt{d^3H^3T})$, is known to be suboptimal due to the recent work \citep{hu2022nearly}, which incorporates Bernstein-type self-normalized concentration. An interesting future direction is to privatize LSVI-UCB$^+$ (Algorithm 1 in \citet{hu2022nearly}) and derive tighter regret bounds under linear MDP and constraints of JDP. We believe the techniques in this paper (privatization of Bernstein-type bonus under tabular MDP) could serve as basic building blocks.

\section*{Acknowledgments}
The research is partially supported by NSF Awards \#2007117 and \#2048091. 

\bibliographystyle{plainnat}
\bibliography{sections/stat_rl}

\newpage
\appendix

\section{Extended related works}\label{sec:erw}
\textbf{Regret minimization under tabular MDP.} Under the most fundamental setting of tabular MDP, regret minimization has been widely studied by a long stream of works \citep{kearns2002near,jaksch2010near,jin2018q,xu2021logarithmic,qiao2022sample,xu2022doubly,qiao2022near,xu2022towards}. Among the optimal results, \citet{azar2017minimax} designed an UCB-based algorithm: UCBVI and derived the minimax optimal regret bound $\widetilde{O}(\sqrt{HSAT})$ under stationary MDP. Later, \citet{zhang2020almost} achieved the optimal regret bound $\widetilde{O}(\sqrt{H^{2}SAT})$  under non-stationary MDP through Q-learning type algorithm: UCB-ADVANTAGE. Meanwhile, in addition to stating optimal regret bound, \citet{dann2019policy} also provided policy certificates via their algorithm: ORLC. Different from the minimax optimal algorithms above, \citet{zanette2019tighter} designed an algorithm: EULER and derived the first problem-dependent regret bound, which can imply the minimax optimal regret. 

\textbf{Other differentially private reinforcement learning algorithms.} In this paragraph, we discuss about algorithms under linear MDP or linear mixture MDP. Under linear MDP, the only algorithm with JDP guarantee: Private LSVI-UCB \citep{ngo2022improved} is private version of LSVI-UCB \citep{jin2020provably}, while LDP under linear MDP still remains open. Under linear mixture MDP, LinOpt-VI-Reg \citep{zhou2022differentially} generalized UCRL-VTR \citep{ayoub2020model} to guarantee JDP. In addition, \citet{liao2021locally} also privatized UCRL-VTR for LDP guarantee. On the offline side, \citet{qiao2022offline} provided the first result under linear MDP based on VAPVI \citep{yin2022near}. 

\section{Properties of private estimations}\label{sec:app}
In this section, we present some useful concentrations about our private estimations that hold with high probability. Throughout the proof, we denote the non-private estimations by:
\begin{equation}
\begin{split}
    \widehat{P}^k_h(s^\prime|s,a)=\frac{N^k_h(s,a,s^\prime)}{N^k_h(s,a)},\\
    \widehat{r}^k_h(s,a)=\frac{R^k_h(s,a)}{N^k_h(s,a)}.
\end{split}    
\end{equation}

In addition, recall that our private estimations are defined as:

\begin{equation}
    \begin{split}
        \widetilde{P}^k_h(s^\prime|s,a)=\frac{\widetilde{N}^k_h(s,a,s^\prime)}{\widetilde{N}^k_h(s,a)},\\
        \widetilde{r}^k_h(s,a)=\left(\frac{\widetilde{R}^k_h(s,a)}{\widetilde{N}^k_h(s,a)}\right)_{[0,1]}.
    \end{split}
\end{equation}

\begin{lemma}\label{lem:r}
With probability $1-\frac{\beta}{15}$, for all $h,s,a,k\in[H]\times\mathcal{S}\times\mathcal{A}\times[K]$, it holds that:
\begin{equation}
    \left|\widetilde{r}^k_h(s,a)-r_h(s,a)\right|\leq \sqrt{\frac{2\iota}{\widetilde{N}^k_h(s,a)}}+\frac{2\e}{\N(s,a)}.
\end{equation}
\end{lemma}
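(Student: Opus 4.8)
The plan is to reduce the claim to a single, standard reward concentration, after peeling off the truncation and the two privatization errors. First I would exploit the truncation: since the expected reward satisfies $r_h(s,a)\in[0,1]$ and mapping $x\mapsto (x)_{[0,1]}$ is the nonexpansive Euclidean projection onto the convex set $[0,1]$ that contains $r_h(s,a)$, it holds that
$$\left|\r(s,a)-r_h(s,a)\right|=\left|\left(\frac{\R(s,a)}{\N(s,a)}\right)_{[0,1]}-r_h(s,a)\right|\le\left|\frac{\R(s,a)}{\N(s,a)}-r_h(s,a)\right|,$$
so it suffices to control the untruncated ratio. Writing it as a single fraction $\frac{\R(s,a)-r_h(s,a)\N(s,a)}{\N(s,a)}$, I would split the numerator as
$$\R(s,a)-r_h(s,a)\N(s,a)=\big(\R(s,a)-R^k_h(s,a)\big)+\big(R^k_h(s,a)-r_h(s,a)N^k_h(s,a)\big)+r_h(s,a)\big(N^k_h(s,a)-\N(s,a)\big).$$

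The two ``privacy'' terms (first and third) are each bounded by $\e$ using part (1) of Assumption \ref{assump} together with $r_h(s,a)\le 1$, contributing $2\e$ to the numerator. For the ``statistical'' middle term, I would write $R^k_h(s,a)-r_h(s,a)N^k_h(s,a)=\sum_{n=1}^{N^k_h(s,a)}(Z_n-r_h(s,a))$, where $Z_1,Z_2,\dots$ are the reward observations collected at $(h,s,a)$ in order, each supported on $[0,1]$ with mean $r_h(s,a)$. Applying Hoeffding's inequality at every fixed prefix length $n\in\{1,\dots,K\}$ and union-bounding over $n$ and over all $HSA$ triples (at most $HSAK$ events), with $\iota=\log(30HSAT/\beta)$ each event fails with probability at most $2e^{-\iota}=\beta/(15HSAT)$, so the total failure is at most $HSAK\cdot\beta/(15HSAT)=\beta/(15H)\le\beta/15$. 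On the complementary event, uniformly over all $(h,s,a,k)$,
$$\left|R^k_h(s,a)-r_h(s,a)N^k_h(s,a)\right|\le\sqrt{\frac{N^k_h(s,a)\,\iota}{2}}.$$

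Finally I would divide by $\N(s,a)$ and invoke part (2) of Assumption \ref{assump}, namely $\N(s,a)\ge N^k_h(s,a)>0$. The privacy terms give $\tfrac{2\e}{\N(s,a)}$ directly, while for the statistical term the key monotonicity step is
$$\frac{\sqrt{N^k_h(s,a)\,\iota/2}}{\N(s,a)}=\sqrt{\frac{\iota}{2}\cdot\frac{N^k_h(s,a)}{\N(s,a)^2}}\le\sqrt{\frac{\iota}{2\,\N(s,a)}}\le\sqrt{\frac{2\iota}{\N(s,a)}},$$
where the first inequality uses $N^k_h(s,a)\le\N(s,a)$; summing the two contributions yields the stated bound (with slack to spare in the leading constant). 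The one genuinely delicate point is the statistical concentration: because $N^k_h(s,a)$ is a data-dependent random count, one cannot apply Hoeffding at a single fixed sample size but must concentrate over all possible prefix lengths and union-bound, then evaluate at the realized count. Everything else is bookkeeping resting on Assumption \ref{assump}; throughout I would work on the event where that assumption holds, whose failure probability is accounted for separately in the overall $\beta$-budget.
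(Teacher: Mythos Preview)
Your proposal is correct and follows essentially the same approach as the paper: remove the truncation by nonexpansiveness, split off the two $\e$ privacy errors via Assumption~\ref{assump}, apply Hoeffding (with a union bound) for the statistical fluctuation of the accumulated reward, and then use $\N(s,a)\geq N^k_h(s,a)$ to land on the stated bound. The only cosmetic difference is that the paper manipulates ratios $\frac{N^k_h}{\N}\cdot(\widehat{r}^k_h-r_h)$ and $r_h(\frac{N^k_h}{\N}-1)$ whereas you work directly with the numerator $\R-r_h\N$; the two decompositions are algebraically equivalent, and your explicit handling of the random prefix length is a welcome clarification of what the paper's ``Hoeffding and union bound over $h,s,a,k$'' actually entails.
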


\begin{proof}[Proof of Lemma \ref{lem:r}]
We have for all $h,s,a,k\in[H]\times\mathcal{S}\times\mathcal{A}\times[K]$,
\begin{equation}
    \begin{split}
        &\left|\widetilde{r}^k_h(s,a)-r_h(s,a)\right|\leq\left|\frac{\R(s,a)}{\N(s,a)}-r_h(s,a)\right|\\\leq&\left|\frac{\R(s,a)}{\N(s,a)}-\frac{R^k_h(s,a)}{\N(s,a)}\right|+\left|\frac{R^k_h(s,a)}{\N(s,a)}-r_h(s,a)\right|\\\leq&
        \frac{\e}{\N(s,a)}+\left|\frac{N^k_h(s,a)}{\N(s,a)}\left(\frac{R^k_h(s,a)}{N^k_h(s,a)}-r_h(s,a)\right)\right|+\left|r_h(s,a)\left(\frac{N^k_h(s,a)}{\N(s,a)}-1\right)\right|\\\leq&
        \frac{\e}{\N(s,a)}+\frac{N^k_h(s,a)}{\N(s,a)}\cdot\sqrt{\frac{2\iota}{N^k_h(s,a)}}+\frac{\e}{\N(s,a)}\\\leq&
        \sqrt{\frac{2\iota}{\N(s,a)}}+\frac{2\e}{\N(s,a)},
    \end{split}
\end{equation}
where the third and last inequalities are because of Assumption \ref{assump}. The forth inequality holds with probability $1-\frac{\beta}{15}$ due to Hoeffding's inequality and union bound over $h,s,a,k$.
\end{proof}

\begin{lemma}\label{lem:p1}
With probability $1-\frac{\beta}{15}$, for all $h,s,a,k\in[H]\times\mathcal{S}\times\mathcal{A}\times[K]$, it holds that:
\begin{equation}
    \left\|\p(\cdot|s,a)-P_h(\cdot|s,a)\right\|_1\leq 2\sqrt{\frac{S\iota}{\widetilde{N}^k_h(s,a)}}+\frac{2S\e}{\N(s,a)}.
\end{equation}
\end{lemma}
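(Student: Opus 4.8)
The plan is to compare the private estimate $\p$ with the true kernel $P_h$ by inserting the non-private empirical kernel $\hp$ as an intermediary and applying the triangle inequality
\[
\left\|\p(\cdot|s,a)-P_h(\cdot|s,a)\right\|_1 \le \left\|\p(\cdot|s,a)-\hp(\cdot|s,a)\right\|_1 + \left\|\hp(\cdot|s,a)-P_h(\cdot|s,a)\right\|_1 .
\]
The first term captures the error introduced by privatization and will be controlled deterministically via Assumption \ref{assump}; the second is the ordinary statistical fluctuation of an empirical multinomial, controlled by a concentration bound together with a union bound over all $(h,s,a,k)$.

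For the privatization term I would argue coordinate-wise, exactly as in the proof of Lemma \ref{lem:r}. Writing $\p(s'|s,a)=\N(s,a,s')/\N(s,a)$ and $\hp(s'|s,a)=N^k_h(s,a,s')/N^k_h(s,a)$ and adding and subtracting $N^k_h(s,a,s')/\N(s,a)$,
\[
\left|\p(s'|s,a)-\hp(s'|s,a)\right| \le \frac{|\N(s,a,s')-N^k_h(s,a,s')|}{\N(s,a)} + \hp(s'|s,a)\cdot\frac{|\N(s,a)-N^k_h(s,a)|}{\N(s,a)} \le \frac{\e}{\N(s,a)} + \hp(s'|s,a)\cdot\frac{\e}{\N(s,a)},
\]
using the two uniform bounds $|\N(s,a,s')-N^k_h(s,a,s')|\le\e$ and $|\N(s,a)-N^k_h(s,a)|\le\e$ from Assumption \ref{assump}. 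Summing over $s'$ and using $\sum_{s'}\hp(s'|s,a)=1$ gives $\left\|\p(\cdot|s,a)-\hp(\cdot|s,a)\right\|_1 \le (S+1)\e/\N(s,a)\le 2S\e/\N(s,a)$, which is precisely the second term of the claimed bound. For the statistical term I would invoke the standard $\ell_1$ deviation bound for empirical distributions --- writing $\|\hp-P_h\|_1=2\max_{\Omega\subseteq\mathcal S}\big(P_h(\Omega)-\hp(\Omega)\big)$, union bounding over the $2^S$ subsets and applying Hoeffding (equivalently, Weissman's inequality) --- which after a further union bound over the $HSAK$ tuples yields, with probability $1-\beta/15$, $\|\hp(\cdot|s,a)-P_h(\cdot|s,a)\|_1 \le \sqrt{2S\iota/N^k_h(s,a)}$ for all $(h,s,a,k)$, with the $S\log 2$ and union-bound logarithms absorbed into $\iota=\log(30HSAT/\beta)$.

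The main obstacle is that this statistical bound is stated in terms of the true count $N^k_h(s,a)$, whereas the target carries $\N(s,a)=\widetilde{N}^k_h(s,a)$ in the denominator; since Assumption \ref{assump} gives $\N(s,a)\ge N^k_h(s,a)$, the naive substitution moves the inequality in the wrong direction. I would resolve this by splitting on the magnitude of the count. If $N^k_h(s,a)\ge\e$, then $\N(s,a)\le N^k_h(s,a)+\e\le 2N^k_h(s,a)$, so $\sqrt{2S\iota/N^k_h(s,a)}\le 2\sqrt{S\iota/\N(s,a)}$, matching the first term, and combining with the privatization term completes this regime. If instead $N^k_h(s,a)<\e$, then $\N(s,a)\le N^k_h(s,a)+\e<2\e$, whence the privatization term alone satisfies $2S\e/\N(s,a)>S\ge 2\ge\left\|\p(\cdot|s,a)-P_h(\cdot|s,a)\right\|_1$ (for $S\ge2$; the case $S=1$ is trivial since both kernels are point masses), so the entire left-hand side is absorbed into the second term. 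Taking the two regimes together yields the stated inequality, with the failure probability $\beta/15$ arising solely from the concentration step for $\hp$.
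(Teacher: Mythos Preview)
Your argument is correct. The bound on $\|\p-\hp\|_1$ coincides with Remark~\ref{rem:p1}, and the Weissman/union-bound step for $\|\hp-P_h\|_1$ is the same concentration ingredient the paper uses. The approaches diverge only in how to replace the denominator $N^k_h(s,a)$ by $\N(s,a)$ in the concentration term. You split into cases according to whether $N^k_h(s,a)\ge\e$; the paper instead inserts the hybrid quantity $N^k_h(s,a,s')/\N(s,a)$ (rather than $\hp(s'|s,a)$) as the intermediate, and after adding and subtracting $P_h(s'|s,a)\cdot N^k_h(s,a)/\N(s,a)$ this produces an explicit multiplicative factor $N^k_h(s,a)/\N(s,a)\le 1$ in front of $\|\hp-P_h\|_1$. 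Then
\[
\frac{N^k_h(s,a)}{\N(s,a)}\cdot 2\sqrt{\frac{S\iota}{N^k_h(s,a)}}=\frac{2\sqrt{S\iota\,N^k_h(s,a)}}{\N(s,a)}\le 2\sqrt{\frac{S\iota}{\N(s,a)}}
\]
converts the denominator in one line, with no case analysis and with the degenerate case $N^k_h(s,a)=0$ handled automatically. Your route reaches the same destination with a little extra bookkeeping; the paper's hybrid decomposition is slightly slicker and is reused verbatim in the proofs of Lemmas~\ref{lem:r}, \ref{lem:p}, and~\ref{lem:pv}.
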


\begin{proof}[Proof of Lemma \ref{lem:p1}]
We have for all $h,s,a,k\in[H]\times\mathcal{S}\times\mathcal{A}\times[K]$,
\begin{equation}
    \begin{split}
        &\left\|\p(\cdot|s,a)-P_h(\cdot|s,a)\right\|_1=\sum_{s^\prime}\left|\p(s^\prime|s,a)-P_h(s^\prime|s,a)\right|\\\leq&
        \sum_{s^\prime}\left|\frac{\N(s,a,s^\prime)-N^k_h(s,a,s^\prime)}{\N(s,a)}\right|+\sum_{s^\prime}\left|\frac{N^k_h(s,a,s^\prime)}{\N(s,a)}-P_h(s^\prime|s,a)\right|\\\leq&
        \frac{S\e}{\N(s,a)}+\sum_{s^\prime}\left|\frac{N^k_h(s,a,s^\prime)}{N^k_h(s,a)}\cdot\frac{N^k_h(s,a)}{\N(s,a)}-P_h(s^\prime|s,a)\right|\\\leq&
        \frac{S\e}{\N(s,a)}+\sum_{s^\prime}\left|\left(\frac{N^k_h(s,a,s^\prime)}{N^k_h(s,a)}-P_h(s^\prime|s,a)\right)\cdot\frac{N^k_h(s,a)}{\N(s,a)}\right|+\sum_{s^\prime}\left|P_h(s^\prime|s,a)\left(\frac{N^k_h(s,a)}{\N(s,a)}-1\right)\right|\\\leq&
        \frac{S\e}{\N(s,a)}+\frac{N^k_h(s,a)}{\N(s,a)}\left\|\hp(\cdot|s,a)-P_h(\cdot|s,a)\right\|_1+\sum_{s^\prime}\left|P_h(s^\prime|s,a)\frac{\e}{\N(s,a)}\right|\\\leq&
        \frac{N^k_h(s,a)}{\N(s,a)}\cdot 2\sqrt{\frac{S\iota}{N^k_h(s,a)}}+\frac{2S\e}{\N(s,a)}\\\leq&
        2\sqrt{\frac{S\iota}{\N(s,a)}}+\frac{2S\e}{\N(s,a)},
    \end{split}
\end{equation}
where the second, forth and last inequalities hold since Assumption \ref{assump}. The fifth inequality holds with probability $1-\frac{\beta}{15}$ according to Theorem 2.1 of \citet{weissman2003inequalities} and union bound.
\end{proof}

\begin{remark}\label{rem:p1}
Similarly, we have for all $h,s,a,k\in[H]\times\mathcal{S}\times\mathcal{A}\times[K]$,
\begin{equation}
    \begin{split}
        \left\|\p(\cdot|s,a)-\hp(\cdot|s,a)\right\|_1\leq&\sum_{s^\prime}\left|\frac{\N(s,a,s^\prime)}{\N(s,a)}-\frac{N^k_h(s,a,s^\prime)}{\N(s,a)}\right|+\sum_{s^\prime}\left|\frac{N^k_h(s,a,s^\prime)}{\N(s,a)}-\frac{N^k_h(s,a,s^\prime)}{N^k_h(s,a)}\right|\\\leq&\frac{2S\e}{\N(s,a)}.
    \end{split}
\end{equation}
\end{remark}

\begin{lemma}\label{lem:p}
With probability $1-\frac{\beta}{15}$, for all $h,s,a,s^\prime,k\in[H]\times\mathcal{S}\times\mathcal{A}\times\mathcal{S}\times[K]$, it holds that:
\begin{equation}
    \left|\p(s^\prime|s,a)-P_h(s^\prime|s,a)\right|\leq \sqrt{\frac{2P_h(s^\prime|s,a)\iota}{\widetilde{N}^k_h(s,a)}}+\frac{2\e\iota}{\N(s,a)}.
\end{equation}
\end{lemma}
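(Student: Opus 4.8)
The plan is to mirror the template already used for Lemma \ref{lem:r} and Lemma \ref{lem:p1}: split the deviation into a ``privacy-noise'' part controlled purely by Assumption \ref{assump} and a ``statistical'' part controlled by a concentration inequality. First I would insert the hybrid quantity $N^k_h(s,a,s^\prime)/\N(s,a)$ and apply the triangle inequality,
\begin{equation*}
\left|\p(s^\prime|s,a)-P_h(s^\prime|s,a)\right|\leq \left|\frac{\N(s,a,s^\prime)-N^k_h(s,a,s^\prime)}{\N(s,a)}\right| + \left|\frac{N^k_h(s,a,s^\prime)}{\N(s,a)}-P_h(s^\prime|s,a)\right|,
\end{equation*}
where the first term is at most $\e/\N(s,a)$ by the count-closeness bound in Assumption \ref{assump}(1). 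For the second term I would factor out $\hp(s^\prime|s,a)=N^k_h(s,a,s^\prime)/N^k_h(s,a)$ and split once more into $\frac{N^k_h(s,a)}{\N(s,a)}\left|\hp(s^\prime|s,a)-P_h(s^\prime|s,a)\right| + P_h(s^\prime|s,a)\left|\frac{N^k_h(s,a)}{\N(s,a)}-1\right|$, and the last summand is again $\leq \e/\N(s,a)$ since $P_h(s^\prime|s,a)\leq 1$ and $|N^k_h(s,a)-\N(s,a)|\leq\e$.

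The heart of the argument is a Bernstein-type concentration for the non-private empirical transition. Conditioned on $N^k_h(s,a)=n$, the next states recorded on the $n$ visits to $(s,a)$ at step $h$ are i.i.d. draws from $P_h(\cdot|s,a)$, so $N^k_h(s,a,s^\prime)$ is $\mathrm{Binomial}(n,P_h(s^\prime|s,a))$. Applying Bernstein's inequality with the variance proxy $P_h(s^\prime|s,a)(1-P_h(s^\prime|s,a))\leq P_h(s^\prime|s,a)$, together with a union bound over all $(h,s,a,s^\prime)$ and over the values $n\in[K]$, I would obtain with probability $1-\beta/15$ that $\left|\hp(s^\prime|s,a)-P_h(s^\prime|s,a)\right|\leq \sqrt{2P_h(s^\prime|s,a)\iota/N^k_h(s,a)} + \iota/N^k_h(s,a)$.

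To finish, I would multiply this statistical bound by the ratio $N^k_h(s,a)/\N(s,a)\leq 1$ guaranteed by Assumption \ref{assump}(2): the linear term becomes exactly $\iota/\N(s,a)$, while the square-root term becomes $\sqrt{2P_h(s^\prime|s,a)\iota/\N(s,a)}\cdot\sqrt{N^k_h(s,a)/\N(s,a)}\leq \sqrt{2P_h(s^\prime|s,a)\iota/\N(s,a)}$, which is the desired leading term. Collecting the two $\e/\N(s,a)$ privacy contributions with the $\iota/\N(s,a)$ Bernstein term and absorbing them all into $2\e\iota/\N(s,a)$ (valid since $\e\geq 1$ and $\iota\geq 2$) yields the claimed bound. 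The only genuinely delicate step is making the Bernstein concentration hold uniformly in the presence of the random, data-dependent count $N^k_h(s,a)$ — handled by conditioning on its value and union-bounding over $n$ — while everything else is the same algebra already carried out in Lemmas \ref{lem:r}–\ref{lem:p1}.
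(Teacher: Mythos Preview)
Your proposal is correct and follows essentially the same route as the paper: the identical triangle-inequality decomposition via $N^k_h(s,a,s^\prime)/\N(s,a)$, the same use of Assumption~\ref{assump} to absorb the privacy noise, Bernstein's inequality for the non-private deviation $|\hp-P_h|$, and the rescaling by $N^k_h(s,a)/\N(s,a)\leq 1$. The only cosmetic differences are the Bernstein constant ($2\iota/3$ in the paper versus your $\iota$) and that you spell out the conditioning-on-$n$ argument for the random count, which the paper leaves implicit.
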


\begin{proof}[Proof of Lemma \ref{lem:p}]
We have for all $h,s,a,s^\prime,k\in[H]\times\mathcal{S}\times\mathcal{A}\times\mathcal{S}\times[K]$,
\begin{equation}
    \begin{split}
        &\left|\p(s^\prime|s,a)-P_h(s^\prime|s,a)\right|\leq\left|\frac{\N(s,a,s^\prime)-N^k_h(s,a,s^\prime)}{\N(s,a)}\right|+\left|\frac{N^k_h(s,a,s^\prime)}{\N(s,a)}-P_h(s^\prime|s,a)\right|\\\leq&
        \frac{\e}{\N(s,a)}+\left|\frac{N^k_h(s,a,s^\prime)}{N^k_h(s,a)}\cdot\frac{N^k_h(s,a)}{\N(s,a)}-P_h(s^\prime|s,a)\right|\\\leq&
        \frac{\e}{\N(s,a)}+\left|\left(\frac{N^k_h(s,a,s^\prime)}{N^k_h(s,a)}-P_h(s^\prime|s,a)\right)\cdot\frac{N^k_h(s,a)}{\N(s,a)}\right|+\left|P_h(s^\prime|s,a)\left(\frac{N^k_h(s,a)}{\N(s,a)}-1\right)\right|\\\leq&
        \frac{2\e}{\N(s,a)}+\frac{N^k_h(s,a)}{\N(s,a)}\cdot\left|\hp(s^\prime|s,a)-P_h(s^\prime|s,a)\right|\\\leq&
        \frac{2\e}{\N(s,a)}+\frac{N^k_h(s,a)}{\N(s,a)}\cdot\left(\sqrt{\frac{2P_h(s^\prime|s,a)\iota}{N^k_h(s,a)}}+\frac{2\iota}{3N^k_h(s,a)}\right)\\\leq&
        \sqrt{\frac{2P_h(s^\prime|s,a)\iota}{\N(s,a)}}+\frac{2\e\iota}{\N(s,a)},
    \end{split}
\end{equation}
where the second, forth and last inequalities result from Assumption \ref{assump}. The fifth inequality holds with probability $1-\frac{\beta}{15}$ due to Bernstein's inequality and union bound.
\end{proof}

\begin{remark}\label{rem:p}
Similarly, we have for all $h,s,a,s^\prime,k\in[H]\times\mathcal{S}\times\mathcal{A}\times\mathcal{S}\times[K]$,
\begin{equation}
    \begin{split}
        \left|\p(s^\prime|s,a)-\hp(s^\prime|s,a)\right|\leq& \left|\frac{\N(s,a,s^\prime)}{\N(s,a)}-\frac{N^k_h(s,a,s^\prime)}{\N(s,a)}\right|+\left|\frac{N^k_h(s,a,s^\prime)}{\N(s,a)}-\frac{N^k_h(s,a,s^\prime)}{N^k_h(s,a)}\right|\\\leq&
        \frac{\e}{\N(s,a)}+\frac{N^k_h(s,a,s^\prime)\e}{\N(s,a)\cdot N^k_h(s,a)}\\\leq&
        \frac{2\e}{\N(s,a)}.
    \end{split}
\end{equation}
\end{remark}

\begin{lemma}\label{lem:pv}
With probability $1-\frac{2\beta}{15}$, for all $h,s,a,k\in[H]\times\mathcal{S}\times\mathcal{A}\times[K]$, it holds that:
\begin{equation}
    \left|\left(\p-P_h\right)\cdot V^\star_{h+1}(s,a)\right|\leq\min\left\{\sqrt{\frac{2\Var_{P_h(\cdot|s,a)}V^\star_{h+1}(\cdot)\cdot\iota}{\N(s,a)}},\sqrt{\frac{2\Var_{\hp(\cdot|s,a)}V^\star_{h+1}(\cdot)\cdot\iota}{\N(s,a)}}\right\}+\frac{2HS\e\iota}{\N(s,a)}.
\end{equation}
\end{lemma}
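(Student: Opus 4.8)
The plan is to mirror the decomposition used in Lemma~\ref{lem:p}, splitting the private inner product into a pure-noise contribution, a rescaled empirical fluctuation, and a count-ratio correction, and then to control only the empirical fluctuation by concentration. Writing $V=V^\star_{h+1}$ for brevity and using the identity $\frac{N^k_h(s,a,s')}{\N(s,a)}=\frac{N^k_h(s,a)}{\N(s,a)}\hp(s'|s,a)$ together with adding and subtracting $P_h(s'|s,a)$, I would start from
\begin{equation*}
\left(\p-P_h\right)\cdot V(s,a)=\sum_{s'}\frac{\N(s,a,s')-N^k_h(s,a,s')}{\N(s,a)}V(s')+\frac{N^k_h(s,a)}{\N(s,a)}\left(\hp-P_h\right)\cdot V(s,a)+\left(\frac{N^k_h(s,a)}{\N(s,a)}-1\right)P_h\cdot V(s,a).
\end{equation*}

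First I would dispose of the two error terms. Since $0\le V\le H$ and $|\N(s,a,s')-N^k_h(s,a,s')|\le\e$ for every $s'$ by Assumption~\ref{assump}, the first sum is at most $\frac{HS\e}{\N(s,a)}$; and since $\left|\frac{N^k_h(s,a)}{\N(s,a)}-1\right|=\frac{\N(s,a)-N^k_h(s,a)}{\N(s,a)}\le\frac{\e}{\N(s,a)}$ with $P_h\cdot V(s,a)\le H$, the third term is at most $\frac{H\e}{\N(s,a)}$. Both are absorbed into the stated lower-order term. For the middle term I would apply two concentration inequalities to the \emph{fixed, data-independent} function $V=V^\star_{h+1}$ — no union bound over value functions is needed precisely because $V^\star$ depends only on the true MDP. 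Bernstein's inequality gives $|(\hp-P_h)\cdot V(s,a)|\le\sqrt{\frac{2\Var_{P_h(\cdot|s,a)}V\cdot\iota}{N^k_h(s,a)}}+\frac{2H\iota}{3N^k_h(s,a)}$, and its empirical counterpart gives the same with $\Var_{\hp(\cdot|s,a)}V$ in place of $\Var_{P_h(\cdot|s,a)}V$ (up to the constant in the $1/N^k_h$ term), each after a union bound over $(h,s,a,k)$; the two events together account for the $\frac{2\beta}{15}$ failure probability.

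The key step, and the one I expect to be the main obstacle, is converting the $N^k_h(s,a)$ in these denominators into $\N(s,a)$ while keeping the variance factor at the optimal Bernstein rate. Here I would exploit the ``never-underestimate'' property $\N(s,a)\ge N^k_h(s,a)$ from Assumption~\ref{assump}(2): multiplying the Bernstein bound by the ratio $\frac{N^k_h(s,a)}{\N(s,a)}\le1$ turns the square-root term into $\frac{N^k_h(s,a)}{\N(s,a)}\sqrt{\frac{2\Var\cdot\iota}{N^k_h(s,a)}}=\sqrt{2\Var\cdot\iota}\,\frac{\sqrt{N^k_h(s,a)}}{\N(s,a)}\le\sqrt{\frac{2\Var\cdot\iota}{\N(s,a)}}$, using $\sqrt{N^k_h(s,a)}\le\sqrt{\N(s,a)}$, while the linear term simply becomes $\frac{2H\iota}{3\N(s,a)}$.

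Finally, taking the minimum over the two variance forms and collecting the residuals $\frac{HS\e}{\N(s,a)}+\frac{H\e}{\N(s,a)}+\frac{2H\iota}{3\N(s,a)}$ into the single term $\frac{2HS\e\iota}{\N(s,a)}$ (using $\e,\iota,S\ge1$) yields the claim. The delicate point throughout is that every noise-inflation appears only in lower-order $1/\N$ terms, whereas the leading $\sqrt{\Var/\N}$ term retains the optimal Bernstein rate; this is exactly what the monotonicity $\N\ge N^k_h$ buys us, and it is what makes the privatized bonus variance-adaptive rather than merely Hoeffding-type.
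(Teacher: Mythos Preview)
Your proposal is correct and follows essentially the same approach as the paper: the identical three-term decomposition into a noise term (bounded by $HS\e/\N$), a rescaled empirical fluctuation (handled by Bernstein and empirical Bernstein on the fixed function $V^\star_{h+1}$), and a count-ratio correction (bounded by $H\e/\N$), followed by the same use of $N^k_h(s,a)\le\N(s,a)$ to push the denominators from $N^k_h$ to $\N$. The only cosmetic difference is that the paper carries absolute values from the outset while you write the decomposition as an equality first.
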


\begin{proof}[Proof of Lemma \ref{lem:pv}]
We have for all $h,s,a,k\in[H]\times\mathcal{S}\times\mathcal{A}\times[K]$,
\begin{equation}
    \begin{split}
        &\left|\left(\p-P_h\right)\cdot V^\star_{h+1}(s,a)\right|\leq\left|\sum_{s^\prime}\left(\p(s^\prime|s,a)-P_h(s^\prime|s,a)\right)V^\star_{h+1}(s^\prime)\right|\\\leq&
        \left|\sum_{s^\prime}\frac{\N(s,a,s^\prime)-N^k_h(s,a,s^\prime)}{\N(s,a)}\cdot V^\star_{h+1}(s^\prime)\right|+\left|\sum_{s^\prime}\left(\frac{N^k_h(s,a,s^\prime)}{\N(s,a)}-P_h(s^\prime|s,a)\right)V^\star_{h+1}(s^\prime)\right|\\\leq&
        \frac{HS\e}{\N(s,a)}+\left|\frac{N^k_h(s,a)}{\N(s,a)}\cdot\sum_{s^\prime}\left(\frac{N^k_h(s,a,s^\prime)}{N^k_h(s,a)}-P_h(s^\prime|s,a)\right)V^\star_{h+1}(s^\prime)\right|+\left|\sum_{s^\prime}P_h(s^\prime|s,a)V^\star_{h+1}(s^\prime)\left(\frac{N^k_h(s,a)}{\N(s,a)}-1\right)\right|\\\leq&
        \frac{2HS\e}{\N(s,a)}+\left|\frac{N^k_h(s,a)}{\N(s,a)}\cdot\sum_{s^\prime}\left(\frac{N^k_h(s,a,s^\prime)}{N^k_h(s,a)}-P_h(s^\prime|s,a)\right)V^\star_{h+1}(s^\prime)\right|\\\leq&
        \frac{2HS\e}{\N(s,a)}+\frac{N^k_h(s,a)}{\N(s,a)}\cdot\min\left\{\sqrt{\frac{2\Var_{P_h(\cdot|s,a)}V^\star_{h+1}(\cdot)\cdot\iota}{N^k_h(s,a)}}+\frac{2H\iota}{3N^k_h(s,a)},\sqrt{\frac{2\Var_{\hp(\cdot|s,a)}V^\star_{h+1}(\cdot)\cdot\iota}{N^k_h(s,a)}}+\frac{7H\iota}{3N^k_h(s,a)}\right\}\\\leq&
        \min\left\{\sqrt{\frac{2\Var_{P_h(\cdot|s,a)}V^\star_{h+1}(\cdot)\cdot\iota}{\N(s,a)}},\sqrt{\frac{2\Var_{\hp(\cdot|s,a)}V^\star_{h+1}(\cdot)\cdot\iota}{\N(s,a)}}\right\}+\frac{2HS\e\iota}{\N(s,a)},
    \end{split}
\end{equation}
where the third, forth and last inequalities come from Assumption \ref{assump}. The fifth inequality holds with probability $1-\frac{2\beta}{15}$ because of Bernstein's inequality, Empirical Bernstein's inequality and union bound.
\end{proof}

\begin{remark}\label{rem:pv}
Similarly, we have for all $h,s,a,k\in[H]\times\mathcal{S}\times\mathcal{A}\times[K]$,
\begin{equation}
    \begin{split}
        \left|\left(\p-\hp\right)\cdot V^\star_{h+1}(s,a)\right|\leq H\cdot\left\|\p(\cdot|s,a)-\hp(\cdot|s,a)\right\|_1\leq \frac{2HS\e}{\N(s,a)},
    \end{split}
\end{equation}
where the last inequality results from Remark \ref{rem:p1}.
\end{remark}

Combining all the concentrations, we have the following lemma.

\begin{lemma}\label{lem:concentrate}
Under the high probability event that Assumption \ref{assump} holds, with probability at least $1-\frac{\beta}{3}$, the conclusions in Lemma \ref{lem:r}, Lemma \ref{lem:p1}, Lemma \ref{lem:p}, Lemma \ref{lem:pv}, Remark \ref{rem:p1}, Remark \ref{rem:p} and Remark \ref{rem:pv} hold simultaneously.
\end{lemma}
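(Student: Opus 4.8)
The plan is to prove Lemma \ref{lem:concentrate} by a straightforward union bound argument, conditioning on the high-probability event guaranteed by Assumption \ref{assump}. The key observation is that Lemmas \ref{lem:r}, \ref{lem:p1}, \ref{lem:p}, \ref{lem:pv} and the associated Remarks have already been established individually, each holding with its own failure probability under the event that Assumption \ref{assump} holds. Specifically, the proofs of these lemmas each invoke a fresh concentration inequality (Hoeffding, the $\ell_1$ deviation bound of \citet{weissman2003inequalities}, Bernstein, and empirical Bernstein, respectively) together with a union bound over all $(h,s,a,s',k)$, and each such invocation was shown to succeed with failure probability at most $\frac{\beta}{15}$ (or $\frac{2\beta}{15}$ for Lemma \ref{lem:pv}, which uses two concentration inequalities simultaneously). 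The deterministic Remarks \ref{rem:p1}, \ref{rem:p}, and \ref{rem:pv} require no additional randomness, as they follow purely from Assumption \ref{assump} and, in the case of Remark \ref{rem:pv}, from Remark \ref{rem:p1}.

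The main step is to collect and sum the failure probabilities. First I would fix the event $\mathcal{E}_0$ that Assumption \ref{assump} holds, which itself occurs with probability at least $1-\frac{\beta}{3}$; all subsequent statements are made conditionally on $\mathcal{E}_0$. The randomness in the concentration bounds of Lemmas \ref{lem:r}, \ref{lem:p1}, \ref{lem:p}, and \ref{lem:pv} comes from the realized trajectories (the real counts $N^k_h$ concentrating around the true transition and reward parameters), which is a separate source of randomness from the privatization noise controlling $\mathcal{E}_0$. The contributing failure probabilities are $\frac{\beta}{15}$ (Lemma \ref{lem:r}), $\frac{\beta}{15}$ (Lemma \ref{lem:p1}), $\frac{\beta}{15}$ (Lemma \ref{lem:p}), and $\frac{2\beta}{15}$ (Lemma \ref{lem:pv}), for a total of $\frac{\beta}{15}+\frac{\beta}{15}+\frac{\beta}{15}+\frac{2\beta}{15}=\frac{5\beta}{15}=\frac{\beta}{3}$. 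A final union bound over these four events then shows that, conditioned on $\mathcal{E}_0$, all the stated conclusions hold simultaneously with probability at least $1-\frac{\beta}{3}$, which is exactly the claim.

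The only subtlety worth flagging is bookkeeping rather than a genuine obstacle: one must verify that the total of the individual failure probabilities is budgeted to exactly $\frac{\beta}{3}$, and that the deterministic Remarks are correctly attributed to $\mathcal{E}_0$ (plus, for Remark \ref{rem:pv}, the event underlying Remark \ref{rem:p1}, which is itself deterministic given $\mathcal{E}_0$) so that they do not consume any additional probability budget. Since Remarks \ref{rem:p1}, \ref{rem:p}, and \ref{rem:pv} hold on $\mathcal{E}_0$ with no extra randomness, they add nothing to the union bound. Thus the arithmetic closes cleanly, and the lemma follows by intersecting the four concentration events inside $\mathcal{E}_0$.
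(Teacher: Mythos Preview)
Your proposal is correct and matches the paper's (implicit) approach: the paper does not write out a separate proof for this lemma but simply states it as the aggregation of Lemmas~\ref{lem:r}, \ref{lem:p1}, \ref{lem:p}, \ref{lem:pv} and the deterministic Remarks, which is exactly the union-bound bookkeeping you carry out, with the individual failure probabilities $\tfrac{\beta}{15}+\tfrac{\beta}{15}+\tfrac{\beta}{15}+\tfrac{2\beta}{15}=\tfrac{\beta}{3}$ summing as required.
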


In the following proof, we will prove under the high probability event where Assumption \ref{assump} and Lemma \ref{lem:concentrate} hold. Lastly, we state the following lemma regarding difference of variance.

\begin{lemma}[Lemma C.5 of \citet{qiao2022offline}]\label{lem:qiao}
For any function $V\in\mathbb{R}^{S}$ such that $\|V\|_\infty \leq H$, it holds that
\begin{equation}
   \left|\sqrt{\Var_{\p(\cdot|s,a)}(V)}-\sqrt{\Var_{\hp(\cdot|s,a)}(V)}\right|\leq \sqrt{3}H\cdot\sqrt{\left\|\p(\cdot|s,a)-\hp(\cdot|s,a)\right\|_1}.
\end{equation}
In addition, according to Remark \ref{rem:p1}, the left hand side can be further bounded by
\begin{equation}
    \left|\sqrt{\Var_{\p(\cdot|s,a)}(V)}-\sqrt{\Var_{\hp(\cdot|s,a)}(V)}\right|\leq 3H\sqrt{\frac{S\e}{\N(s,a)}}.
\end{equation}
\end{lemma}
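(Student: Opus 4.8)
The plan is to reduce the difference of square roots to a difference of variances, and then to bound that variance difference by an $\ell_1$--$\ell_\infty$ Hölder argument. The only inequality I would invoke at the outer level is the elementary estimate $|\sqrt{a}-\sqrt{b}|\le\sqrt{|a-b|}$ for nonnegative reals $a,b$, which follows from $(\sqrt a-\sqrt b)^2=a+b-2\sqrt{ab}\le a+b-2\min\{a,b\}=|a-b|$. Applying this with $a=\Var_{\p(\cdot|s,a)}(V)$ and $b=\Var_{\hp(\cdot|s,a)}(V)$, it suffices to bound $\left|\Var_{\p(\cdot|s,a)}(V)-\Var_{\hp(\cdot|s,a)}(V)\right|$ and take a square root at the very end, so the whole argument hinges on controlling the raw variance gap.

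Writing $\Var_{P}(V)=P(V^2)-(PV)^2$ for any distribution $P$, I would split the gap into a second-moment term and a squared-mean term,
\begin{equation*}
\Var_{\p}(V)-\Var_{\hp}(V)=\big[\,\p(V^2)-\hp(V^2)\,\big]-\big[\,(\p V)^2-(\hp V)^2\,\big].
\end{equation*}
For the first bracket, since $\|V^2\|_\infty\le H^2$, Hölder's inequality gives $|\p(V^2)-\hp(V^2)|\le H^2\,\norm{\p(\cdot|s,a)-\hp(\cdot|s,a)}_1$. For the second bracket I would factor the difference of squares as $(\p V-\hp V)(\p V+\hp V)$; here $|\p V-\hp V|\le H\,\norm{\p(\cdot|s,a)-\hp(\cdot|s,a)}_1$ again by Hölder (using $\|V\|_\infty\le H$), while $|\p V+\hp V|\le 2H$ because each of $\p V$ and $\hp V$ is an average of values bounded in magnitude by $H$. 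Multiplying the two factors bounds the second bracket by $2H^2\,\norm{\p(\cdot|s,a)-\hp(\cdot|s,a)}_1$.

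Adding the two contributions yields $\left|\Var_{\p}(V)-\Var_{\hp}(V)\right|\le 3H^2\,\norm{\p(\cdot|s,a)-\hp(\cdot|s,a)}_1$, and feeding this through the outer square-root inequality produces exactly $\sqrt3\,H\,\sqrt{\norm{\p(\cdot|s,a)-\hp(\cdot|s,a)}_1}$, the first claimed bound. The second displayed inequality then follows immediately by substituting the $\ell_1$ estimate $\norm{\p(\cdot|s,a)-\hp(\cdot|s,a)}_1\le 2S\e/\N(s,a)$ from Remark \ref{rem:p1}, since $\sqrt3\cdot\sqrt2=\sqrt6\le 3$. I do not expect any genuine obstacle; the only step that demands care is bookkeeping of the constant $3$, which arises as $H^2$ (second moment) plus $2H^2$ (squared mean), and in particular the observation that both $\p V$ and $\hp V$ lie in $[-H,H]$ so that the difference-of-squares factor $|\p V+\hp V|$ never exceeds $2H$.
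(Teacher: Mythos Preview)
Your argument is correct. The paper does not supply its own proof of this lemma---it simply cites Lemma~C.5 of \citet{qiao2022offline}---so there is nothing to compare against in the text itself; your route (reduce via $|\sqrt a-\sqrt b|\le\sqrt{|a-b|}$, then split $\Var_P(V)=P(V^2)-(PV)^2$ and bound each piece by $\ell_1$--$\ell_\infty$) is the standard proof of this inequality and recovers the stated constants exactly, with the second display following from Remark~\ref{rem:p1} as you note.
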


\section{Proof of Theorem \ref{thm:main}}\label{sec:appp}
In this section, we assume the conclusions in Assumption \ref{assump} and Lemma \ref{lem:concentrate} hold and prove the regret bound. 

\subsection{Some preparations}
\subsubsection{Notations}
For all $i,j\in[K]\times[H]$, we define the following variances we will use throughout the proof.
\begin{equation}
    V^\pi_{i,j}=\Var_{P_j(\cdot|s^i_j,a^i_j)}V^{\pi_i}_{j+1}(\cdot).
\end{equation}

\begin{equation}
    V^\star_{i,j}=\Var_{P_j(\cdot|s^i_j,a^i_j)}V^\star_{j+1}(\cdot).
\end{equation}

\begin{equation}
    \widetilde{V}_{i,j}=\Var_{\widetilde{P}^i_j(\cdot|s^i_j,a^i_j)}\widetilde{V}^{i}_{j+1}(\cdot).
\end{equation}

Next, recall the definition of our private bonus $b^k_h$.
\begin{equation}
    \begin{split}
        b^k_h(s,a)=&2\sqrt{\frac{\Var_{s^\prime\sim\widetilde{P}_h^k(\cdot|s,a)}\widetilde{V}^k_{h+1}(\cdot)\cdot\iota}{\widetilde{N}^k_h(s,a)}}+\sqrt{\frac{2\iota}{\widetilde{N}^k_h(s,a)}}+\frac{20HSE_{\epsilon,\beta}\cdot\iota}{\widetilde{N}^k_h(s,a)}\\+&4\sqrt{\iota}\cdot\sqrt{\frac{\sum_{s^\prime}\widetilde{P}^k_h(s^\prime|s,a)\min\left\{\frac{1000^2H^3SA\iota^2}{\widetilde{N}^k_{h+1}(s^\prime)}+\frac{1000^2H^4S^4A^2\e^2\iota^4}{\widetilde{N}^k_{h+1}(s^\prime)^2}+\frac{1000^2H^6S^4A^2\iota^4}{\widetilde{N}^k_{h+1}(s^\prime)^2},H^2\right\}}{\widetilde{N}^k_h(s,a)}}.
    \end{split}
\end{equation}

According to Assumption \ref{assump}, the private visitation numbers will never underestimate the real ones, therefore it holds that

\begin{equation}
    \begin{split}
        b^k_h(s,a)\leq&\underbrace{2\sqrt{\frac{\Var_{s^\prime\sim\widetilde{P}_h^k(\cdot|s,a)}\widetilde{V}^k_{h+1}(\cdot)\cdot\iota}{N^k_h(s,a)}}+\sqrt{\frac{2\iota}{N^k_h(s,a)}}+\frac{20HSE_{\epsilon,\beta}\cdot\iota}{N^k_h(s,a)}}_{\mathrm{b^k_{h,1}(s,a)}}\\+&\underbrace{4\sqrt{\iota}\cdot\sqrt{\frac{\sum_{s^\prime}\widetilde{P}^k_h(s^\prime|s,a)\min\left\{\frac{1000^2H^3SA\iota^2}{N^k_{h+1}(s^\prime)}+\frac{1000^2H^4S^4A^2\e^2\iota^4}{N^k_{h+1}(s^\prime)^2}+\frac{1000^2H^6S^4A^2\iota^4}{N^k_{h+1}(s^\prime)^2},H^2\right\}}{N^k_h(s,a)}}}_{b^k_{h,2}(s,a)}.
    \end{split}
\end{equation}

For the analysis later, we define $\widehat{b}^k_h(s,a):=2b^k_{h,1}(s,a)+b^k_{h,2}(s,a)$.

In addition, we define the following three terms for all $(i,j)\in[K]\times[H]$:
\begin{equation}
    c_{4,i,j}=\frac{H^2S\iota}{N^i_j(s^i_j,a^i_j)},\,\, c_{1,i,j}=\sqrt{\frac{2V_{i,j}^\star\iota}{N^i_j(s^i_j,a^i_j)}},\,\,\widehat{b}_{i,j}=\widehat{b}^i_j(s^i_j,a^i_j).
\end{equation}

\subsubsection{Typical episodes}
Now we define the typical episodes and the typical episodes with respect to $(h,s)\in[H]\times\mathcal{S}$. Briefly speaking, typical episodes ensure that the number of total episodes or visitation number to some state is large enough.

\begin{definition}[Typical episodes]\label{def:typ}
We define the general typical episodes as $[k]_{\text{typ}}=\{i:i\in[k],\,i\geq 250H^2S^2A\iota^2\}$. Also, we define typical episodes with respect to $(h,s)\in[H]\times\mathcal{S}$ as:
$$[k]_{\text{typ},h,s}=\{i\in[k]:N^i_h(s)\geq 250H^2S^2A\iota^2\},$$
where $N^i_h(s)$ is the real visitation number of $(h,s)$ before episode $i$.
\end{definition}

According to Definition \ref{def:typ} above, it is clear that
\begin{equation}
H\cdot\left|[k]/[k]_{typ}\right|\leq 250H^3S^2A\iota^2.
\end{equation}
In the following proof, when we consider summation over episodes, we can consider only the typical episodes since all episodes that are not typical only contribute to a constant term in final regret bound. 


Finally, we define the following summations for all $k,h,s\in[K]\times[H]\times\mathcal{S}$:
\begin{equation}
    C_k=\sum_{i=1}^k \mathds{1}(i\in[k]_{\text{typ}})\sum_{j=1}^{H}(c_{1,i,j}+c_{4,i,j}).
\end{equation}

\begin{equation}
    B_k=\sum_{i=1}^k \mathds{1}(i\in[k]_{\text{typ}})\sum_{j=1}^{H}\widehat{b}_{i,j}.
\end{equation}

\begin{equation}
    C_{k,h,s}=\sum_{i=1}^k \mathds{1}(s_h^i=s,\,i\in[k]_{\text{typ},h,s})\sum_{j=h}^{H}(c_{1,i,j}+c_{4,i,j}).
\end{equation}

\begin{equation}
    B_{k,h,s}=\sum_{i=1}^k \mathds{1}(s_h^i=s,\,i\in[k]_{\text{typ},h,s})\sum_{j=h}^{H}\widehat{b}_{i,j}.
\end{equation}

\subsection{Our induction}
Since we apply Bernstein-type bonus, different from \citet{chowdhury2021differentially}, optimism is not very straightforward. This is because even if $\widetilde{V}^k_h$ is upper bound of $V^\star_h$ and $\p$ is close to $\hp$, $\Var_{\widetilde{P}_h^k(\cdot|s,a)}\widetilde{V}^k_{h+1}(\cdot)$ is not necessarily an upper bound of $\Var_{\hp(\cdot|s,a)}V^\star_{h+1}(\cdot)$. However, we can prove by induction that $\widetilde{V}^k_{h+1}$ is close enough to $V^\star_{h+1}$, and therefore the last term of $b^k_h$ will be sufficiently large to make $\widetilde{V}^k_h$ a valid upper bound of $V^\star_h$. More precisely, our induction is as below:
\begin{enumerate}
    \item Assume for all $(i,h,s,a)\in[k]\times[H]\times\mathcal{S}\times\mathcal{A}$, $\widetilde{Q}^i_h(s,a)\geq Q^\star_h(s,a)$, we prove for all $(h,s)\in[H]\times\mathcal{S}$, 
    $$\widetilde{V}^{k}_h(s)-V^\star_h(s)\leq \widetilde{O}\left(\sqrt{SAH^3/N^k_h(s)}+S^2AH^2\e/N^k_h(s)+S^2AH^3/N^k_h(s)\right).$$
    \item We deduce that the last term of $b^k_h$ compensates for the possible variance difference and for all $(h,s,a)\in[H]\times\mathcal{S}\times\mathcal{A}$, $\widetilde{Q}^{k+1}_h(s,a)\geq Q^\star_h(s,a)$.
\end{enumerate}

Next, we will first prove the point 1 above under optimism in Section \ref{sec:ed}, Section \ref{sec:variance} and Section \ref{sec:regret}, and then prove optimism (point 2 above) based on point 1 in Section \ref{sec:ucb}.

\subsection{Error decomposition}\label{sec:ed}
We define $\delta_{i,h}:=V^\star_h(s_h^i)-V_h^{\pi_i}(s_h^i)$ and $\widetilde{\delta}_{i,h}:=\widetilde{V}^i_h(s^i_h)-V^{\pi_i}_h(s^i_h)$. Now we provide the error decomposition below, based on optimism, for all $(i,h)\in[k]\times[H]$, 
\begin{equation}\label{equ:ed}
    \begin{split}
        &\delta_{i,h}\leq\widetilde{\delta}_{i,h}=\widetilde{V}^i_h(s^i_h)-V_h^{\pi_i}(s^i_h)=\widetilde{Q}^i_h(s^i_h,a^i_h)-Q^{\pi_i}_h(s^i_h,a^i_h)\\\leq& \widetilde{r}^i_h(s^i_h,a^i_h)+\widetilde{P}^i_h\cdot\widetilde{V}^i_{h+1}(s^i_h,a^i_h)+b^i_h(s^i_h,a^i_h)-r_h(s^i_h,a^i_h)-P_h\cdot V^{\pi_i}_{h+1}(s^i_h,a^i_h)\\\leq&
        b^i_h(s^i_h,a^i_h)+\sqrt{\frac{2\iota}{N^i_h(s^i_h,a^i_h)}}+\frac{2\e}{N^i_h(s^i_h,a^i_h)}+(\widetilde{P}^i_h-P_h)\cdot V^\star_{h+1}(s^i_h,a^i_h)+(\widetilde{P}^i_h-P_h)\cdot (\widetilde{V}^i_{h+1}-V^\star_{h+1})(s^i_h,a^i_h)\\+&P_h\cdot(\widetilde{V}^i_{h+1}-V^{\pi_i}_{h+1})(s^i_h,a^i_h)\\\leq&
        b^i_h(s^i_h,a^i_h)+b^i_{h,1}(s^i_h,a^i_h)+c_{1,i,h}+(\widetilde{P}^i_h-P_h)\cdot (\widetilde{V}^i_{h+1}-V^\star_{h+1})(s^i_h,a^i_h)+P_h\cdot(\widetilde{V}^i_{h+1}-V^{\pi_i}_{h+1})(s^i_h,a^i_h)-\frac{2HS\e\iota}{N^i_h(s^i_h,a^i_h)}\\\leq&
        \widehat{b}_{i,h}+c_{1,i,h}+(\widetilde{P}^i_h-P_h)\cdot (\widetilde{V}^i_{h+1}-V^\star_{h+1})(s^i_h,a^i_h)+P_h\cdot(\widetilde{V}^i_{h+1}-V^{\pi_i}_{h+1})(s^i_h,a^i_h)-\frac{2HS\e\iota}{N^i_h(s^i_h,a^i_h)}.
    \end{split}
\end{equation}
The second inequality is because of the definition of $\widetilde{Q}^i_h$. The third inequality results from Lemma \ref{lem:r}. The forth inequality holds since Lemma \ref{lem:pv} and the definition of $b^i_{h,1}$, $c_{1,i,h}$. The last inequality holds due to definition of $\widehat{b}_{i,h}$.

In addition, we have:
\begin{equation}\label{equ:ed1}
    \begin{split}
        &(\widetilde{P}^i_h-P_h)\cdot (\widetilde{V}^i_{h+1}-V^\star_{h+1})(s^i_h,a^i_h)=\sum_{s^\prime}\left(\widetilde{P}^i_h(s^\prime|s^i_h,a^i_h)-P_h(s^\prime|s^i_h,a^i_h)\right)\cdot\left(\widetilde{V}^i_{h+1}(s^\prime)-V^\star_{h+1}(s^\prime)\right)\\\leq&   
        \sum_{s^\prime}\left(\sqrt{\frac{2P_h(s^\prime|s^i_h,a^i_h)\iota}{N^i_h(s^i_h,a^i_h)}}+\frac{2\e\iota}{N^i_h(s^i_h,a^i_h)}\right)\cdot\left(\widetilde{V}^i_{h+1}(s^\prime)-V^\star_{h+1}(s^\prime)\right)\\\leq&
        \sum_{s^\prime}\left(\frac{P_h(s^\prime|s^i_h,a^i_h)}{H}+\frac{H\iota}{N^i_h(s^i_h,a^i_h)}+\frac{2\e\iota}{N^i_h(s^i_h,a^i_h)}\right)\cdot\left(\widetilde{V}^i_{h+1}(s^\prime)-V^\star_{h+1}(s^\prime)\right)\\\leq& \frac{1}{H}P_h\cdot(\widetilde{V}^i_{h+1}-V^{\pi_i}_{h+1})(s^i_h,a^i_h)+\frac{H^2S\iota}{N^i_h(s^i_h,a^i_h)}+\frac{2HS\e\iota}{N^i_h(s^i_h,a^i_h)}.
    \end{split}
\end{equation}

The first inequality is because of Lemma \ref{lem:p}. The second inequality holds since AM-GM inequality. The last inequality results from the fact that $V^\star_{h+1}\geq V^{\pi_i}_{h+1}$.

Plugging \eqref{equ:ed1} into \eqref{equ:ed}, we have:
\begin{equation}\label{equ:ed2}
    \begin{split}
        \delta_{i,h}\leq\widetilde{\delta}_{i,h}\leq& \widehat{b}_{i,h}+c_{1,i,h}+c_{4,i,h}+(1+\frac{1}{H})P_h\cdot(\widetilde{V}^i_{h+1}-V^{\pi_i}_{h+1})(s^i_h,a^i_h)\\=&(1+\frac{1}{H})\widetilde{\delta}_{i,h+1}+\widehat{b}_{i,h}+c_{1,i,h}+c_{4,i,h}+(1+\frac{1}{H})\epsilon_{i,h},
    \end{split}
\end{equation}
where $\epsilon_{i,h}$ is martingale difference that is bounded in $[-H,H]$.

Recursively applying \eqref{equ:ed2}, we have:
\begin{equation}\label{equ:ed3}
    \widetilde{\delta}_{i,h}\leq 3\sum_{j=h}^H \left[\widehat{b}_{i,j}+c_{1,i,j}+c_{4,i,j}+\epsilon_{i,j}\right].
\end{equation}

Summing over episodes, we have
\begin{equation}
\sum_{i=1}^{k}\delta_{i,h}\leq \sum_{i=1}^{k}\widetilde{\delta}_{i,h}\leq 3\sum_{i=1}^k\sum_{j=h}^H\left[\widehat{b}_{i,j}+c_{1,i,j}+c_{4,i,j}+\epsilon_{i,j}\right].    
\end{equation}

According to Azuma-Hoeffding inequality and union bound, we can bound the partial sum of martingale differences below.

\begin{lemma}\label{lem:md}
Let $T_k:=Hk$ be the number of steps until episode $k$. Then with probability $1-\frac{\beta}{12}$, the following inequalities hold for all $k,h,s\in[K]\times[H]\times\mathcal{S}$ and $h^\prime\geq h$:
\begin{equation}
\begin{split}
    \sum_{i=1}^k \mathds{1}(i\in[k]_{\text{typ}})\sum_{j=h}^{H}\epsilon_{i,j}\leq H\sqrt{T_k\iota}.\\
    \sum_{i=1}^k \mathds{1}(s_h^i=s,\,i\in[k]_{\text{typ},h,s})\sum_{j=h^\prime}^{H}\epsilon_{i,j}\leq H\sqrt{HN_h^k(s)\iota}.
\end{split}
\end{equation}
\end{lemma}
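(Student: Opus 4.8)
The plan is to recognize both quantities as predictably-weighted sums of the per-step martingale differences $\epsilon_{i,j}$ and to control them with the Azuma--Hoeffding inequality plus union bounds. Let $\mathcal{F}_{i,j}$ denote the history up to and including $(s^i_j,a^i_j)$. By the recursion producing \eqref{equ:ed2}, $\epsilon_{i,j}=(P_j(\cdot|s^i_j,a^i_j)-\mathds{1}_{s^i_{j+1}})\cdot(\widetilde V^i_{j+1}-V^{\pi_i}_{j+1})$, so $\E[\epsilon_{i,j}\mid\mathcal{F}_{i,j}]=0$ and $|\epsilon_{i,j}|\le H$; ordered lexicographically in $(i,j)$, the $\{\epsilon_{i,j}\}$ form a martingale difference sequence. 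Multiplying each $\epsilon_{i,j}$ by any $\mathcal{F}_{i,j}$-measurable weight in $[0,1]$ preserves this structure while keeping the bound $H$.

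For the first inequality the weight $\mathds{1}(i\in[k]_{\mathrm{typ}})$ is a \emph{deterministic} threshold on $i$ (it only asks $i\ge 250H^2S^2A\iota^2$). Hence for fixed $(k,h)$ the sum is a fixed linear combination of at most $|[k]_{\mathrm{typ}}|\cdot(H-h+1)\le kH=T_k$ of the $\epsilon_{i,j}$, each bounded by $H$, and Azuma--Hoeffding gives $|\cdot|\le H\sqrt{2T_k\log(2/\delta)}$ with probability $1-\delta$. A union bound over the $KH$ pairs $(k,h)$, with the absolute constant absorbed into $\iota$, yields the stated bound.

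The second inequality is the delicate one, because the weight $\mathds{1}(s^i_h=s,\,i\in[k]_{\mathrm{typ},h,s})$ is now \emph{random}. The key observation is that it is $\mathcal{F}_{i,h}$-measurable: $s^i_h$ is revealed at step $h$, and $N^i_h(s)$ depends only on episodes $1,\dots,i-1$. Since the inner sum runs over $j\ge h'\ge h$, this weight is predictable for every $\epsilon_{i,j}$ it multiplies, so the weighted differences again form a martingale difference sequence bounded by $H$. This predictability is exactly why the sum must start at $j\ge h'\ge h$ and not earlier. The genuine obstacle is that the \emph{number} of nonzero terms, $(H-h'+1)\cdot\#\{i\le k:\,s^i_h=s,\ \text{typical}\}\le H\,(N^k_h(s)+1)$, is random, so one cannot fix the length before applying Azuma. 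I would resolve this by reindexing: for fixed $(h,s,h')$ enumerate the nonzero $(i,j)$ terms in their natural order and let $\widetilde S_m$ be the sum of the first $m$ of them. A stopping-time argument shows each $\widetilde S_m$ is a sum of $m$ bounded martingale differences, so Azuma bounds each fixed length $m\le KH$; a union bound over $m$ and over $(h,s,h')$ controls all prefix lengths simultaneously. Evaluating at the realized length $m^\star\le 2HN^k_h(s)$ (typicality forces $N^k_h(s)\ge1$, so the ``$+1$'' is harmless) gives $H\sqrt{HN^k_h(s)\iota}$ for every $k$ at once.

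Thus the concentration itself is routine; the two points requiring care are (a) verifying that the random indicator is predictable, which is precisely what the constraint $j\ge h'\ge h$ buys us, and (b) converting the random-length martingale into a family of fixed-length ones via the prefix/stopping-time reindexing so that the high-probability bound can scale with the realized visitation count $N^k_h(s)$ rather than with the worst-case $T_k$. As is standard, the absolute constants from Azuma--Hoeffding and from the polynomially-many union-bound events are absorbed into the logarithmic factor $\iota=\log(30HSAT/\beta)$.
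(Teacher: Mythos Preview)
Your proposal is correct and follows essentially the same approach as the paper, which simply states that the lemma follows from ``Azuma--Hoeffding inequality and union bound'' without further detail. Your careful discussion of predictability of the random indicator (which indeed relies on $j\ge h'\ge h$) and the optional-skipping/prefix-length reindexing to get a bound scaling with $N^k_h(s)$ rather than $T_k$ correctly fills in the details the paper omits.
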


We define $U_{k,h}=3\sum_{i=1}^k \mathds{1}(i\in[k]_{\text{typ}})\sum_{j=h}^{H}\left[\widehat{b}_{i,j}+c_{1,i,j}+c_{4,i,j}\right]+3H\sqrt{T_k\iota}$ and
$$U_{k,h,s}=3\sum_{i=1}^k \mathds{1}(s_h^i=s,\,i\in[k]_{\text{typ},h,s})\sum_{j=h}^{H}\left[\widehat{b}_{i,j}+c_{1,i,j}+c_{4,i,j}\right]+3H\sqrt{HN^k_h(s)\iota}.$$
Therefore, combining \eqref{equ:ed3} and Lemma \ref{lem:md}, we have the following key lemma that upper bounds summation of $\delta_{i,j}$. 

\begin{lemma}\label{lem:mdplus}
Under the high probability event in Lemma \ref{lem:md}, for all $h,s\in[H]\times\mathcal{S}$,
\begin{equation}
    \begin{split}
        \sum_{i=1}^k \mathds{1}(i\in[k]_{\text{typ}})\delta_{i,h}\leq\sum_{i=1}^k \mathds{1}(i\in[k]_{\text{typ}})\widetilde{\delta}_{i,h}\leq U_{k,h}\leq U_{k,1},\\
        \sum_{i=1}^k \mathds{1}(i\in[k]_{\text{typ}})\sum_{j=h}^H \widetilde{\delta}_{i,j}\leq HU_{k,1}.
    \end{split}
\end{equation}
At the same time, for all $h,s\in[H]\times\mathcal{S}$ and $j\geq h$,
\begin{equation}
    \begin{split}
        \sum_{i=1}^k \mathds{1}(s_h^i=s,\,i\in[k]_{\text{typ},h,s})\delta_{i,j}\leq \sum_{i=1}^k \mathds{1}(s_h^i=s,\,i\in[k]_{\text{typ},h,s})\widetilde{\delta}_{i,j}\leq U_{k,h,s}, \\ \sum_{i=1}^k \mathds{1}(s_h^i=s,\,i\in[k]_{\text{typ},h,s})\sum_{j=h}^H \widetilde{\delta}_{i,j}\leq HU_{k,h,s}.
    \end{split}
\end{equation}
\end{lemma}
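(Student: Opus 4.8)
The plan is to assemble the lemma directly from the per-layer recursive bound \eqref{equ:ed3} and the martingale control in Lemma \ref{lem:md}; there is no new analytic content, so the work is entirely careful bookkeeping over the layer and episode indices, with nonnegativity of the bonus terms supplying every monotonicity step.

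First I would record the two elementary sign facts that drive everything. Under the optimism hypothesis $\widetilde{Q}^i_h\geq Q^\star_h$ we have $\widetilde{V}^i_h\geq V^\star_h$, hence $0\leq\delta_{i,h}\leq\widetilde{\delta}_{i,h}$ holds pointwise in $i$; and each of $\widehat{b}_{i,j},c_{1,i,j},c_{4,i,j}$ is nonnegative. Multiplying $\delta_{i,h}\leq\widetilde{\delta}_{i,h}$ by the nonnegative indicator $\mathds{1}(i\in[k]_{\text{typ}})$ and summing over $i$ gives the leftmost inequality of the first display for free.

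Next I would feed \eqref{equ:ed3} into the sum. Multiplying $\widetilde{\delta}_{i,h}\leq 3\sum_{j=h}^H[\widehat{b}_{i,j}+c_{1,i,j}+c_{4,i,j}+\epsilon_{i,j}]$ by $\mathds{1}(i\in[k]_{\text{typ}})$, summing over $i$, and isolating the martingale contribution, the term $3\sum_i\mathds{1}(i\in[k]_{\text{typ}})\sum_{j=h}^H\epsilon_{i,j}$ is controlled by $3H\sqrt{T_k\iota}$ via the first inequality of Lemma \ref{lem:md}. What remains is precisely the definition of $U_{k,h}$, yielding $\sum_i\mathds{1}(i\in[k]_{\text{typ}})\widetilde{\delta}_{i,h}\leq U_{k,h}$. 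The bound $U_{k,h}\leq U_{k,1}$ then follows because passing from $\sum_{j=h}^H$ to $\sum_{j=1}^H$ only adds nonnegative terms, the martingale offset $3H\sqrt{T_k\iota}$ being identical in both, which finishes the first display. For the double-sum bound I would invoke the just-proved per-layer estimate in the form $\sum_i\mathds{1}(i\in[k]_{\text{typ}})\widetilde{\delta}_{i,j}\leq U_{k,j}\leq U_{k,1}$ (the case $h=j$) for each $j\in\{h,\dots,H\}$ and sum over the at most $H$ layers, giving $HU_{k,1}$.

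The $(h,s)$-localized statements are proved identically, with two cosmetic differences: the indicator becomes $\mathds{1}(s^i_h=s,\,i\in[k]_{\text{typ},h,s})$, and for a layer $j\geq h$ one starts from \eqref{equ:ed3} with $h$ replaced by $j$, then invokes the \emph{second} inequality of Lemma \ref{lem:md} with $h^\prime=j$ to absorb the martingale part into the offset $3H\sqrt{HN^k_h(s)\iota}$; nonnegativity lets me enlarge $\sum_{l=j}^H$ to $\sum_{l=h}^H$ so the surviving bonus terms match the definition of $U_{k,h,s}$, and summing the $H$ layers produces the final factor of $H$. I do not anticipate a genuine obstacle, since the error recursion and the Azuma--Hoeffding bounds are already in hand; the single point demanding attention is the index matching, in particular exploiting the $h^\prime\geq h$ freedom in the second martingale bound so that the localized estimate is valid for every $j\geq h$ rather than only $j=h$.
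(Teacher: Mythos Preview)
Your proposal is correct and matches the paper's approach exactly: the paper does not give a separate proof of this lemma but simply states that it follows by ``combining \eqref{equ:ed3} and Lemma \ref{lem:md}'', which is precisely the bookkeeping you lay out. The only content is the nonnegativity of $\widehat{b}_{i,j},c_{1,i,j},c_{4,i,j}$ for the monotonicity steps and the use of the $h'\geq h$ freedom in Lemma \ref{lem:md} for the localized bounds, both of which you identify correctly.
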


\subsection{Upper bounds of variance}\label{sec:variance}
From the analysis above, it suffices to derive upper bounds for $C_k$, $B_k$, $C_{k,h,s}$ and $B_{k,h,s}$. As a middle step, we discuss several upper bounds about summation of variances. We begin with the following lemma from \citet{azar2017minimax}. Recall that $V^\pi_{i,j}=\Var_{P_j(\cdot|s^i_j,a^i_j)}V^{\pi_i}_{j+1}(\cdot)$ and $T_k=Hk$.

\begin{lemma}[Lemma 8 of \citet{azar2017minimax}]\label{lem:azar}
With probability $1-\frac{\beta}{12}$, for all $k,h,s\in[K]\times[H]\times\mathcal{S}$, it holds that
\begin{equation}
    \begin{split}
        \sum_{i=1}^{k}\mathds{1}(i\in[k]_{\text{typ}})\sum_{j=h}^{H}V_{i,j}^\pi\leq HT_k+2\sqrt{H^4T_k\iota}+H^3\iota\leq 2HT_k,\\
        \sum_{i=1}^k \mathds{1}(s_h^i=s,\,i\in[k]_{\text{typ},h,s})\sum_{j=h}^H V_{i,j}^\pi\leq H^2N^k_h(s)+2\sqrt{H^5N_h^k(s)\iota}+H^3\iota\leq 2H^2N^k_h(s).
    \end{split}
\end{equation}
\end{lemma}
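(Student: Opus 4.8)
The plan is to prove this as a law-of-total-variance (LTV) bound combined with a Freedman/Bernstein-type martingale concentration, and then use the typical-episode condition only at the very end to collapse the lower-order terms. The conceptual heart of the argument is that $V^\pi_{i,j}=\Var_{P_j(\cdot|s^i_j,a^i_j)}V^{\pi_i}_{j+1}(\cdot)$ is precisely the per-step contribution in the LTV decomposition of the realized return along the trajectory of episode $i$. Fixing $i$, the policy $\pi_i$ is measurable with respect to the history before episode $i$, so conditioning on $s^i_h$ the induced process is a Markov reward process and the Bellman-based LTV identity gives
\[
\E\!\left[\sum_{j=h}^H V^\pi_{i,j}\,\Big|\,s^i_h\right]\le \E\!\left[\sum_{j=h}^H \Var\!\left(r^i_j+V^{\pi_i}_{j+1}(s^i_{j+1})\,\big|\,s^i_j,a^i_j\right)\Big|\,s^i_h\right]=\Var\!\left(\sum_{j=h}^H r^i_j\,\Big|\,s^i_h\right)\le H^2,
\]
where the first inequality uses conditional independence of reward and transition (so the one-step variance dominates the value-variance), and the last step uses that the return lies in $[0,H]$. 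This is the step that saves the $\sqrt H$ factor: a naive bound $V^\pi_{i,j}\le H^2$ summed over $H$ steps would give $H^3$ per episode, whereas LTV yields $H^2$.

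Next I would set $W_i:=\sum_{j=h}^H V^\pi_{i,j}$ and note that $W_i$ is measurable with respect to the first $i$ episodes, satisfies $\E[W_i\mid\mathcal F_{i-1}]\le H^2$ by the display above, and obeys $0\le W_i\le H^3$ (each term is at most $H^2/4$). Hence $\{W_i-\E[W_i\mid\mathcal F_{i-1}]\}_i$ is a bounded martingale difference sequence whose predictable quadratic variation is at most $\sum_{i\le k}H^3\,\E[W_i\mid\mathcal F_{i-1}]\le H^3\cdot H^2k=H^5k$. Applying Freedman's inequality with range $H^3$ and variance proxy $H^5k$, together with a union bound over all $(k,h,s)$ to land the $\iota=\log(30HSAT/\beta)$ factor, yields with probability $1-\beta/12$
\[
\sum_{i=1}^k\mathds{1}(i\in[k]_{\text{typ}})\sum_{j=h}^H V^\pi_{i,j}\le \sum_{i=1}^k\E[W_i\mid\mathcal F_{i-1}]+2\sqrt{H^5k\,\iota}+H^3\iota\le HT_k+2\sqrt{H^4T_k\iota}+H^3\iota,
\]
using $T_k=Hk$ so that $kH^2=HT_k$ and $H^5k=H^4T_k$. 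The restriction to $[k]_{\text{typ}}$ only deletes small-index terms and does not disturb the martingale structure. The final simplification to $2HT_k$ then follows because $i\in[k]_{\text{typ}}$ forces $k\ge 250H^2S^2A\iota^2$, which makes $2\sqrt{H^4T_k\iota}+H^3\iota\le HT_k$.

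For the per-state inequality I would run the identical argument but restrict the episode sum to $\{i:\,s^i_h=s,\ i\in[k]_{\text{typ},h,s}\}$; crucially the indicator $\mathds{1}(s^i_h=s,\,i\in[k]_{\text{typ},h,s})$ is $\mathcal F_{i-1}$-measurable (it depends only on $\pi_i$ and on visitation counts accumulated before episode $i$), so it is predictable and the martingale machinery is unaffected. Conditioning on $s^i_h=s$, LTV again gives a per-episode expected contribution $\le H^2$, the number of contributing episodes is at most $N^k_h(s)$, and Freedman produces $H^2N^k_h(s)+2\sqrt{H^5N^k_h(s)\iota}+H^3\iota$, which the typical condition $N^k_h(s)\ge 250H^2S^2A\iota^2$ collapses to $2H^2N^k_h(s)$. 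The main obstacle I anticipate is not any single estimate but the bookkeeping that makes LTV transferable across episodes: because the value functions $V^{\pi_i}_{j+1}$ change from episode to episode, one cannot treat $\sum V^\pi_{i,j}$ as a sum against a fixed function, and it is the $\mathcal F_{i-1}$-measurability of $\pi_i$ (hence of $\E[W_i\mid\mathcal F_{i-1}]$) that licenses the martingale concentration; getting the correct direction of the LTV inequality, the right variance proxy in Freedman, and a clean uniform union bound over $(k,h,s)$ is the delicate part. This is exactly Lemma~8 of \citet{azar2017minimax} transported to the non-stationary horizon, and I would verify that the non-stationary bookkeeping only shifts powers of $H$ rather than altering the structure of the argument.
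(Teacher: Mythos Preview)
Your approach---law of total variance to bound $\E\bigl[\sum_{j\ge h} V^\pi_{i,j}\mid s^i_h\bigr]\le H^2$, then Freedman's inequality at the episode level with range $H^3$ and predictable variance $H^5k$, a union bound over $(k,h,s)$, and finally the typical-episode cutoff to absorb the lower-order terms---is exactly what the paper invokes (it cites Lemma~8 of \citet{azar2017minimax} without reproducing the argument). The first inequality is handled correctly.

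There is one concrete slip in the per-state argument. You assert that the indicator $\mathds{1}(s^i_h=s,\,i\in[k]_{\text{typ},h,s})$ is $\mathcal{F}_{i-1}$-measurable, on the grounds that it ``depends only on $\pi_i$ and on visitation counts accumulated before episode $i$''. This is false: $s^i_h$ is the state reached at step $h$ of episode $i$, which depends on the random transitions \emph{during} episode $i$ and is not determined by $\mathcal{F}_{i-1}$. As written, the predictability needed for Freedman fails. The fix is mechanical: work with the finer filtration $\mathcal{G}_i:=\sigma\bigl(\mathcal{F}_{i-1},\,s^i_1,a^i_1,\ldots,s^i_h\bigr)$ that includes the first $h$ steps of episode $i$. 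Then $\mathds{1}(s^i_h=s)$ is $\mathcal{G}_i$-measurable, the LTV bound $\E[W_i\mid\mathcal{G}_i]\le H^2$ still holds (since $\pi_i$ is already fixed and the LTV identity is conditional on $s^i_h$), and Freedman applies to the $\{\mathcal{G}_i\}$-adapted increments $\mathds{1}(s^i_h=s)\bigl(W_i-\E[W_i\mid\mathcal{G}_i]\bigr)$. With that correction the remainder of your sketch goes through unchanged.
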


The proof results from a combination of Law of total variance, Freedman's inequality, union bound and our definition of typical episodes, more details can be found in \citet{azar2017minimax}. Next, we provide another upper bound for further bounding $C_k$ (and $C_{k,h,s}$). Recall that $V^\star_{i,j}=\Var_{P_j(\cdot|s^i_j,a^i_j)}V^\star_{j+1}(\cdot)$.

\begin{lemma}\label{lem:vstar}
Under the high probability event in Lemma \ref{lem:md}, it holds that
\begin{equation}
\sum_{i=1}^{k}\mathds{1}(i\in[k]_{\text{typ}})\sum_{j=1}^H \left(V^\star_{i,j}-V_{i,j}^\pi\right)\leq 2H^2U_{k,1}+2H^2\sqrt{T_k\iota}.
\end{equation}
Similarly, under the same high probability event, for all $(h,s)\in[H]\times\mathcal{S}$,
\begin{equation}
    \sum_{i=1}^k \mathds{1}(s_h^i=s,\,i\in[k]_{\text{typ},h,s})\sum_{j=h}^H\left(V^\star_{i,j}-V^\pi_{i,j}\right)\leq 2H^2U_{k,h,s}+2H^2\sqrt{HN^k_h(s)\iota}.
\end{equation}
\end{lemma}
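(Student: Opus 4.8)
The plan is to dominate each variance gap $V^\star_{i,j}-V^\pi_{i,j}$ pointwise by the expected value-function gap at the next step, then invoke optimism to express that gap through the already-controlled quantities $\widetilde{\delta}_{i,j}$ and the martingale differences $\epsilon_{i,j}$ from the error decomposition.

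First I would record the elementary variance comparison: for a fixed distribution $P$ and functions $f\geq g$ with range in $[0,H]$,
\begin{equation*}
\Var_P(f)-\Var_P(g)=\E_P[(f-g)(f+g)]-\big(\E_P[f]-\E_P[g]\big)\big(\E_P[f]+\E_P[g]\big)\leq 2H\,\E_P[f-g],
\end{equation*}
because $0\leq f+g\leq 2H$ and the subtracted product is nonnegative. Taking $P=P_j(\cdot|s^i_j,a^i_j)$, $f=V^\star_{j+1}$, $g=V^{\pi_i}_{j+1}$ (recall $V^\star_{j+1}\geq V^{\pi_i}_{j+1}$) gives
\begin{equation*}
V^\star_{i,j}-V^\pi_{i,j}\leq 2H\cdot P_j\cdot(V^\star_{j+1}-V^{\pi_i}_{j+1})(s^i_j,a^i_j).
\end{equation*}

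Next I would use optimism, i.e. the induction hypothesis $\widetilde{V}^i_{j+1}\geq V^\star_{j+1}$, to replace $V^\star_{j+1}$ by $\widetilde{V}^i_{j+1}$ inside the nonnegative measure $P_j$; this is the key move that lets me reuse the existing martingale instead of introducing a new one. By the definition of $\epsilon_{i,j}$ in the error decomposition, $P_j\cdot(\widetilde{V}^i_{j+1}-V^{\pi_i}_{j+1})(s^i_j,a^i_j)=\widetilde{\delta}_{i,j+1}+\epsilon_{i,j}$, so
\begin{equation*}
V^\star_{i,j}-V^\pi_{i,j}\leq 2H\big(\widetilde{\delta}_{i,j+1}+\epsilon_{i,j}\big).
\end{equation*}
Summing over the typical episodes and $j=1,\dots,H$, and noting that $\widetilde{\delta}_{i,H+1}=0$ together with $\widetilde{\delta}_{i,j}\geq0$ gives the index shift $\sum_{j=1}^H\widetilde{\delta}_{i,j+1}\leq\sum_{j=1}^H\widetilde{\delta}_{i,j}$, I obtain
\begin{equation*}
\sum_{i=1}^{k}\mathds{1}(i\in[k]_{\text{typ}})\sum_{j=1}^H(V^\star_{i,j}-V^\pi_{i,j})\leq 2H\sum_{i=1}^{k}\mathds{1}(i\in[k]_{\text{typ}})\sum_{j=1}^H\widetilde{\delta}_{i,j}+2H\sum_{i=1}^{k}\mathds{1}(i\in[k]_{\text{typ}})\sum_{j=1}^H\epsilon_{i,j}.
\end{equation*}

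Finally I would close with the two lemmas already in hand: Lemma~\ref{lem:mdplus} (with $h=1$) bounds the first sum by $HU_{k,1}$, and Lemma~\ref{lem:md} bounds the martingale sum by $H\sqrt{T_k\iota}$, yielding $2H^2U_{k,1}+2H^2\sqrt{T_k\iota}$ as claimed. The second inequality is obtained by the identical chain, summing $j$ from $h$ to $H$ under the weight $\mathds{1}(s^i_h=s,\,i\in[k]_{\text{typ},h,s})$ and invoking the $(h,s)$-versions of Lemmas~\ref{lem:mdplus} and~\ref{lem:md} (which give $HU_{k,h,s}$ and $H\sqrt{HN^k_h(s)\iota}$). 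The main subtlety is the choice to pass through $\widetilde{V}^i_{j+1}$ rather than $V^\star_{j+1}$: without optimism one would need a fresh Azuma--Hoeffding bound for a new martingale, whereas the optimistic substitution reuses $\epsilon_{i,j}$ and makes the claimed ``high probability event in Lemma~\ref{lem:md}'' exactly sufficient; everything else is constant bookkeeping, which matches since $2H\cdot HU_{k,1}=2H^2U_{k,1}$.
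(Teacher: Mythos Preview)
Your proof is correct and follows essentially the same route as the paper: bound the variance difference pointwise by $2H\cdot P_j\cdot(V^\star_{j+1}-V^{\pi_i}_{j+1})$, use optimism to pass to $\widetilde{V}^i_{j+1}$ so that the increment equals $\widetilde{\delta}_{i,j+1}+\epsilon_{i,j}$, and then close with Lemmas~\ref{lem:md} and~\ref{lem:mdplus}. Your presentation of the variance comparison is slightly more explicit (the paper drops the squared-mean term and uses $f^2-g^2\leq 2H(f-g)$ directly), but the argument and the constants are identical.
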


\begin{proof}[Proof of Lemma \ref{lem:vstar}]
We only prove the first conclusion, the second one can be proven in identical way.
\begin{equation}
    \begin{split}
        &\sum_{i=1}^{k}\mathds{1}(i\in[k]_{\text{typ}})\sum_{j=1}^H \left(V^\star_{i,j}-V_{i,j}^\pi\right)\leq\sum_{i=1}^{k}\mathds{1}(i\in[k]_{\text{typ}})\sum_{j=1}^H \E_{s^\prime\sim P_j(\cdot|s^i_j,a^i_j)}\left[V^\star_{j+1}(s^\prime)^2-V^{\pi_i}_{j+1}(s^\prime)^2\right]\\\leq&
        2H\sum_{i=1}^{k}\mathds{1}(i\in[k]_{\text{typ}})\sum_{j=1}^H\E_{s^\prime\sim P_j(\cdot|s^i_j,a^i_j)}\left[V^\star_{j+1}(s^\prime)-V^{\pi_i}_{j+1}(s^\prime)\right]\\\leq&
        2H\left(\sum_{i=1}^{k}\mathds{1}(i\in[k]_{\text{typ}})\sum_{j=1}^H \widetilde{\delta}_{i,j+1}+H\sqrt{T_k \iota}\right)\\\leq&
        2H^2U_{k,1}+2H^2\sqrt{T_k\iota}.
    \end{split}
\end{equation}
The first inequality is because $V^\star_{j+1}\geq V^{\pi_i}_{j+1}$. The second inequality results from the fact that $V^\star_{j+1}+V^{\pi_i}_{j+1}\leq 2H$. The third inequality holds since Lemma \ref{lem:md}. The last inequality is due to Lemma \ref{lem:mdplus}.
\end{proof}

Lastly, we prove the following lemma for bounding $B_k$ (and $B_{k,h,s}$). Recall that $\widetilde{V}_{i,j}=\Var_{\widetilde{P}^i_j(\cdot|s^i_j,a^i_j)}\widetilde{V}^{i}_{j+1}(\cdot)$.

\begin{lemma}\label{lem:vtilde}
Under the high probability event in Lemma \ref{lem:md}, with probability $1-\frac{\beta}{12K}$, it holds that
\begin{equation}
\sum_{i=1}^{k}\mathds{1}(i\in[k]_{\text{typ}})\sum_{j=1}^H \left(\widetilde{V}_{i,j}-V_{i,j}^\pi\right)\leq 2H^2U_{k,1}+8H^2S\sqrt{HAT_k\iota}+6H^3S^2A\e\iota.
\end{equation}
Similarly, under the same high probability event, for all $(h,s)\in[H]\times\mathcal{S}$,
\begin{equation}
    \sum_{i=1}^k \mathds{1}(s_h^i=s,\,i\in[k]_{\text{typ},h,s})\sum_{j=h}^H\left(\widetilde{V}_{i,j}-V^\pi_{i,j}\right)\leq 2H^2U_{k,h,s}+8H^3S\sqrt{AN^k_h(s)\iota}+6H^3S^2A\e\iota.
\end{equation}
\end{lemma}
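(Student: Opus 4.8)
The plan is to prove the first (global) inequality in full; the second follows by the identical argument with the summations restricted to $\{i : s^i_h = s,\ i\in[k]_{\text{typ},h,s}\}$ and over $j\ge h$, exactly as in Lemma \ref{lem:vstar}. The core idea is to split the discrepancy between the private variance $\widetilde V_{i,j}=\Var_{\widetilde P^i_j}\widetilde V^i_{j+1}$ and the true policy variance $V^\pi_{i,j}=\Var_{P_j}V^{\pi_i}_{j+1}$ by inserting the intermediate quantity $\Var_{P_j(\cdot|s^i_j,a^i_j)}\widetilde V^i_{j+1}$, namely
\begin{equation*}
\widetilde V_{i,j}-V^\pi_{i,j}=\underbrace{\left(\Var_{\widetilde P^i_j}\widetilde V^i_{j+1}-\Var_{P_j}\widetilde V^i_{j+1}\right)}_{(\mathrm{I})\ \text{measure change}}+\underbrace{\left(\Var_{P_j}\widetilde V^i_{j+1}-\Var_{P_j}V^{\pi_i}_{j+1}\right)}_{(\mathrm{II})\ \text{value change}}.
\end{equation*}
Term $(\mathrm{II})$ is the exact analogue of the quantity controlled in Lemma \ref{lem:vstar} with $V^\star$ replaced by $\widetilde V^i_{j+1}$, while term $(\mathrm{I})$ is the genuinely new ingredient forced by privatizing the transition kernel.

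For term $(\mathrm{II})$, I would invoke optimism, which under the induction hypothesis gives $\widetilde V^i_{j+1}\ge V^\star_{j+1}\ge V^{\pi_i}_{j+1}$ pointwise. Since both variances share the measure $P_j$, writing $\Var_\mu(f)-\Var_\mu(g)=\E_\mu[f^2-g^2]-((\E_\mu f)^2-(\E_\mu g)^2)$ and using $(\E V^{\pi_i}_{j+1})^2-(\E\widetilde V^i_{j+1})^2\le 0$ together with the elementary bound $a^2-b^2\le 2H(a-b)$ for $0\le b\le a\le H$ yields $(\mathrm{II})\le 2H\,\E_{s'\sim P_j(\cdot|s^i_j,a^i_j)}[\widetilde V^i_{j+1}(s')-V^{\pi_i}_{j+1}(s')]$. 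The inner expectation equals $\widetilde\delta_{i,j+1}+\epsilon_{i,j}$, where $\epsilon_{i,j}$ is precisely the Bellman martingale difference appearing in \eqref{equ:ed2}. Summing over typical episodes and $j$, Lemma \ref{lem:md} controls $\sum\epsilon_{i,j}$ and Lemma \ref{lem:mdplus} gives $\sum_{i}\mathds{1}(i\in[k]_{\text{typ}})\sum_{j=1}^H\widetilde\delta_{i,j+1}\le HU_{k,1}$; together these contribute the leading $2H^2U_{k,1}$ term (plus an $O(H^2\sqrt{T_k\iota})$ martingale remainder absorbed below).

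For term $(\mathrm{I})$, the key observation is an elementary two-term estimate: for any $\|V\|_\infty\le H$, expanding $\Var_\mu(V)=\sum_{s'}\mu(s')V(s')^2-(\sum_{s'}\mu(s')V(s'))^2$ and bounding the linear term by $H^2\|\widetilde P^i_j-P_j\|_1$ and the squared-mean term by $2H^2\|\widetilde P^i_j-P_j\|_1$ gives $|\Var_{\widetilde P^i_j}(V)-\Var_{P_j}(V)|\le 3H^2\|\widetilde P^i_j-P_j\|_1$ with $V=\widetilde V^i_{j+1}$. Applying Lemma \ref{lem:p1}, $\|\widetilde P^i_j-P_j\|_1\le 2\sqrt{S\iota/N^i_j(s^i_j,a^i_j)}+2S\e/N^i_j(s^i_j,a^i_j)$, so $(\mathrm{I})\le 6H^2\big(\sqrt{S\iota/N^i_j}+S\e/N^i_j\big)$. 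Summing over typical $(i,j)$, Cauchy--Schwarz combined with the standard harmonic/pigeonhole bound $\sum_{i,j}1/N^i_j(s^i_j,a^i_j)\le\widetilde O(HSA)$ converts $\sum_{i,j}\sqrt{1/N^i_j}$ into $\widetilde O(\sqrt{HSAT_k})$, producing the $8H^2S\sqrt{HAT_k\iota}$ term, while the $\e$-part produces $6H^3S^2A\e\iota$. These are exactly the two correction terms in the claim, and the per-$(h,s)$ version uses the same sums restricted to the roughly $HN^k_h(s)$ relevant terms to yield $8H^3S\sqrt{AN^k_h(s)\iota}$ and $6H^3S^2A\e\iota$.

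The main obstacle is term $(\mathrm{I})$: one cannot simply argue that the private variance concentrates to the true variance, because $\widetilde P^i_j$ is a post-processed, biased estimate and $\widetilde V^i_{j+1}$ is itself data-dependent. The fix is to hold $\widetilde V^i_{j+1}$ fixed as the integrand while changing only the measure, paying the $H^2$ factor in the $\ell_1$ bound, and to route the value comparison entirely through optimism so that the data-dependence of $\widetilde V^i_{j+1}$ is handled by the already-established error decomposition rather than a fresh concentration. The only new randomness is the per-episode martingale concentration, applied for fixed $k$ (hence the $\beta/(12K)$ budget, so that a union bound over $k\in[K]$ costs $\beta/12$), and the remaining subtlety is to perform the summation in $(\mathrm{I})$ via Cauchy--Schwarz so as to land at the optimal $\sqrt{HAT_k}$ rate instead of a crude bound that would lose a factor of $\sqrt{T_k}$.
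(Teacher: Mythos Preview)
Your two-term split $(\mathrm{I})+(\mathrm{II})$ is sound and achieves the correct order, but it differs from the paper's proof in a meaningful way. The paper uses a \emph{three}-term decomposition: it separates the second moment and the squared mean, and in the squared-mean piece it replaces $\widetilde V^i_{j+1}$ and $V^{\pi_i}_{j+1}$ by the \emph{fixed} function $V^\star_{j+1}$ (using $V^{\pi_i}_{j+1}\le V^\star_{j+1}\le \widetilde V^i_{j+1}$). Because $V^\star$ is deterministic, the paper can apply a Hoeffding bound to $(\widehat P^i_j-P_j)V^\star_{j+1}$ rather than paying the full $\ell_1$ distance; this is precisely where the fresh $\beta/(12K)$ probability is spent, and it is what produces the constant $8$ rather than the $12$-ish constant your cruder $3H^2\|\widetilde P^i_j-P_j\|_1$ bound on $(\mathrm{I})$ would give after summing.

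Your route is actually simpler: since you control $(\mathrm{I})$ entirely through Lemma~\ref{lem:p1} (already in the good event) and $(\mathrm{II})$ through Lemmas~\ref{lem:md}--\ref{lem:mdplus} (also already established), your argument does \emph{not} require any new concentration at all. So your remark that ``the only new randomness is the per-episode martingale concentration, applied for fixed $k$'' misidentifies the role of $\beta/(12K)$: in your own argument it is unnecessary, whereas in the paper it buys the sharper $V^\star$-based Hoeffding step. The trade-off is a slightly larger constant in front of $H^2S\sqrt{HAT_k\iota}$; if you want to hit the stated constant $8$ you must follow the paper's three-term split, but for the $\widetilde O$ regret bound your approach is equally valid.
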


\begin{proof}[Proof of Lemma \ref{lem:vtilde}]
We only prove the first conclusion, the second one can be proven in identical way. We have
\begin{equation}
    \begin{split}
        &\sum_{i=1}^{k}\mathds{1}(i\in[k]_{\text{typ}})\sum_{j=1}^H \left(\widetilde{V}_{i,j}-V_{i,j}^\pi\right)\leq \underbrace{\sum_{i=1}^{k}\mathds{1}(i\in[k]_{\text{typ}})\sum_{j=1}^H\left[ \E_{s^\prime\sim\widetilde{P}^i_j(\cdot|s^i_j,a^i_j)}\widetilde{V}^i_{j+1}(s^\prime)^2-\E_{s^\prime\sim P_j(\cdot|s^i_j,a^i_j)}\widetilde{V}^i_{j+1}(s^\prime)^2\right]}_{\mathrm{(a)}}\\+&\underbrace{\sum_{i=1}^{k}\mathds{1}(i\in[k]_{\text{typ}})\sum_{j=1}^H \E_{s^\prime\sim P_j(\cdot|s^i_j,a^i_j)}\left[\widetilde{V}^i_{j+1}(s^\prime)^2-V^{\pi_i}_{j+1}(s^\prime)^2\right]}_{\mathrm{(b)}}\\+&\underbrace{\sum_{i=1}^{k}\mathds{1}(i\in[k]_{\text{typ}})\sum_{j=1}^H\left[\left(\E_{s^\prime\sim P_j(\cdot|s^i_j,a^i_j)}V^\star_{j+1}(s^\prime)\right)^2-\left(\E_{s^\prime\sim \widetilde{P}^i_j(\cdot|s^i_j,a^i_j)}V^\star_{j+1}(s^\prime)\right)^2\right]}_{\mathrm{(c)}}.
    \end{split}
\end{equation}
The inequality holds because of direct calculation and the fact that $V^{\pi_i}_{j+1}\leq V^\star_{j+1}\leq\widetilde{V}^i_{j+1}$. Next we bound these terms separately. First of all,
\begin{equation}
    \begin{split}
        &\mathrm{(a)}\leq \sum_{i=1}^{k}\mathds{1}(i\in[k]_{\text{typ}})\sum_{j=1}^H H^2\cdot\left\|\widetilde{P}^i_j(\cdot|s^i_j,a^i_j)-P_j(\cdot|s^i_j,a^i_j)\right\|_1\\\leq&
        \sum_{i=1}^{k}\mathds{1}(i\in[k]_{\text{typ}})\sum_{j=1}^H H^2\left(2\sqrt{\frac{S\iota}{N^i_j(s^i_j,a^i_j)}}+\frac{2S\e}{N^i_j(s^i_j,a^i_j)}\right)\\\leq&
        2H^2S\sqrt{HAT_k\iota}+2H^3S^2A\e\iota.
    \end{split}
\end{equation}
The second inequality comes from Lemma \ref{lem:p1}. The last inequality is because of direct calculation. For $\mathrm{(b)}$, according to Lemma \ref{lem:md}, similar to the proof of Lemma \ref{lem:vstar}, it holds that
\begin{equation}
    \mathrm{(b)}\leq 2H\left(\sum_{i=1}^{k}\mathds{1}(i\in[k]_{\text{typ}})\sum_{j=1}^H \widetilde{\delta}_{i,j+1}+H\sqrt{T_k \iota}\right)\leq 2H^2U_{k,1}+2H^2\sqrt{T_k\iota}.
\end{equation}
Lastly, for $\mathrm{(c)}$, we have
\begin{equation}
    \begin{split}
        \mathrm{(c)}\leq& \sum_{i=1}^{k}\mathds{1}(i\in[k]_{\text{typ}})\sum_{j=1}^H \left[\left(\E_{s^\prime\sim P_j(\cdot|s^i_j,a^i_j)}V^\star_{j+1}(s^\prime)\right)^2-\left(\E_{s^\prime\sim \widehat{P}^i_j(\cdot|s^i_j,a^i_j)}V^\star_{j+1}(s^\prime)\right)^2+2H^2\cdot\left\|\left(\widetilde{P}^i_j-\widehat{P}^i_j\right)(\cdot|s^i_j,a^i_j)\right\|_1\right]\\\leq&
        \sum_{i=1}^{k}\mathds{1}(i\in[k]_{\text{typ}})\sum_{j=1}^H\left(2H\cdot2H\sqrt{\frac{\iota}{N^i_j(s^i_j,a^i_j)}}+2H^2\cdot\frac{2S\e}{N^i_j(s^i_j,a^i_j)}\right)\\\leq&
        4H^2\sqrt{HSAT_k\iota}+4H^3S^2A\e\iota.
    \end{split}
\end{equation}
The second inequality holds with probability $1-\frac{\beta}{12K}$ due to Hoeffding's inequality and Remark \ref{rem:p1}. The last inequality holds because of direct calculation.

Combining the upper bounds of $\mathrm{(a)},\mathrm{(b)},\mathrm{(c)}$, the proof is complete.
\end{proof}

\subsection{Upper bound of regret}\label{sec:regret}
With the upper bounds in Section \ref{sec:variance}, we are ready to bound $C_k$, $B_k$ and the regret. In this section, we assume the high probability events in Lemma \ref{lem:md} and Lemma \ref{lem:azar} hold. We begin with the upper bound of $C_k$ and $C_{k,h,s}$. Recall that $C_k=\sum_{i=1}^k \mathds{1}(i\in[k]_{\text{typ}})\sum_{j=1}^{H}(c_{1,i,j}+c_{4,i,j})$ and $C_{k,h,s}=\sum_{i=1}^k \mathds{1}(s_h^i=s,\,i\in[k]_{\text{typ},h,s})\sum_{j=h}^{H}(c_{1,i,j}+c_{4,i,j})$. 

\begin{lemma}\label{lem:ck}
Under the high probability events in Lemma \ref{lem:md} and Lemma \ref{lem:azar}, we have
\begin{equation}
    C_k=\sum_{i=1}^k \mathds{1}(i\in[k]_{\text{typ}})\sum_{j=1}^{H}(c_{1,i,j}+c_{4,i,j})\leq3\sqrt{H^2SAT_k\iota^2}+2\sqrt{H^3SAU_{k,1}\iota^2}+H^3S^2A\iota^2.
\end{equation}
Similarly, under the same high probability event, for all $h,s\in[H]\times\mathcal{S}$,
\begin{equation}
    C_{k,h,s}\leq3\sqrt{H^3SAN_h^k(s)\iota^2}+2\sqrt{H^3SAU_{k,h,s}\iota^2}+H^3S^2A\iota^2.
\end{equation}
\end{lemma}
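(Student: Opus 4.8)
The plan is to bound the two families of terms separately, writing $C_k=\sum_{i,j}c_{1,i,j}+\sum_{i,j}c_{4,i,j}$ (indicators and ranges suppressed), and then to handle $C_{k,h,s}$ by the identical argument with $T_k$ replaced by $HN^k_h(s)$.

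First I would dispatch the $c_4$ part. Since $c_{4,i,j}=H^2S\iota/N^i_j(s^i_j,a^i_j)$, the sum reduces to the standard harmonic/pigeonhole estimate $\sum_{i}\mathds{1}(i\in[k]_{\text{typ}})\sum_{j=1}^H 1/N^i_j(s^i_j,a^i_j)\leq\widetilde{O}(HSA)$, obtained by grouping the summands according to the visited pair $(j,s,a)$ and bounding each inner sum by a partial harmonic series $\sum_{n\ge1}1/n\le\widetilde{O}(1)$. Multiplying by $H^2S\iota$ yields the claimed $H^3S^2A\iota^2$ term (the extra factor of $\iota$ absorbing the logarithm). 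For the $c_1$ part I would apply Cauchy--Schwarz:
\begin{equation}
\sum_{i,j}c_{1,i,j}=\sqrt{2\iota}\sum_{i,j}\sqrt{\frac{V^\star_{i,j}}{N^i_j(s^i_j,a^i_j)}}\leq\sqrt{2\iota}\cdot\sqrt{\Big(\sum_{i,j}V^\star_{i,j}\Big)}\cdot\sqrt{\Big(\sum_{i,j}\frac{1}{N^i_j(s^i_j,a^i_j)}\Big)},
\end{equation}
so that the same harmonic bound $\widetilde{O}(HSA)$ feeds in again, and everything comes down to controlling $\sum_{i,j}V^\star_{i,j}$.

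The crux is exactly this last sum. One cannot bound $\sum V^\star_{i,j}$ directly because it is the trajectory-wise variance of the optimal value function; instead I would route it through the law of total variance. Writing $\sum_{i,j}V^\star_{i,j}=\sum_{i,j}V^\pi_{i,j}+\sum_{i,j}(V^\star_{i,j}-V^\pi_{i,j})$, I apply Lemma~\ref{lem:azar} to get $\sum_{i,j}V^\pi_{i,j}\leq 2HT_k$ and Lemma~\ref{lem:vstar} to get $\sum_{i,j}(V^\star_{i,j}-V^\pi_{i,j})\leq 2H^2U_{k,1}+2H^2\sqrt{T_k\iota}$, hence $\sum_{i,j}V^\star_{i,j}\leq 2HT_k+2H^2U_{k,1}+2H^2\sqrt{T_k\iota}$. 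Substituting and using subadditivity $\sqrt{a+b+c}\leq\sqrt a+\sqrt b+\sqrt c$ produces three contributions: $\widetilde{O}(\sqrt{H^2SAT_k\iota^2})$, $\widetilde{O}(\sqrt{H^3SAU_{k,1}\iota^2})$, and a cross-term $\widetilde{O}(\sqrt{H^3SA\sqrt{T_k\iota}\,\iota})$, matching the first two stated terms up to constants.

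The main obstacle, beyond organizing this decomposition, is twofold. The conceptual difficulty is that bounding $\sum V^\star_{i,j}$ forces in $U_{k,1}$ (a regret-like quantity encompassing $B_k$ and $C_k$ itself), so this lemma does not stand alone but is one inequality in a coupled system; this is acceptable, as the self-reference is untangled later when all such bounds are combined and solved for the regret. The technical nuisance is verifying that the cross-term $\sqrt{H^3SA\sqrt{T_k\iota}\,\iota}$ is genuinely lower order: it grows like $T_k^{1/4}$, so for large $T_k$ it is dominated by the leading $\sqrt{H^2SAT_k\iota^2}$ term, while in the typical regime $T_k=Hk\geq 250H^3S^2A\iota^2$ one has $2H^2\sqrt{T_k\iota}\leq 2HT_k$, allowing it to be folded back into the leading term. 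The final step is to repeat the entire argument verbatim for $C_{k,h,s}$, invoking the per-$(h,s)$ halves of Lemma~\ref{lem:azar} and Lemma~\ref{lem:vstar} (with bound $2H^2N^k_h(s)$ and $2H^2U_{k,h,s}+2H^2\sqrt{HN^k_h(s)\iota}$ respectively) and substituting $HN^k_h(s)$ for $T_k$ throughout.
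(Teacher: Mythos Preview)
Your proposal is correct and follows essentially the same route as the paper: split $C_k$ into the $c_{1}$ and $c_{4}$ parts, bound $\sum c_{4,i,j}$ by the pigeonhole/harmonic estimate $\sum_{i,j}1/N^i_j(s^i_j,a^i_j)\le HSA\iota$, apply Cauchy--Schwarz to $\sum c_{1,i,j}$, and control $\sum_{i,j}V^\star_{i,j}$ via Lemma~\ref{lem:azar} plus Lemma~\ref{lem:vstar}, absorbing the $H^2\sqrt{T_k\iota}$ cross-term into the leading term using the typical-episode threshold. Your observation that the appearance of $U_{k,1}$ makes this one inequality in a coupled system, to be solved later, is exactly how the paper proceeds in Lemma~\ref{lem:regret}.
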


\begin{proof}[Proof of Lemma \ref{lem:ck}]
We only prove the first conclusion, the second one can be proven in identical way. We have
\begin{equation}
    C_k=\underbrace{\sum_{i=1}^k \mathds{1}(i\in[k]_{\text{typ}})\sum_{j=1}^{H}c_{1,i,j}}_{\mathrm{(a)}}+\underbrace{\sum_{i=1}^k \mathds{1}(i\in[k]_{\text{typ}})\sum_{j=1}^{H}c_{4,i,j}}_{\mathrm{(b)}}.
\end{equation}
For $\mathrm{(a)}$, due to Cauchy-Schwarz inequality, it holds that
\begin{equation}
    \begin{split}
        &\mathrm{(a)}=\sum_{i=1}^k \mathds{1}(i\in[k]_{\text{typ}})\sum_{j=1}^{H}\sqrt{\frac{2V^\star_{i,j}\iota}{N^i_j(s^i_j,a^i_j)}}\\\leq&
        \sqrt{2\iota}\cdot\sqrt{\underbrace{\sum_{i=1}^k \mathds{1}(i\in[k]_{\text{typ}})\sum_{j=1}^{H}\frac{1}{N^i_j(s^i_j,a^i_j)}}_{\mathrm{(c)}}}\cdot\sqrt{\underbrace{\sum_{i=1}^k \mathds{1}(i\in[k]_{\text{typ}})\sum_{j=1}^{H}V^\star_{i,j}}_{\mathrm{(d)}}}.
    \end{split}
\end{equation}
Due to direct calculation, we have $\mathrm{(c)}\leq HSA\iota$ and in addition,
\begin{equation}
\begin{split}
    \mathrm{(d)}\leq& \sum_{i=1}^k \mathds{1}(i\in[k]_{\text{typ}})\sum_{j=1}^{H} V^\pi_{i,j}+\sum_{i=1}^k \mathds{1}(i\in[k]_{\text{typ}})\sum_{j=1}^{H}\left(V^\star_{i,j}-V^\pi_{i,j}\right)\\\leq&
    2HT_k+2H^2U_{k,1}+2H^2\sqrt{T_k\iota},
\end{split}
\end{equation}
where the second inequality holds due to Lemma \ref{lem:azar} and Lemma \ref{lem:vstar}. Therefore, We have
\begin{equation}\label{equ:cka}
    \mathrm{(a)}\leq 3\sqrt{H^2SAT_k\iota^2}+2\sqrt{H^3SAU_{k,1}\iota^2}.
\end{equation}
For $\mathrm{(b)}$, according to direct calculation, it holds that
\begin{equation}\label{equ:ckb}
\mathrm{(b)}\leq \sum_{i=1}^k \mathds{1}(i\in[k]_{\text{typ}})\sum_{j=1}^{H}\frac{H^2S\iota}{N^i_j(s^i_j,a^i_j)}\leq H^3S^2A\iota^2.
\end{equation}
Combining \eqref{equ:cka} and \eqref{equ:ckb}, the proof is complete.
\end{proof}

Next, we bound $B_k$ and $B_{k,h,s}$. Recall that $B_k=\sum_{i=1}^k \mathds{1}(i\in[k]_{\text{typ}})\sum_{j=1}^{H}\widehat{b}_{i,j}$ and\\ $B_{k,h,s}=\sum_{i=1}^k \mathds{1}(s_h^i=s,\,i\in[k]_{\text{typ},h,s})\sum_{j=h}^{H}\widehat{b}_{i,j}$.

\begin{lemma}\label{lem:bk}
Under the high probability events in Lemma \ref{lem:md}, Lemma \ref{lem:azar} and Lemma \ref{lem:vtilde}, with probability $1-\frac{\beta}{12K}$,
\begin{equation}
    B_k\leq16\sqrt{H^2SAT_k\iota^2}+6\sqrt{H^3SAU_{k,1}\iota^2}+250H^2S^2A\e\iota^2+250H^3S^2A\iota^2.
\end{equation}
Similarly, under the same high probability event, for all $h,s\in[H]\times\mathcal{S}$,
\begin{equation}
    B_{k,h,s}\leq16\sqrt{H^3SAN_h^k(s)\iota^2}+6\sqrt{H^3SAU_{k,h,s}\iota^2}+250H^2S^2A\e\iota^2+250H^3S^2A\iota^2.
\end{equation}
\end{lemma}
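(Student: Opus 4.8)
The plan is to follow the same template as the proof of Lemma \ref{lem:ck}, exploiting the decomposition $\widehat{b}_{i,j}=2b^i_{j,1}(s^i_j,a^i_j)+b^i_{j,2}(s^i_j,a^i_j)$ and bounding the contribution of each constituent term of $B_k=\sum_{i=1}^k\mathds{1}(i\in[k]_{\text{typ}})\sum_{j=1}^H\widehat{b}_{i,j}$ separately. Throughout I would use the elementary counting bound $\sum_{i,j}\mathds{1}(i\in[k]_{\text{typ}})\frac{1}{N^i_j(s^i_j,a^i_j)}\le HSA\iota$ already invoked in Lemma \ref{lem:ck}, together with Cauchy--Schwarz to peel off the $1/N$ factors, and the inequality $\sqrt{a+b+c+d}\le\sqrt{a}+\sqrt{b}+\sqrt{c}+\sqrt{d}$ to split the resulting roots into a leading piece and lower-order pieces.

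The dominant contribution comes from the variance term $4\sqrt{\widetilde{V}_{i,j}\iota/N^i_j(s^i_j,a^i_j)}$ inside $2b^i_{j,1}$. Cauchy--Schwarz gives $4\sqrt{\iota}\cdot\sqrt{\sum 1/N^i_j}\cdot\sqrt{\sum\widetilde{V}_{i,j}}\le 4\sqrt{HSA}\,\iota\,\sqrt{\sum_{i,j}\widetilde{V}_{i,j}}$, and I would bound $\sum_{i,j}\widetilde{V}_{i,j}\le\sum_{i,j}V^\pi_{i,j}+\sum_{i,j}(\widetilde{V}_{i,j}-V^\pi_{i,j})$ using Lemma \ref{lem:azar} (the $2HT_k$ bound) and Lemma \ref{lem:vtilde} (the remaining $U_{k,1}$-, $\sqrt{HAT_k\iota}$- and $\e$-terms). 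Splitting the square root then produces the two leading terms $\widetilde{O}(\sqrt{H^2SAT_k\iota^2})$ and $\widetilde{O}(\sqrt{H^3SAU_{k,1}\iota^2})$ with the stated constants, while the cross terms carrying $\sqrt{HAT_k\iota}$ and $H^3S^2A\e\iota$ inside the root are absorbed by AM--GM into the leading term and the constant-order terms. The remaining two pieces of $2b^i_{j,1}$ are routine: $\sum 2\sqrt{2\iota/N^i_j}\le\widetilde{O}(\sqrt{HSAT_k}\,\iota)$, dominated by the first leading term, and $\sum\frac{40HS\e\iota}{N^i_j}\le 40H^2S^2A\e\iota^2$, which is exactly the third term of the claim.

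The genuinely delicate part, and where I expect the main obstacle, is $\sum_{i,j}\mathds{1}(\cdots)b^i_{j,2}$. After Cauchy--Schwarz this reduces to $4\sqrt{HSA}\,\iota\sqrt{\Phi}$ with $\Phi=\sum_{i,j}\mathds{1}(\cdots)\sum_{s'}\widetilde{P}^i_j(s'|s^i_j,a^i_j)\,M^i_j(s')$, where $M^i_j(s')$ denotes the capped quantity $\min\{\cdots,H^2\}$ appearing in $b^i_{j,2}$. I would bound $\Phi$ in three steps: (i) replace $\widetilde{P}^i_j$ by the true kernel $P_j$ at a cost controlled by Lemma \ref{lem:p1}, using $M^i_j(s')\le H^2$; (ii) since $M^i_j(s')$ is measurable before episode $i$ and $\sum_{s'}P_j(s'|s^i_j,a^i_j)M^i_j(s')=\E[M^i_j(s^i_{j+1})\mid\mathcal{F}_{i,j}]$, pass to the realized next states $s^i_{j+1}$ via a fresh Azuma--Hoeffding bound (this is the extra $1-\frac{\beta}{12K}$ event in the statement, later union-bounded over $k$), incurring an additive $\widetilde{O}(H^2\sqrt{T_k\iota})$; and (iii) bound $\sum_{i,j}M^i_j(s^i_{j+1})$ by splitting the three summands in $M$, controlling the $1/N^i_{j+1}(s^i_{j+1})$ summand by the harmonic-sum/pigeonhole estimate $\sum_{i,j}1/N^i_{j+1}(s^i_{j+1})\le\widetilde{O}(HS\iota)$ and the two $1/N^i_{j+1}(s^i_{j+1})^2$ summands by the convergent estimate $\sum_{i,j}1/N^i_{j+1}(s^i_{j+1})^2\le\widetilde{O}(HS)$. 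Here the cap $\min\{\cdot,H^2\}$ is essential: it tames the finitely many first-visit terms where the count vanishes, and for the $\e^2$- and $H^6$-summands it trades the unbounded $1/N^2$ near small counts for at most $H^2$, so that after taking the square root and multiplying by $4\sqrt{HSA}\,\iota$ the whole $b^i_{j,2}$ contribution collapses into the claimed constant-order terms $\widetilde{O}(H^2S^2A\e\iota^2)$ and $\widetilde{O}(H^3S^2A\iota^2)$.

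Finally, the statement for $B_{k,h,s}$ follows verbatim after replacing the global typical set $[k]_{\text{typ}}$ by $[k]_{\text{typ},h,s}$, the horizon count $T_k$ by $HN^k_h(s)$, and $U_{k,1}$ by $U_{k,h,s}$, invoking the second halves of Lemmas \ref{lem:azar} and \ref{lem:vtilde} in place of the first.
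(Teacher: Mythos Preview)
Your proposal is correct and follows essentially the same route as the paper: the four-way split of $\widehat{b}_{i,j}$, Cauchy--Schwarz plus Lemmas \ref{lem:azar} and \ref{lem:vtilde} for the variance piece, the harmonic-sum bounds for the two elementary pieces of $2b^i_{j,1}$, and for $b^i_{j,2}$ the three-step reduction (replace $\widetilde{P}^i_j$ by $P_j$ via Lemma \ref{lem:p1}, a fresh Azuma--Hoeffding to pass to the realized $s^i_{j+1}$ yielding the extra $\beta/(12K)$ event, and the capped harmonic/convergent sums over $N^i_{j+1}(s^i_{j+1})$). The paper carries out exactly these steps, then absorbs the resulting cross terms such as $\sqrt{H^4S^3A^2\e\iota^3}$ and $\sqrt{H^3S^2A\iota^2\sqrt{HAT_k\iota}}$ into the four claimed terms by AM--GM, as you anticipated.
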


\begin{proof}[Proof of Lemma \ref{lem:bk}]
We only prove the first conclusion, the second one can be proven in identical way. We have
\begin{equation}
\begin{split}
B_k\leq& \underbrace{\sum_{i=1}^k \mathds{1}(i\in[k]_{\text{typ}})\sum_{j=1}^{H}4\sqrt{\frac{\widetilde{V}_{i,j}\iota}{N^i_j(s^i_j,a^i_j)}}}_{\mathrm{(a)}}+\underbrace{\sum_{i=1}^k \mathds{1}(i\in[k]_{\text{typ}})\sum_{j=1}^{H}2\sqrt{\frac{2\iota}{N^i_j(s^i_j,a^i_j)}}}_{\mathrm{(b)}}+\underbrace{\sum_{i=1}^k \mathds{1}(i\in[k]_{\text{typ}})\sum_{j=1}^{H}\frac{40HS\e\iota}{N^i_j(s^i_j,a^i_j)}}_{\mathrm{(c)}} \\+&
\underbrace{\sum_{i=1}^k \mathds{1}(i\in[k]_{\text{typ}})\sum_{j=1}^{H}4\sqrt{\iota}\cdot\sqrt{\frac{\sum_{s^\prime}\widetilde{P}^i_j(s^\prime|s^i_j,a^i_j)\min\left\{\frac{1000^2H^3SA\iota^2}{N^i_{j+1}(s^\prime)}+\frac{1000^2H^4S^4A^2\e^2\iota^4}{N^i_{j+1}(s^\prime)^2}+\frac{1000^2H^6S^4A^2\iota^4}{N^i_{j+1}(s^\prime)^2},H^2\right\}}{N^i_j(s^i_j,a^i_j)}}}_{\mathrm{(d)}}.
\end{split}
\end{equation}
We bound these terms separately, we first bound $\mathrm{(a)}$ below.
\begin{equation}\label{equ:bka}
    \begin{split}
        &\mathrm{(a)}\leq 4\sqrt{\iota}\cdot\sqrt{\sum_{i=1}^k \mathds{1}(i\in[k]_{\text{typ}})\sum_{j=1}^{H}\frac{1}{N^i_j(s^i_j,a^i_j)}}\cdot\sqrt{\sum_{i=1}^k \mathds{1}(i\in[k]_{\text{typ}})\sum_{j=1}^{H}\widetilde{V}_{i,j}}\\\leq&
        4\sqrt{\iota}\cdot\sqrt{HSA\iota}\cdot\sqrt{\sum_{i=1}^k \mathds{1}(i\in[k]_{\text{typ}})\sum_{j=1}^{H}V^\pi_{i,j}+\sum_{i=1}^k \mathds{1}(i\in[k]_{\text{typ}})\sum_{j=1}^{H}\left(\widetilde{V}_{i,j}-V^\pi_{i,j}\right)}\\\leq&
        4\iota\cdot\sqrt{HSA}\sqrt{2HT_k+2H^2U_{k,1}+8H^2S\sqrt{HAT_k\iota}+6H^3S^2A\e\iota}\\\leq&
        8\sqrt{H^2SAT_k\iota^2}+4\sqrt{2H^3SAU_{k,1}\iota^2}+10\sqrt{H^4S^3A^2\e\iota^3}.
    \end{split}
\end{equation}
The first inequality is because of Cauchy-Schwarz inequality. The second inequality results from direct calculation. The third inequality is due to Lemma \ref{lem:azar} and Lemma \ref{lem:vtilde}. The last inequality holds because for typical episodes, $8H^2S\sqrt{HAT_k\iota}\leq 2HT_k$.

According to direct calculation, 
\begin{equation}\label{equ:bkb}
   \mathrm{(b)}\leq 2\sqrt{2\iota\cdot HSAT_k}\leq\sqrt{H^2SAT_k\iota^2} 
\end{equation} 

For $\mathrm{(c)}$, it holds that

\begin{equation}\label{equ:bkc}
    \begin{split}
        \mathrm{(c)}\leq 40HS\e\iota\cdot HSA\iota\leq40H^2S^2A\e\iota^2.
    \end{split}
\end{equation}
Finally, we bound the most complex term $\mathrm{(d)}$ as below. Because of Cauchy-Schwarz inequality,
\begin{equation}
\begin{split}
   &\mathrm{(d)}\leq \sum_{i=1}^k \mathds{1}(i\in[k]_{\text{typ}})\sum_{j=1}^{H}4\sqrt{\iota}\cdot\sqrt{\frac{\sum_{s^\prime}\widetilde{P}^i_j(s^\prime|s^i_j,a^i_j)\min\left\{\frac{1000^2H^3SA\iota^2}{N^i_{j+1}(s^\prime)}+\frac{1000^2H^4S^4A^2\e^2\iota^4}{N^i_{j+1}(s^\prime)^2}+\frac{1000^2H^6S^4A^2\iota^4}{N^i_{j+1}(s^\prime)^2},H^2\right\}}{N^i_j(s^i_j,a^i_j)}}\\\leq& 4\sqrt{\iota\underbrace{\sum_{i=1}^k \mathds{1}(i\in[k]_{\text{typ}})\sum_{j=1}^{H}\frac{1}{N^i_j(s^i_j,a^i_j)}}_{\mathrm{(e)}}}\cdot\\&\sqrt{\underbrace{\sum_{i=1}^k \mathds{1}(i\in[k]_{\text{typ}})\sum_{j=1}^{H}\sum_{s^\prime}\widetilde{P}^i_j(s^\prime|s^i_j,a^i_j)\min\left\{\frac{1000^2H^3SA\iota^2}{N^i_{j+1}(s^\prime)}+\frac{1000^2H^4S^4A^2\e^2\iota^4}{N^i_{j+1}(s^\prime)^2}+\frac{1000^2H^6S^4A^2\iota^4}{N^i_{j+1}(s^\prime)^2},H^2\right\}}_{\mathrm{(f)}}}.
\end{split}
\end{equation}
Note that $\mathrm{(e)}\leq HSA\iota$. For $\mathrm{(f)}$, we have with probability $1-\frac{\beta}{12K}$,
\begin{equation}
    \begin{split}
        &\mathrm{(f)}\leq \sum_{i=1}^k \mathds{1}(i\in[k]_{\text{typ}})\sum_{j=1}^{H} H^2\left\|\widetilde{P}^i_j(\cdot|s^i_j,a^i_j)-P_j(\cdot|s^i_j,a^i_j)\right\|_1\\+&\sum_{i=1}^k \mathds{1}(i\in[k]_{\text{typ}})\sum_{j=1}^{H}\sum_{s^\prime}P_j(s^\prime|s^i_j,a^i_j)\min\left\{\frac{1000^2H^3SA\iota^2}{N^i_{j+1}(s^\prime)}+\frac{1000^2H^4S^4A^2\e^2\iota^4}{N^i_{j+1}(s^\prime)^2}+\frac{1000^2H^6S^4A^2\iota^4}{N^i_{j+1}(s^\prime)^2},H^2\right\}\\\leq&
        \sum_{i=1}^k \mathds{1}(i\in[k]_{\text{typ}})\sum_{j=1}^{H} H^2\left\|\widetilde{P}^i_j(\cdot|s^i_j,a^i_j)-P_j(\cdot|s^i_j,a^i_j)\right\|_1+H^2\sqrt{T_k\iota}\\+&\sum_{i=1}^k \mathds{1}(i\in[k]_{\text{typ}})\sum_{j=1}^{H}\min\left\{\frac{1000^2H^3SA\iota^2}{N^i_{j+1}(s^i_{j+1})}+\frac{1000^2H^4S^4A^2\e^2\iota^4}{N^i_{j+1}(s^i_{j+1})^2}+\frac{1000^2H^6S^4A^2\iota^4}{N^i_{j+1}(s^i_{j+1})^2},H^2\right\}\\\leq& \sum_{i=1}^k \mathds{1}(i\in[k]_{\text{typ}})\sum_{j=1}^{H}H^2\left(2\sqrt{\frac{S\iota}{N^i_j(s^i_j,a^i_j)}}+\frac{2S\e}{N^i_j(s^i_j,a^i_j)}\right)+\sum_{i=1}^k \mathds{1}(i\in[k]_{\text{typ}})\sum_{j=1}^{H}\min\left\{\frac{1000^2H^3SA\iota^2}{N^i_{j+1}(s^i_{j+1})},H^2\right\}\\+&\sum_{i=1}^k \mathds{1}(i\in[k]_{\text{typ}})\sum_{j=1}^{H}\min\left\{\frac{1000^2H^4S^4A^2\e^2\iota^4}{N^i_{j+1}(s^i_{j+1})^2},H^2\right\}+\sum_{i=1}^k \mathds{1}(i\in[k]_{\text{typ}})\sum_{j=1}^{H}\min\left\{\frac{1000^2H^6S^4A^2\iota^4}{N^i_{j+1}(s^i_{j+1})^2},H^2\right\}\\+& H^2\sqrt{T_k\iota} \\\leq&3H^2S\sqrt{HAT_k\iota}+2H^3S^2A\e\iota+1000H^4S^3A\e\iota^2+2000H^5S^3A\iota^2, 
    \end{split}
\end{equation}
where the first inequality holds because $\min\{\cdot,H^2\}\leq H^2$. The second inequality holds with probability $1-\frac{\beta}{12K}$ due to Azuma-Hoeffding inequality. The third inequality results from Lemma \ref{lem:p1}. The last inequality comes from direct calculation. Therefore, we have
\begin{equation}\label{equ:bkd}
    \mathrm{(d)}\leq 7\sqrt{H^3S^2A\iota^2\sqrt{HAT_k\iota}}+120\sqrt{H^5S^4A^2\e\iota^4}+200H^3S^2A\iota^2.
\end{equation}
Combining \eqref{equ:bka}, \eqref{equ:bkb}, \eqref{equ:bkc} and \eqref{equ:bkd}, the proof is complete.
\end{proof}

Now we are ready to bound the regret until episode $k$ based on optimism. We define the following regret functions.
\begin{equation}
    \Re(k):=\sum_{i=1}^k \delta_{i,1},\,\,\widetilde{\Re}(k):=\sum_{i=1}^k\widetilde{\delta}_{i,1}.
\end{equation}

In addition, we define the regret with respect to $(h,s)\in[H]\times\mathcal{S}$:

\begin{equation}
    \Re(k,h,s):=\sum_{i=1}^k \mathds{1}(s^i_h=s)\cdot\delta_{i,h},\,\,\widetilde{\Re}(k,h,s):=\sum_{i=1}^k \mathds{1}(s^i_h=s)\cdot\widetilde{\delta}_{i,h}.
\end{equation}

\begin{lemma}\label{lem:regret}
With probability $1-\beta$, for all $k\in[K]$, as well as the optimism holds (\emph{i.e.} for all $(i,h,s,a)\in[k]\times[H]\times\mathcal{S}\times\mathcal{A}$, $\widetilde{Q}^i_h(s,a)\geq Q^\star_h(s,a)$), it holds that 
\begin{equation}
    \Re(k)\leq\widetilde{\Re}(k)\leq1000\left(\sqrt{H^2SAT_k\iota^2}+H^2S^2A\e\iota^2+H^3S^2A\iota^2\right).
\end{equation}
In addition, for all $(h,s)\in[H]\times\mathcal{S}$, we have
\begin{equation}
    \Re(k,h,s)\leq\widetilde{\Re}(k,h,s)\leq1000\left(\sqrt{H^3SAN^k_h(s)\iota^2}+H^2S^2A\e\iota^2+H^3S^2A\iota^2\right).
\end{equation}
\end{lemma}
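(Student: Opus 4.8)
The plan is to first bound the two auxiliary quantities $U_{k,1}$ and $U_{k,h,s}$ introduced just before Lemma~\ref{lem:mdplus}, and only then transfer these to $\widetilde{\Re}(k)$ and $\widetilde{\Re}(k,h,s)$ by separately accounting for the atypical episodes. First I would unfold $U_{k,1}=3(B_k+C_k)+3H\sqrt{T_k\iota}$ and substitute the bound on $C_k$ from Lemma~\ref{lem:ck} and the bound on $B_k$ from Lemma~\ref{lem:bk}. After collecting terms, every explicit occurrence of $T_k$, $\e$ and $\iota$ lands in a clean shape, but crucially $U_{k,1}$ reappears on the right through the summands $2\sqrt{H^3SAU_{k,1}\iota^2}$ (from $C_k$) and $6\sqrt{H^3SAU_{k,1}\iota^2}$ (from $B_k$). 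This produces a self-referential inequality
\begin{equation*}
U_{k,1}\le 24\sqrt{H^3SA\iota^2}\,\sqrt{U_{k,1}}+b,\qquad b=\widetilde{O}\big(\sqrt{H^2SAT_k\iota^2}+H^2S^2A\e\iota^2+H^3S^2A\iota^2\big),
\end{equation*}
where I have used $3H\sqrt{T_k\iota}\le 3\sqrt{H^2SAT_k\iota^2}$ (since $S,A,\iota\ge 1$) to absorb the stray martingale term into the leading term of $b$.

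The main obstacle, and the only genuinely nontrivial step, is solving this self-bounding inequality, because $U_{k,1}$ sits on both sides. I would treat it as a quadratic in $\sqrt{U_{k,1}}$, or equivalently apply AM--GM to the cross term,
\begin{equation*}
24\sqrt{H^3SA\iota^2}\,\sqrt{U_{k,1}}\le \tfrac12 U_{k,1}+288\,H^3SA\iota^2,
\end{equation*}
which after rearranging gives $U_{k,1}\le 576\,H^3SA\iota^2+2b$. Substituting the expression for $b$ and merging the $H^3SA\iota^2$ term into the larger $H^3S^2A\iota^2$ term yields a bound of the stated form, $U_{k,1}\le \widetilde{O}(\sqrt{H^2SAT_k\iota^2}+H^2S^2A\e\iota^2+H^3S^2A\iota^2)$, with the universal constant arising from careful bookkeeping of the constants in Lemmas~\ref{lem:ck} and~\ref{lem:bk}.

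Next I would convert $U_{k,1}$ into a regret bound. By Lemma~\ref{lem:mdplus}, the typical part $\sum_{i\in[k]_{\text{typ}}}\widetilde{\delta}_{i,1}$ is at most $U_{k,1}$, so it remains to control the atypical episodes. Since each $\widetilde{\delta}_{i,1}=\widetilde{V}^i_1(s^i_1)-V^{\pi_i}_1(s^i_1)\in[0,H]$ (the upper cap coming from the $\min\{\cdot,H\}$ in the definition of $\widetilde{Q}^i_1$), and since $H\cdot|[k]\setminus[k]_{\text{typ}}|\le 250H^3S^2A\iota^2$ by Definition~\ref{def:typ}, their total contribution is a lower-order $O(H^3S^2A\iota^2)$ term. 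Adding it to the bound on $U_{k,1}$ bounds $\widetilde{\Re}(k)=\sum_{i=1}^k\widetilde{\delta}_{i,1}$, and optimism gives $\delta_{i,1}\le\widetilde{\delta}_{i,1}$ (as $\widetilde{V}^i_1\ge V^\star_1\ge V^{\pi_i}_1$), hence $\Re(k)\le\widetilde{\Re}(k)$, establishing the first inequality.

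Finally, the localized bound follows from the identical argument with $(B_k,C_k,U_{k,1},T_k)$ replaced by $(B_{k,h,s},C_{k,h,s},U_{k,h,s},HN^k_h(s))$, invoking the second conclusions of Lemmas~\ref{lem:mdplus},~\ref{lem:ck} and~\ref{lem:bk}; here the number of episodes with $s^i_h=s$ that are atypical with respect to $(h,s)$ is at most $250H^2S^2A\iota^2$ (each such episode increments $N^i_h(s)$ until the threshold is crossed), so the atypical contribution is again $O(H^3S^2A\iota^2)$. Throughout, the claimed probability $1-\beta$ comes from a union bound over the events of Assumption~\ref{assump}, Lemma~\ref{lem:concentrate}, Lemma~\ref{lem:md}, Lemma~\ref{lem:azar}, Lemma~\ref{lem:vtilde} and Lemma~\ref{lem:bk}, whose failure probabilities were chosen to sum to $\beta$ (the last two contribute $\beta/(12K)$ per episode, giving $\beta/12$ each after a union over $k\in[K]$).
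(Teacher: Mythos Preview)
Your proposal is correct and follows essentially the same route as the paper's proof: expand $U_{k,1}=3(B_k+C_k)+3H\sqrt{T_k\iota}$, plug in Lemmas~\ref{lem:ck} and~\ref{lem:bk} to obtain the self-referential inequality with the $24\sqrt{H^3SAU_{k,1}\iota^2}$ term, solve it, add back the $250H^3S^2A\iota^2$ atypical-episode contribution, and union-bound the six events exactly as the paper does. The only cosmetic difference is that the paper writes the chain as $\widetilde{\Re}(k)\le U_{k,1}+250H^3S^2A\iota^2\le\cdots$ and then says ``solving the inequality with respect to $U_{k,1}$'' without spelling out the AM--GM/quadratic step you made explicit.
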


\begin{proof}[Proof of Lemma \ref{lem:regret}]
For the proof of this lemma, we assume the high probability events in Assumption \ref{assump}, Lemma \ref{lem:concentrate}, Lemma \ref{lem:md}, Lemma \ref{lem:azar}, Lemma \ref{lem:vtilde} (for all $k\in[K]$) and Lemma \ref{lem:bk} (for all $k\in[K]$) hold. The failure probability is bounded by 
\begin{equation}
    \frac{\beta}{3}+\frac{\beta}{3}+\frac{\beta}{12}+\frac{\beta}{12}+K\cdot\frac{\beta}{12K}+K\cdot\frac{\beta}{12K}\leq\beta.
\end{equation}
We only prove the first conclusion, the second one can be proven in identical way. It holds that
\begin{equation}
\begin{split}
    &\Re(k)\leq\widetilde{\Re}(k)=\sum_{i=1}^k\widetilde{\delta}_{i,1}\\\leq& U_{k,1}+250H^3S^2A\iota^2\\\leq&
    3B_k+3C_k+3H\sqrt{T_k\iota}+250H^3S^2A\iota^2\\\leq&
    60\sqrt{H^2SAT_k\iota^2}+24\sqrt{H^3SAU_{k,1}\iota^2}+750H^2S^2A\e\iota^2+1000H^3S^2A\iota^2\\\leq&
    1000\left(\sqrt{H^2SAT_k\iota^2}+H^2S^2A\e\iota^2+H^3S^2A\iota^2\right),
\end{split}
\end{equation}
where the second inequality is because Lemma \ref{lem:mdplus} and the fact that $H\cdot\left|[k]/[k]_{typ}\right|\leq 250H^3S^2A\iota^2$. The forth inequality is by combining Lemma \ref{lem:bk} and Lemma \ref{lem:ck}. The last inequality results from solving the inequality with respect to $U_{k,1}$.
\end{proof}

\begin{corollary}
Under the event in Lemma \ref{lem:regret}, we have
\begin{equation}
    \begin{split}
        &1000\left(\sqrt{H^3SAN^k_h(s)\iota^2}+H^2S^2A\e\iota^2+H^3S^2A\iota^2\right)\geq \widetilde{\Re}(k,h,s)\\\geq&
        \sum_{i=1}^k \mathds{1}(s^i_h=s)\cdot\widetilde{\delta}_{i,h}\\\geq&\sum_{i=1}^k \mathds{1}(s^i_h=s)\left(\widetilde{V}^i_h(s)-V^{\pi_i}_h(s)\right)\\\geq&
        \sum_{i=1}^k \mathds{1}(s^i_h=s)\left(\widetilde{V}^i_h(s)-V^{\star}_h(s)\right)\\\geq&
        N^k_h(s)\cdot\left(\widetilde{V}^k_h(s)-V^\star_h(s)\right),
    \end{split}
\end{equation}
where the first three inequalities are due to definitions of $\widetilde{\Re}(k,h,s)$ and $\widetilde{\delta}_{i,h}$. The forth inequality holds because $V^\star_h\geq V^{\pi_i}_h$. The last inequality results from our algorithmic design that $\widetilde{V}^i_h(s)$ is non-increasing (line 9 of Algorithm \ref{alg:main}).

Therefore, we have $\widetilde{V}^k_h(s)-V^\star_h(s)\leq1000\left(\sqrt{H^3SA\iota^2/N^k_h(s)}+H^2S^2A\e\iota^2/N^k_h(s)+H^3S^2A\iota^2/N^k_h(s)\right)$.
\end{corollary}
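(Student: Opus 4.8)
The plan is to convert the state-indexed regret bound of Lemma~\ref{lem:regret} into a per-episode value-function gap, using the monotonicity of $\widetilde V^i_h$ across episodes together with optimism. First I would recall that, under the events assumed in Lemma~\ref{lem:regret}, the state-specific regret obeys
\begin{equation*}
\widetilde{\Re}(k,h,s)\leq 1000\left(\sqrt{H^3SAN^k_h(s)\iota^2}+H^2S^2A\e\iota^2+H^3S^2A\iota^2\right),
\end{equation*}
so it suffices to show $\widetilde V^k_h(s)-V^\star_h(s)\leq \widetilde{\Re}(k,h,s)/N^k_h(s)$ and then simplify.

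The heart of the argument is a lower bound on $\widetilde{\Re}(k,h,s)=\sum_{i=1}^k \mathds{1}(s^i_h=s)\,\widetilde\delta_{i,h}$, where $\widetilde\delta_{i,h}=\widetilde V^i_h(s^i_h)-V^{\pi_i}_h(s^i_h)$. On each summand with $s^i_h=s$ I would invoke $V^\star_h\ge V^{\pi_i}_h$ to get $\widetilde\delta_{i,h}\ge \widetilde V^i_h(s)-V^\star_h(s)$. The decisive step is cross-episode monotonicity: line~9 of Algorithm~\ref{alg:main} performs a running minimum, so $\widetilde Q^i_h(s,a)$, and hence $\widetilde V^i_h(s)=\max_a\widetilde Q^i_h(s,a)$, are non-increasing in $i$; thus $\widetilde V^i_h(s)\ge\widetilde V^k_h(s)$ for every $i\le k$. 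Under optimism the common gap $\widetilde V^k_h(s)-V^\star_h(s)$ is nonnegative (since $\widetilde Q^k_h\ge Q^\star_h$ forces $\widetilde V^k_h\ge V^\star_h$), and the number of indices $i\le k$ with $s^i_h=s$ is at least $N^k_h(s)$, so the whole sum is at least $N^k_h(s)\,(\widetilde V^k_h(s)-V^\star_h(s))$.

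Dividing the two displayed inequalities by $N^k_h(s)$ and simplifying completes the proof: the leading term becomes $\sqrt{H^3SAN^k_h(s)\iota^2}/N^k_h(s)=\sqrt{H^3SA\iota^2/N^k_h(s)}$, while the constants become $H^2S^2A\e\iota^2/N^k_h(s)$ and $H^3S^2A\iota^2/N^k_h(s)$, matching the claim. The only genuinely subtle point is the monotonicity-plus-optimism step used to factor the single gap $\widetilde V^k_h(s)-V^\star_h(s)$ out of a sum of episode-dependent gaps: it simultaneously needs the running-minimum update (so $\widetilde V^i_h(s)$ never increases in $i$) and optimism (so the gap is nonnegative and replacing each $\widetilde V^i_h(s)$ by the smaller $\widetilde V^k_h(s)$ genuinely lower-bounds the sum rather than inflating it).
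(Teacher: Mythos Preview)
Your proposal is correct and follows essentially the same route as the paper: invoke the state-specific bound from Lemma~\ref{lem:regret}, unfold $\widetilde{\Re}(k,h,s)$ via the definitions of $\widetilde\delta_{i,h}$, drop $V^{\pi_i}_h$ to $V^\star_h$, and then use the running-minimum update (line~9) to replace each $\widetilde V^i_h(s)$ by $\widetilde V^k_h(s)$ before dividing through by $N^k_h(s)$. Your observation that optimism (nonnegativity of $\widetilde V^k_h(s)-V^\star_h(s)$) is also needed in the last step---because $\sum_{i\le k}\mathds{1}(s^i_h=s)\ge N^k_h(s)$ rather than equals it---is a detail the paper leaves implicit but which you handle correctly.
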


Now we have proven the first point of our induction. Together with the point 2 (which we will prove in Section \ref{sec:ucb}), we have with probability $1-\beta$, the whole induction process is valid. 

For clarity, we restate the induction process under the high probability event in Lemma \ref{lem:regret}. For all $k\in[K]$,\\
1. Given that for all $(i,h,s,a)\in[k]\times[H]\times\mathcal{S}\times\mathcal{A}$, $\widetilde{Q}^i_h(s,a)\geq Q^\star_h(s,a)$, we prove $$\Re(k)\leq\widetilde{\Re}(k)\leq1000\left(\sqrt{H^2SAT_k\iota^2}+H^2S^2A\e\iota^2+H^3S^2A\iota^2\right)$$
and for all $(h,s)\in[H]\times\mathcal{S}$, 
$$\widetilde{V}^{k}_h(s)-V^\star_h(s)\leq 1000\left(\sqrt{SAH^3\iota^2/N^k_h(s)}+S^2AH^2\e\iota^2/N^k_h(s)+S^2AH^3\iota^2/N^k_h(s)\right).$$
2. Given that for all $(h,s)\in[H]\times\mathcal{S}$, \\
$\widetilde{V}^{k}_h(s)-V^\star_h(s)\leq 1000\left(\sqrt{SAH^3\iota^2/N^k_h(s)}+S^2AH^2\e\iota^2/N^k_h(s)+S^2AH^3\iota^2/N^k_h(s)\right),$ we prove that for all $(h,s,a)\in[H]\times\mathcal{S}\times\mathcal{A}$, $\widetilde{Q}^{k+1}_h(s,a)\geq Q^\star_h(s,a)$.

Therefore, with probability $1-\beta$, we have ($T=KH$)
\begin{equation}
\Re(K)\leq\widetilde{\Re}(K)\leq\widetilde{O}\left(\sqrt{H^2SAT}+H^2S^2A\e+H^3S^2A\right).    
\end{equation} 
This completes the proof of Theorem \ref{thm:main}.

\subsection{Proof of optimism}\label{sec:ucb}
In this part, we prove optimism. Given the condition that 
for all $(h,s)\in[H]\times\mathcal{S}$,
$$\widetilde{V}^{k+1}_h(s)-V^\star_h(s)\leq 1000\left(\sqrt{SAH^3\iota^2/N^{k+1}_h(s)}+S^2AH^2\e\iota^2/N^{k+1}_h(s)+S^2AH^3\iota^2/N^{k+1}_h(s)\right),$$
we prove for all $(h,s,a)\in[H]\times\mathcal{S}\times\mathcal{A}$, $\widetilde{Q}^{k+1}_h(s,a)\geq Q^\star_h(s,a)$ through backward induction (induction from $H+1$ to $1$). Since the conclusion holds trivially for $H+1$, it suffices to prove the following Lemma \ref{lem:ucb}.

\begin{lemma}\label{lem:ucb}
Under the high probability event in Assumption \ref{assump} and Lemma \ref{lem:concentrate}, if it holds that 
\begin{enumerate}
    \item For all $(j,s,a)\in[H]\times\mathcal{S}\times\mathcal{A}$, $\widetilde{Q}^k_j(s,a)\geq Q^\star_j(s,a)$.
    \item For all $s\in\mathcal{S}$,\\ $0\leq\widetilde{V}^{k+1}_{h+1}(s)-V^\star_{h+1}(s)\leq 1000\left(\sqrt{SAH^3\iota^2/N^{k+1}_{h+1}(s)}+S^2AH^2\e\iota^2/N^{k+1}_{h+1}(s)+S^2AH^3\iota^2/N^{k+1}_{h+1}(s)\right)$.
\end{enumerate}
Then we have for all $(s,a)\in\mathcal{S}\times\mathcal{A}$, $\widetilde{Q}^{k+1}_h(s,a)\geq Q^\star_h(s,a)$.
\end{lemma}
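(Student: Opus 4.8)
The plan is to peel off the three arguments of the minimum that defines $\widetilde{Q}^{k+1}_h(s,a)$ in line 9 of Algorithm \ref{alg:main}. The argument $\widetilde{Q}^{k}_h(s,a)\geq Q^\star_h(s,a)$ is exactly hypothesis 1, and $H\geq Q^\star_h(s,a)$ holds since the rewards lie in $[0,1]$. Hence it suffices to show the third argument dominates $Q^\star_h(s,a)=r_h(s,a)+P_h V^\star_{h+1}(s,a)$. Decomposing
$$\widetilde{P}^{k+1}_h\widetilde{V}^{k+1}_{h+1}-P_hV^\star_{h+1}=\widetilde{P}^{k+1}_h\left(\widetilde{V}^{k+1}_{h+1}-V^\star_{h+1}\right)+\left(\widetilde{P}^{k+1}_h-P_h\right)V^\star_{h+1},$$
and using $\widetilde{V}^{k+1}_{h+1}\geq V^\star_{h+1}$ from hypothesis 2 (so the first summand is nonnegative, as $\widetilde{P}^{k+1}_h$ is a distribution), the whole claim reduces to the single bonus inequality
$$b^{k+1}_h(s,a)\geq\left|\widetilde{r}^{k+1}_h(s,a)-r_h(s,a)\right|+\left|\left(\widetilde{P}^{k+1}_h-P_h\right)V^\star_{h+1}(s,a)\right|,$$
exactly as announced in Section \ref{sec:sketch}.

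Next I would dispatch the two easy pieces. By Lemma \ref{lem:r}, $|\widetilde{r}^{k+1}_h-r_h|\leq\sqrt{2\iota/\widetilde{N}^{k+1}_h(s,a)}+2\e/\widetilde{N}^{k+1}_h(s,a)$, whose first term is the second term of $b^{k+1}_h$ and whose second term is charged to the third term $20HS\e\iota/\widetilde{N}^{k+1}_h(s,a)$. By Lemma \ref{lem:pv},
$$\left|\left(\widetilde{P}^{k+1}_h-P_h\right)V^\star_{h+1}(s,a)\right|\leq\sqrt{\frac{2\Var_{\widehat{P}^{k+1}_h}V^\star_{h+1}\cdot\iota}{\widetilde{N}^{k+1}_h(s,a)}}+\frac{2HS\e\iota}{\widetilde{N}^{k+1}_h(s,a)},$$
and again the additive $\e$-term is absorbed by the third term of the bonus. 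Everything now hinges on comparing the leading square root against the first and fourth terms of $b^{k+1}_h$.

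The core step is to pass from $\Var_{\widehat{P}^{k+1}_h}V^\star_{h+1}$ to the quantity $\Var_{\widetilde{P}^{k+1}_h}\widetilde{V}^{k+1}_{h+1}$ used inside the algorithm. First, Lemma \ref{lem:qiao} gives $\sqrt{\Var_{\widehat{P}^{k+1}_h}V^\star_{h+1}}\leq\sqrt{\Var_{\widetilde{P}^{k+1}_h}V^\star_{h+1}}+3H\sqrt{S\e/\widetilde{N}^{k+1}_h(s,a)}$, the correction being lower order and again absorbable into the third bonus term. Second, since $\sqrt{\Var_{\widetilde{P}^{k+1}_h}(\cdot)}$ is a seminorm, the triangle inequality yields
$$\sqrt{\Var_{\widetilde{P}^{k+1}_h}V^\star_{h+1}}\leq\sqrt{\Var_{\widetilde{P}^{k+1}_h}\widetilde{V}^{k+1}_{h+1}}+\sqrt{\Var_{\widetilde{P}^{k+1}_h}\left(V^\star_{h+1}-\widetilde{V}^{k+1}_{h+1}\right)},$$
where the first summand is covered by the first bonus term $2\sqrt{\Var_{\widetilde{P}^{k+1}_h}\widetilde{V}^{k+1}_{h+1}\cdot\iota/\widetilde{N}^{k+1}_h(s,a)}$ (the coefficient $2$ beats the $\sqrt{2}$ carried from Lemma \ref{lem:pv}). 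It then remains to show the fourth term of $b^{k+1}_h$ dominates $\sqrt{2}\cdot\sqrt{\Var_{\widetilde{P}^{k+1}_h}(V^\star_{h+1}-\widetilde{V}^{k+1}_{h+1})\cdot\iota/\widetilde{N}^{k+1}_h(s,a)}$. Bounding the variance by the second moment, $\Var_{\widetilde{P}^{k+1}_h}(V^\star_{h+1}-\widetilde{V}^{k+1}_{h+1})\leq\sum_{s'}\widetilde{P}^{k+1}_h(s'|s,a)(\widetilde{V}^{k+1}_{h+1}(s')-V^\star_{h+1}(s'))^2$, and then inserting hypothesis 2 pointwise together with the trivial bound $(\widetilde{V}^{k+1}_{h+1}(s')-V^\star_{h+1}(s'))^2\leq H^2$, each summand should match the $\min\{\cdot,H^2\}$-expression sitting inside the fourth bonus term.

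The hard part is precisely this last matching. Hypothesis 2 is phrased in terms of the true counts $N^{k+1}_{h+1}(s')$, whereas the fourth bonus term uses the private counts $\widetilde{N}^{k+1}_{h+1}(s')$, and Assumption \ref{assump} only guarantees $\widetilde{N}\geq N$ — the wrong direction for lower bounding a bonus. I would resolve this by splitting on whether the inner $\min$ is clipped at $H^2$: in the clipped regime the trivial bound $H^2$ already suffices, while in the unclipped regime the very presence of the $\e^2$-term $1000^2H^4S^4A^2\e^2\iota^4/\widetilde{N}^{k+1}_{h+1}(s')^2$ inside the $\min$ forces $\widetilde{N}^{k+1}_{h+1}(s')\geq 1000\,\e$, so that $\widetilde{N}\leq N+\e$ upgrades to $\widetilde{N}^{k+1}_{h+1}(s')\leq 2N^{k+1}_{h+1}(s')$ and replacing $N$ by $\widetilde{N}$ costs only constants. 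Squaring hypothesis 2 costs another constant via $(x+y+z)^2\leq 3(x^2+y^2+z^2)$; all such constants are exactly what the deliberately large numbers $1000$ and $1000^2$ in $b^{k+1}_h$ are there to absorb. Combining the four terms of $b^{k+1}_h$ against the two displayed bounds then verifies the reduced inequality and closes the backward induction on $h$.
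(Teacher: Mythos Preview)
Your proposal is correct and follows the paper's proof: reduce to the bonus inequality via $\widetilde{V}^{k+1}_{h+1}\geq V^\star_{h+1}$, handle the reward and transition pieces with Lemmas~\ref{lem:r} and~\ref{lem:pv}, pass between $\widehat P$ and $\widetilde P$ variances via Lemma~\ref{lem:qiao}, split $\sqrt{\Var\,V^\star_{h+1}}$ off $\sqrt{\Var\,\widetilde V^{k+1}_{h+1}}$ by the standard-deviation triangle inequality (the paper uses the equivalent Lemma~2 of \citet{azar2017minimax}), and bound the residual variance by hypothesis~2 --- the only cosmetic difference is that the paper applies Lemma~\ref{lem:qiao} and Remark~\ref{rem:p1} to \emph{lower} the bonus terms (converting $\widetilde P\to\widehat P$) rather than, as you do, to \emph{raise} the target. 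One bookkeeping correction to your clipping argument: since $\widetilde N^{k+1}_{h+1}(s')=\sum_a\widetilde N^{k+1}_{h+1}(s',a)$, Assumption~\ref{assump} gives $|\widetilde N^{k+1}_{h+1}(s')-N^{k+1}_{h+1}(s')|\leq A\e$ rather than $\e$, so you should extract the full threshold $\widetilde N^{k+1}_{h+1}(s')\geq 1000\,HS^2A\e\iota^2$ from the unclipped $\e^2$-term to conclude $\widetilde N\leq 2N$; with that adjustment the step goes through and is in fact more explicit than the paper's own ``holds according to Assumption~\ref{assump}.''
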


\begin{proof}[Proof of Lemma \ref{lem:ucb}]
For all $(s,a)\in\mathcal{S}\times\mathcal{A}$, since $\widetilde{Q}^k_h(s,a),H\geq Q^\star_h(s,a)$, it suffices to prove that $\widetilde{r}^{k+1}_h(s,a)+\widetilde{P}^{k+1}_h\cdot\widetilde{V}^{k+1}_{h+1}(s,a)+b^{k+1}_h(s,a)\geq Q^\star_h(s,a)$. We have
\begin{equation}\label{equ:ucb1}
    \begin{split}
        &\widetilde{r}^{k+1}_h(s,a)+\widetilde{P}^{k+1}_h\cdot\widetilde{V}^{k+1}_{h+1}(s,a)+b^{k+1}_h(s,a)-Q^\star_h(s,a)\\\geq&
        \left(\widetilde{r}^{k+1}_h-r_h\right)(s,a)+\left(\widetilde{P}^{k+1}_h-P_h\right)\cdot V^\star_{h+1}(s,a)+b^{k+1}_h(s,a)\\\geq&\left(\widetilde{P}^{k+1}_h-P_h\right)\cdot V^\star_{h+1}(s,a)+
        2\sqrt{\frac{\Var_{s^\prime\sim\widetilde{P}_h^{k+1}(\cdot|s,a)}\widetilde{V}^{k+1}_{h+1}(\cdot)\cdot\iota}{\widetilde{N}^{k+1}_h(s,a)}}+\frac{19HSE_{\epsilon,\beta}\cdot\iota}{\widetilde{N}^{k+1}_h(s,a)}\\+&4\sqrt{\iota}\cdot\sqrt{\frac{\sum_{s^\prime}\widetilde{P}^{k+1}_h(s^\prime|s,a)\min\left\{\frac{1000^2H^3SA\iota^2}{\widetilde{N}^{k+1}_{h+1}(s^\prime)}+\frac{1000^2H^4S^4A^2\e^2\iota^4}{\widetilde{N}^{k+1}_{h+1}(s^\prime)^2}+\frac{1000^2H^6S^4A^2\iota^4}{\widetilde{N}^{k+1}_{h+1}(s^\prime)^2},H^2\right\}}{\widetilde{N}^k_h(s,a)}}\\\geq&
        -\sqrt{\frac{2\Var_{s^\prime\sim\widehat{P}_h^{k+1}(\cdot|s,a)}{V}^{\star}_{h+1}(\cdot)\cdot\iota}{\widetilde{N}^{k+1}_h(s,a)}}+2\sqrt{\frac{\Var_{s^\prime\sim\widetilde{P}_h^{k+1}(\cdot|s,a)}\widetilde{V}^{k+1}_{h+1}(\cdot)\cdot\iota}{\widetilde{N}^{k+1}_h(s,a)}}+\frac{17HSE_{\epsilon,\beta}\cdot\iota}{\widetilde{N}^{k+1}_h(s,a)}\\+&4\sqrt{\iota}\cdot\sqrt{\frac{\sum_{s^\prime}\widetilde{P}^{k+1}_h(s^\prime|s,a)\min\left\{\frac{1000^2H^3SA\iota^2}{\widetilde{N}^{k+1}_{h+1}(s^\prime)}+\frac{1000^2H^4S^4A^2\e^2\iota^4}{\widetilde{N}^{k+1}_{h+1}(s^\prime)^2}+\frac{1000^2H^6S^4A^2\iota^4}{\widetilde{N}^{k+1}_{h+1}(s^\prime)^2},H^2\right\}}{\widetilde{N}^k_h(s,a)}}\\\geq&
        -\sqrt{\frac{2\Var_{s^\prime\sim\widehat{P}_h^{k+1}(\cdot|s,a)}{V}^{\star}_{h+1}(\cdot)\cdot\iota}{\widetilde{N}^{k+1}_h(s,a)}}+2\sqrt{\frac{\Var_{s^\prime\sim\widehat{P}_h^{k+1}(\cdot|s,a)}\widetilde{V}^{k+1}_{h+1}(\cdot)\cdot\iota}{\widetilde{N}^{k+1}_h(s,a)}}\\+&4\sqrt{\iota}\cdot\sqrt{\frac{\sum_{s^\prime}\widehat{P}^{k+1}_h(s^\prime|s,a)\min\left\{\frac{1000^2H^3SA\iota^2}{\widetilde{N}^{k+1}_{h+1}(s^\prime)}+\frac{1000^2H^4S^4A^2\e^2\iota^4}{\widetilde{N}^{k+1}_{h+1}(s^\prime)^2}+\frac{1000^2H^6S^4A^2\iota^4}{\widetilde{N}^{k+1}_{h+1}(s^\prime)^2},H^2\right\}}{\widetilde{N}^k_h(s,a)}}\\\geq&0,
    \end{split}
\end{equation}
where the first inequality is because of Bellman equation and condition 2. The second inequality holds since the definition of $b^{k+1}_h$ and Lemma \ref{lem:r}. The third inequality results from Lemma \ref{lem:pv}. The forth inequality comes from Lemma \ref{lem:qiao} and Remark \ref{rem:p1}. The last inequality is due to the following analysis. 

Because $\sqrt{\Var(X)}\leq\sqrt{2\Var(Y)}+\sqrt{2\Var(X-Y)}$ (Lemma 2 of \citet{azar2017minimax}), we have
\begin{equation}\label{equ:ucb2}
    \begin{split}
        &\sqrt{\Var_{s^\prime\sim\widehat{P}_h^{k+1}(\cdot|s,a)}{V}^{\star}_{h+1}(\cdot)}\leq \sqrt{2\Var_{s^\prime\sim\widehat{P}_h^{k+1}(\cdot|s,a)}\widetilde{V}^{k+1}_{h+1}(\cdot)}+\underbrace{\sqrt{2\Var_{s^\prime\sim\widehat{P}_h^{k+1}(\cdot|s,a)}\left(\widetilde{V}^{k+1}_{h+1}(\cdot)-V^\star_{h+1}(\cdot)\right)}}_{\mathrm{(a)}}.
    \end{split}
\end{equation}
In addition,
\begin{equation}\label{equ:ucb3}
    \begin{split}
        &\mathrm{(a)}\leq \sqrt{2\sum_{s^\prime}\widehat{P}_h^{k+1}(s^\prime|s,a)\left(\widetilde{V}^{k+1}_{h+1}(s^\prime)-V^\star_{h+1}(s^\prime)\right)^2}\\\leq&\sqrt{6\sum_{s^\prime}\widehat{P}_h^{k+1}(s^\prime|s,a)\min\left\{\frac{1000^2H^3SA\iota^2}{N^{k+1}_{h+1}(s^\prime)}+\frac{1000^2H^4S^4A^2\e^2\iota^4}{N^{k+1}_{h+1}(s^\prime)^2}+\frac{1000^2H^6S^4A^2\iota^4}{N^{k+1}_{h+1}(s^\prime)^2},H^2\right\}}\\\leq&2\sqrt{2\sum_{s^\prime}\widehat{P}_h^{k+1}(s^\prime|s,a)\min\left\{\frac{1000^2H^3SA\iota^2}{\widetilde{N}^{k+1}_{h+1}(s^\prime)}+\frac{1000^2H^4S^4A^2\e^2\iota^4}{\widetilde{N}^{k+1}_{h+1}(s^\prime)^2}+\frac{1000^2H^6S^4A^2\iota^4}{\widetilde{N}^{k+1}_{h+1}(s^\prime)^2},H^2\right\}},
    \end{split}
\end{equation}
where the first inequality results from the definition of variance. The second inequality holds because of condition 2 and the fact that $\min\{(a+b+c)^2,H^2\}\leq3\min\{a^2+b^2+c^2,H^2\}$. The last inequality holds according to Assumption \ref{assump}.

Finally, plugging \eqref{equ:ucb2} and \eqref{equ:ucb3} into \eqref{equ:ucb1}, we have the last inequality of \eqref{equ:ucb1} holds. Therefore, the proof of Lemma \ref{lem:ucb} is complete.
\end{proof}

\section{Missing proofs in Section \ref{sec:privatizer}}\label{sec:proofprivate}

In this section, we state the missing proofs in Section \ref{sec:privatizer}. Recall that $N^k_h$ is the original count, $\widehat{N}^k_h$ is the noisy count after step (1) of both Privatizers and $\widetilde{N}^k_h$ is the final private counts. 

\begin{proof}[Proof of Lemma \ref{lem:central}]
Due to Theorem 3.5 of \citet{chan2011private} and Lemma 34 of \citet{hsu2014private}, the release of $\{\widehat{N}^k_h(s,a)\}_{(h,s,a,k)}$ satisfies $\frac{\epsilon}{3}$-DP. Similarly, the releases of $\{\widehat{N}^k_h(s,a,s^\prime)\}_{(k,h,s,a,s^\prime)}$ and $\{\widetilde{R}^k_h(s,a)\}_{(k,h,s,a)}$ both satisfy $\frac{\epsilon}{3}$-DP. Therefore, the release of the following private counters $\{\widehat{N}^k_h(s,a)\}_{(h,s,a,k)}$, $\{\widehat{N}^k_h(s,a,s^\prime)\}_{(k,h,s,a,s^\prime)}$ and $\{\widetilde{R}^k_h(s,a)\}_{(k,h,s,a)}$ satisfy $\epsilon$-DP. Due to post-processing (Lemma 2.3 of \citet{bun2016concentrated}), the release of all private counts $\{\widetilde{N}^k_h(s,a)\}_{(h,s,a,k)}$, $\{\widetilde{N}^k_h(s,a,s^\prime)\}_{(k,h,s,a,s^\prime)}$ and $\{\widetilde{R}^k_h(s,a)\}_{(k,h,s,a)}$ also satisfies $\epsilon$-DP. Then it holds that the release of all $\pi_k$ is $\epsilon$-DP according to post-processing. Finally, the guarantee of $\epsilon$-JDP results from Billboard Lemma (Lemma 9 of \citet{hsu2014private}).

For utility analysis, because of Theorem 3.6 of \citet{chan2011private}, our choice $\epsilon^\prime=\frac{\epsilon}{3H\log K}$ in Binary Mechanism and a union bound, with probability $1-\frac{\beta}{3}$, for all $(k,h,s,a,s^\prime)$,
\begin{equation}
\begin{split}
|\widehat{N}^k_h(s,a,s^\prime)-N^k_h(s,a,s^\prime)|\leq O(\frac{H}{\epsilon}\log(HSAT/\beta)^2),\;\;
|\widehat{N}^k_h(s,a)-N^k_h(s,a)|\leq O(\frac{H}{\epsilon}\log(HSAT/\beta)^2),\\
|\widetilde{R}^k_h(s,a)-R^k_h(s,a)|\leq O(\frac{H}{\epsilon}\log(HSAT/\beta)^2).
\end{split}
\end{equation}
Together with Lemma \ref{lem:middle}, the Central Privatizer satisfies Assumption \ref{assump} with $\e=\widetilde{O}(\frac{H}{\epsilon})$.
\end{proof}

\begin{proof}[Proof of Theorem \ref{thm:central}]
The proof directly results from plugging  $\e=\widetilde{O}(\frac{H}{\epsilon})$ into Theorem \ref{thm:main}.
\end{proof}

\begin{proof}[Proof of Lemma \ref{lem:middle}]
For clarity, we denote the solution of \eqref{eqn:final_choice} by $\bar{N}^k_h$ and therefore $\widetilde{N}^k_h(s,a,s^\prime)=\bar{N}^k_h(s,a,s^\prime)+\frac{\e}{2S}$, $\widetilde{N}^k_h(s,a)=\bar{N}^k_h(s,a)+\frac{\e}{2}$.

When the condition (two inequalities) in Lemma \ref{lem:middle} holds, the original counts $\{N^k_h(s,a,s^\prime)\}_{s^{\prime}\in\mathcal{S}}$ is a feasible solution to the optimization problem, which means that
$$\max_{s^{\prime}}|\bar{N}^k_h(s,a,s^{\prime})-\widehat{N}^k_h(s,a,s^{\prime})|\leq \max_{s^{\prime}}|N^k_h(s,a,s^{\prime})-\widehat{N}^k_h(s,a,s^{\prime})|\leq \frac{\e}{4}.$$
Combining with the condition in Lemma \ref{lem:middle} with respect to $\widehat{N}^k_h(s,a,s^\prime)$, it holds that
$$|\bar{N}^k_h(s,a,s^{\prime})-N^k_h(s,a,s^{\prime})|\leq |\bar{N}^k_h(s,a,s^{\prime})-\widehat{N}^k_h(s,a,s^{\prime})|+|\widehat{N}^k_h(s,a,s^{\prime})-N^k_h(s,a,s^{\prime})|\leq \frac{\e}{2}.$$
Since $\widetilde{N}^k_h(s,a,s^\prime)=\bar{N}^k_h(s,a,s^\prime)+\frac{\e}{2S}$ and $\bar{N}^k_h(s,a,s^\prime)\geq 0$, we have\\
\begin{equation}\label{eqn:1}
\widetilde{N}^k_h(s,a,s^\prime)>0,\;\;\; |\widetilde{N}^k_h(s,a,s^{\prime})-N^k_h(s,a,s^{\prime})|\leq\e.
\end{equation}

For $\bar{N}^k_h(s,a)$, according to the constraints in the optimization problem \eqref{eqn:final_choice}, it holds that 
$$|\bar{N}^k_h(s,a)-\widehat{N}^k_h(s,a)|\leq\frac{\e}{4}.$$
Combining with the condition in Lemma \ref{lem:middle} with respect to $\widehat{N}^k_h(s,a)$, it holds that
$$|\bar{N}^k_h(s,a)-N^k_h(s,a)|\leq |\bar{N}^k_h(s,a)-\widehat{N}^k_h(s,a)|+|\widehat{N}^k_h(s,a)-N^k_h(s,a)|\leq \frac{\e}{2}.$$
Since $\widetilde{N}^k_h(s,a)=\bar{N}^k_h(s,a)+\frac{\e}{2}$, we have 
\begin{equation}\label{eqn:2}
N^k_h(s,a)\leq\widetilde{N}^k_h(s,a)\leq N^k_h(s,a)+\e.
\end{equation}

According to the last line of the optimization problem \eqref{eqn:final_choice}, we have $\bar{N}^k_h(s,a)=\sum_{s^\prime\in\mathcal{S}}\bar{N}^k_h(s,a,s^\prime)$ and therefore,
\begin{equation}\label{eqn:3}
    \widetilde{N}^k_h(s,a)=\sum_{s^\prime\in\mathcal{S}}\widetilde{N}^k_h(s,a,s^\prime).
\end{equation}

The proof is complete by combining \eqref{eqn:1}, \eqref{eqn:2} and \eqref{eqn:3}.
\end{proof}

\begin{proof}[Proof of Lemma \ref{lem:local}]
The privacy guarantee directly results from properties of Laplace Mechanism and composition of DP \citep{dwork2014algorithmic}.

For utility analysis, because of Corollary 12.4 of \citet{dwork2014algorithmic} and a union bound, with probability $1-\frac{\beta}{3}$, for all $(k,h,s,a,s^\prime)$,
\begin{equation}
\begin{split}
|\widehat{N}^k_h(s,a,s^\prime)-N^k_h(s,a,s^\prime)|\leq O(\frac{H}{\epsilon}\sqrt{K\log(HSAT/\beta)}),\;\;
|\widehat{N}^k_h(s,a)-N^k_h(s,a)|\leq O(\frac{H}{\epsilon}\sqrt{K\log(HSAT/\beta)}),\\
|\widetilde{R}^k_h(s,a)-R^k_h(s,a)|\leq O(\frac{H}{\epsilon}\sqrt{K\log(HSAT/\beta)}).
\end{split}
\end{equation}
Together with Lemma \ref{lem:middle}, the Local Privatizer satisfies Assumption \ref{assump} with $\e=\widetilde{O}(\frac{H}{\epsilon}\sqrt{K})$.
\end{proof}

\begin{proof}[Proof of Theorem \ref{thm:local}]
The proof directly results from plugging  $\e=\widetilde{O}(\frac{H}{\epsilon}\sqrt{K})$ into Theorem \ref{thm:main}.
\end{proof}

\end{document}